\documentclass{article}
\usepackage{fullpage}

\usepackage[title]{appendix}

\usepackage{algpseudocode}

\usepackage{diagbox}
\usepackage[utf8]{inputenc} 
\usepackage[T1]{fontenc}  
\usepackage{url}            

\usepackage{multirow}
\usepackage{booktabs}      
\usepackage{amsfonts,amssymb,amsthm}       
\usepackage{nicefrac}       
\usepackage{microtype}      
\usepackage{hhline}
\usepackage{graphicx}
\usepackage{multirow}
\usepackage{tabularray}
\usepackage[authoryear]{natbib}

\usepackage{indentfirst}
\usepackage{mathtools}
\usepackage{amsmath}
\usepackage{amssymb}
\usepackage{mathtools}
\usepackage{appendix}
\usepackage{subfigure}

\usepackage[utf8]{inputenc} 
\usepackage[T1]{fontenc}    
\usepackage{hyperref}       
\usepackage{url}            
\usepackage{booktabs}       
\usepackage{amsfonts}       
\usepackage{nicefrac}      
\usepackage{microtype}      
\usepackage{xcolor}         

\usepackage{afterpage}

\usepackage{amsfonts,enumitem,amsmath,amsthm}       
\usepackage{nicefrac}       
\usepackage{microtype}     
\usepackage{xcolor,appendix}      
\usepackage{float}         

\usepackage{graphicx}
\usepackage{subcaption}
\usepackage{amssymb} 
\usepackage{algorithm}

\newtheorem{proposition}{Proposition}

\usepackage{algpseudocode}

\newtheorem{theorem}{Theorem}
\newtheorem{remark}{Remark}
\newtheorem{lemma}{Lemma}

\newtheorem{assumption}{Assumption}
\newtheorem{corollary}{Corollary}
\theoremstyle{plain}
\theoremstyle{definition}

\usepackage{amsmath,amsfonts,bm}

\def\eqref#1{equation~\ref{#1}}

\def\1{\bm{1}}
\def\0{\bm{0}}

\DeclareMathAlphabet{\mathsfit}{\encodingdefault}{\sfdefault}{m}{sl}
\SetMathAlphabet{\mathsfit}{bold}{\encodingdefault}{\sfdefault}{bx}{n}

\usepackage{inputenc}
\usepackage{fontenc}
\usepackage{indentfirst}
\usepackage{cite}
\usepackage{hyperref}
\usepackage{amssymb}
\usepackage{float}
\usepackage{amsmath}
\usepackage{graphicx}
\usepackage{amsthm}

\title{\textbf{Mixture-Greedy for Online Generative Model Selection:\\ Is\hspace{0.3em}UCB\hspace{0.3em}Necessary\hspace{0.3em}in\hspace{0.3em}Diversity-Aware\hspace{0.3em}Multi-Armed\hspace{0.3em}Bandits?}}

\date{}

\renewcommand{\cite}{\citep}

\author{
Bahar Dibaei Nia\thanks{Department of Computer Science and Engineering, 
Chinese University of Hong Kong. 
\texttt{\{bahar, farnia\}@cse.cuhk.edu.hk}}
\and
Farzan Farnia\footnotemark[1]
}

\begin{document}
\maketitle

\begin{abstract}
    Efficient selection among multiple generative models is increasingly important in modern generative AI, where sampling from suboptimal models is costly. This problem can be viewed as a \emph{multi-armed bandit (MAB)} task. Under diversity-aware evaluation scores, a non-degenerate mixture of generators can outperform any individual model, distinguishing this MAB setting from classical best-arm identification. Prior approaches incorporate an Upper Confidence Bound (UCB) exploration bonus into the mixture objective. However, across multiple datasets and evaluation metrics, we observe that the UCB term consistently slows convergence and reduces sample efficiency. In contrast, a simple \emph{Mixture-Greedy} strategy without explicit UCB-type optimism converges faster and achieves even better performance, particularly for widely used metrics such as FID and Vendi where tight confidence bounds are difficult to construct. We provide theoretical insight explaining this behavior: under structural conditions, diversity-aware objectives induce \emph{implicit exploration} by favoring interior mixtures, leading to sampling of all arms and sublinear regret guarantees for diversity-based objectives. These results suggest that in diversity-aware multi-armed bandits, e.g., for generative model selection, exploration can arise intrinsically from the objective's geometry. 
The project code is available at \url{https://github.com/bhrdbn/Mixture-Greedy}.
\end{abstract}

\section{Introduction}
\begin{figure*}[t]
        \centering
        \includegraphics[width=\linewidth]{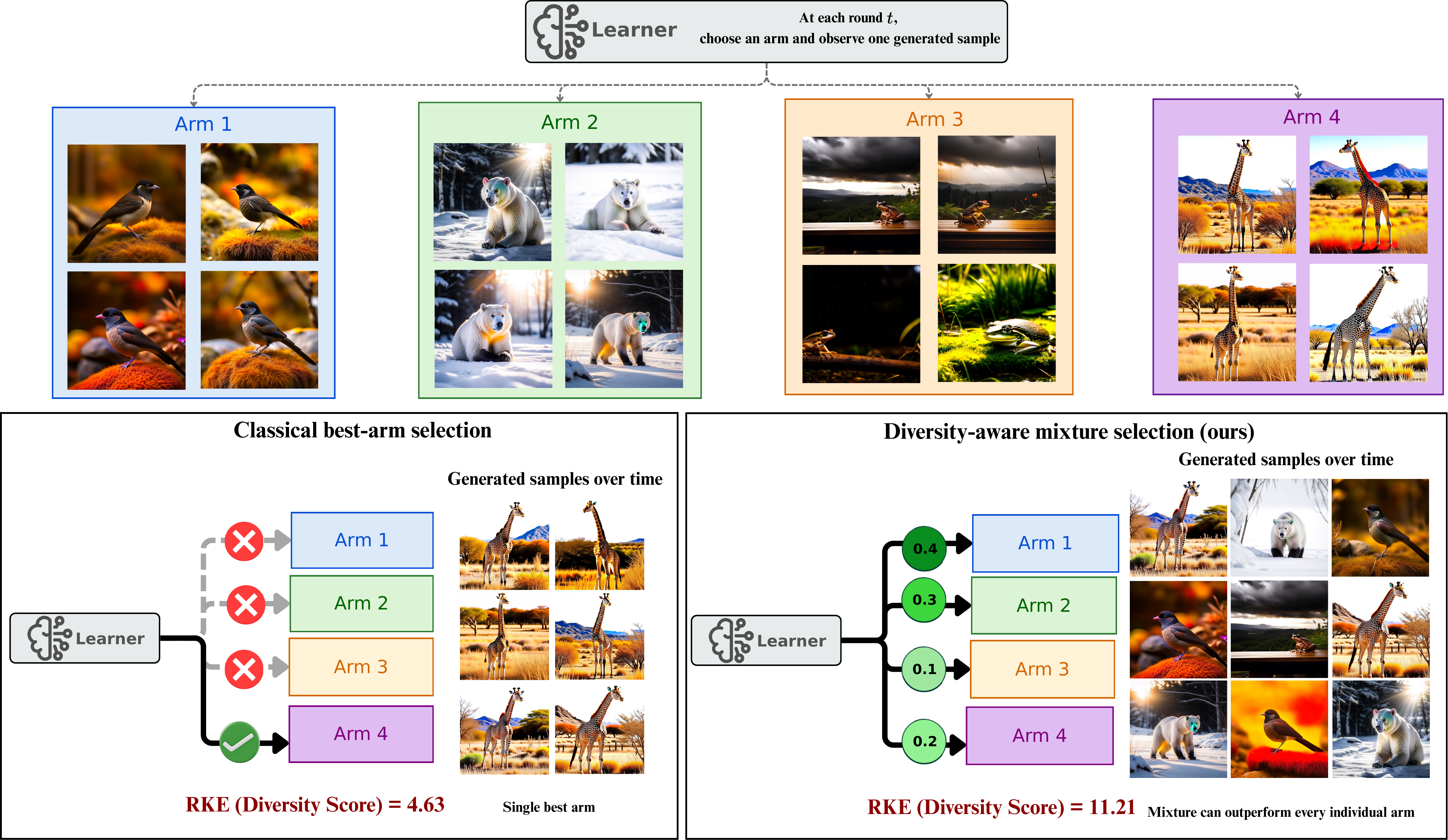}
        \label{overview}
    \caption{
Classical multi-armed bandits seek the best single arm with the highest expected reward. However, for diversity-aware objectives, the score can be strictly higher for a non-degenerate mixture of the arms. For example, in the generative model bandit setting,  the optimal mixture solution is usually an interior point of the probability simplex, assigning non-zero probability to multiple generators (arms). We demonstrate that this geometric property can naturally encourage exploration and study whether explicit UCB exploration is necessary in diversity-aware multi-armed bandit settings.}
\end{figure*}

\begin{figure*}[t]
        \centering
        \includegraphics[width=\linewidth]{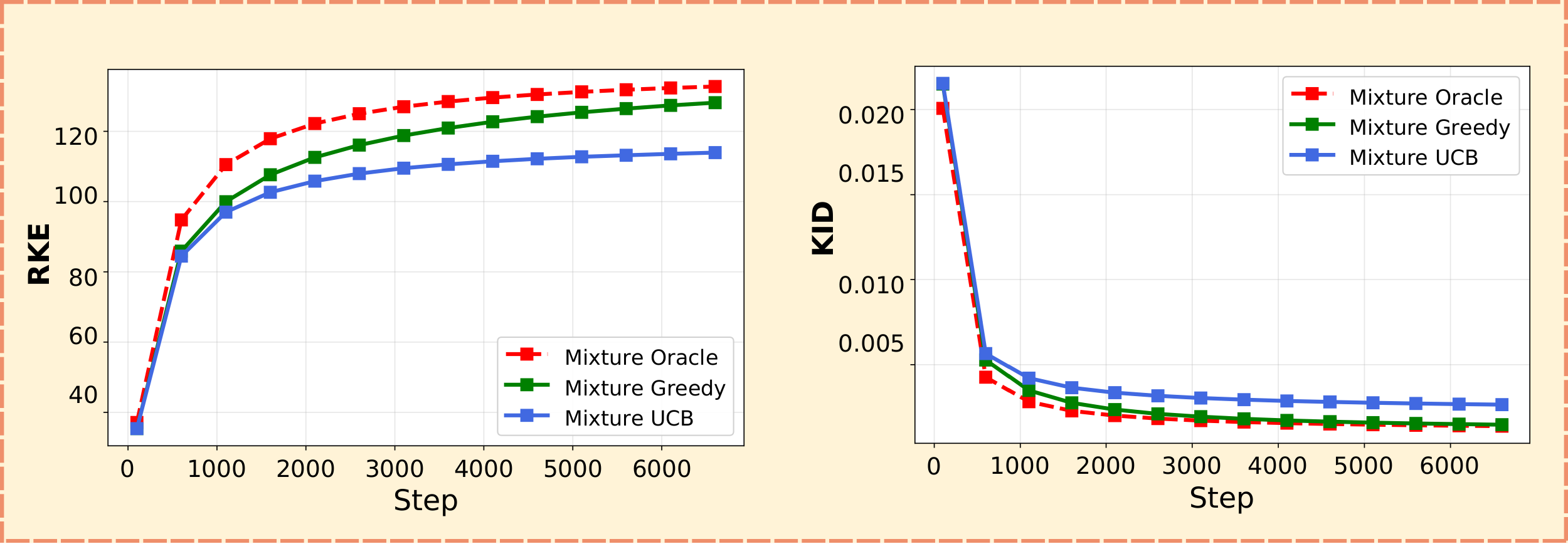}
        \label{FID}
    \caption{Comparison of mixture selection strategies over five ImageNet pretrained generative models from \citet{stein2023exposing}'s repository in terms of RKE$\uparrow$ (left) and KID$\downarrow$ (right) of generated samples from step $1$ to $t$ (plots' x-axis). Mixture-Greedy performed better than Mixture-UCB \citep{rezaei2025more} (with the UCB exploration term) in both the experiments with RKE and KID objectives. Mixture-Oracle always generates samples from the population-optimal mixture of models.}
     \label{fig:vendidogs}
\end{figure*}

The past decade has witnessed remarkable advances in generative modeling, leading to a wide availability of high-quality pretrained generators across diverse architectures and training paradigms. Early approaches such as variational autoencoders (VAEs) \citep{Kingma2013VAE} and generative adversarial networks (GANs) \citep{Goodfellow2014GAN} established scalable frameworks for learning complex data distributions. Subsequent architectural and training advances \citep{karras2019style,brock2018large,vahdat2020nvae}, together with the emergence of diffusion-based generative models \citep{Ho2020DDPM,song2021denoising,Rombach2022LDM}, have substantially improved both sample fidelity and diversity. As a result, practitioners often have access to multiple well-trained generative models that differ in architecture, training data, and inference cost.

In such settings, a fundamental practical question arises: how can one select or combine these generators in a sample-efficient manner? Generating samples from suboptimal models can be computationally expensive, and evaluation budgets are typically limited. Suppose we are given $m$ candidate generators, where the $i$th generator $\mathcal{G}_i$ induces a distribution $P_{\mathcal{G}_i}$. At each round $t = 1, 2, \dots, T$, we select a generator index $I_t \in [m]$, observe a sample $X_t \sim P_{\mathcal{G}_{I_t}}$, and update an estimate of a target evaluation score. The objective is to allocate samples so as to approach the best achievable evaluation performance while minimizing the number of samples drawn from suboptimal generators.

A natural formalization of this task is as a stochastic multi-armed bandit problem \citep{russo2018tutorial,slivkins2019introduction,lattimore2020bandit}. Standard bandit algorithms aim to balance exploration and exploitation in the online selection process. To do this, a well-established approach is the Upper Confidence Bound (UCB) principle \citep{Auer2002UCB}, which augments empirical estimates with optimism bonuses to encourage exploration of uncertain arms. In the context of generative model selection, however, the evaluation criteria are typically non-linear functionals of the underlying distribution rather than simple expected rewards. For example, the Fr\'echet Inception Distance (FID) \citep{heusel2017ttur} compares mean and covariance statistics of two distributions, and other widely used metrics such as Kernel Inception Distance (KID) \citep{binkowski2018demystifying} and precision-recall based measures \citep{sajjadi2018assessing,kynkaanniemi2019improvedpr} also depend on higher-order distributional properties. To address this, \citet{hu2025banditgen} proposed a bandit framework tailored to generative model evaluation and derived UCB-type confidence bounds for the FID metric, enabling online best-arm identification despite the non-linearity of the objective.

A notable finding in generative model selection is the recognition that the optimal solution need not be a single generator, i.e., the best individual arm in the bandit formulation considered by \citet{hu2025banditgen}. Rather, as shown by \citet{rezaei2025more}, under several diversity-aware evaluation metrics, a non-degenerate mixture of generators can strictly outperform every individual model. Let $\alpha \in \Delta_m := \{\alpha \in \mathbb{R}^m : \alpha \succeq 0,\ \mathbf{1}^\top \alpha = 1\}$ denote mixture weights and define the mixture distribution $
P_\alpha = \sum_{i=1}^m \alpha_i P_{\mathcal{G}_i}$. The mixture-model selection problem can then be formulated as
\begin{align}
\alpha^\star 
\; \in\; \underset{\alpha \in \Delta_m}{\arg\!\max}\;\; 
\mathrm{Score}\bigl(P_\alpha\bigr)
\label{eq:intro_mixture_problem}
\end{align}
Unlike classical best-arm identification, the optimizer of \eqref{eq:intro_mixture_problem} may lie in the interior of the simplex, implying that combining generators can strictly improve performance. To address this, \citet{rezaei2025more} proposed \emph{Mixture-UCB} algorithms that operate over the simplex and incorporate \emph{Upper~Confidence~Bound~(UCB)} bonuses. For quadratic kernel-based objectives such as KID \citep{binkowski2018demystifying} and RKE \citep{jalali2023information}, they showed that uniform high-probability guarantees can be obtained by estimating a finite number of pairwise expectations, making UCB-style exploration analytically tractable in that regime. Extending such guarantees to non-quadratic metrics, including FID and entropy-based diversity scores such as the Vendi score \citep{friedman2023vendi}, remains significantly more challenging, as the resulting UCB bound would be loose due to the non-quadratic structure, leading to over-exploration and substantially slow convergence to the optimal solution.

In this work, we revisit a foundational design choice in bandit-based generative model selection: Is an explicit UCB exploration bonus actually necessary? In classical best-arm bandits with linear rewards, purely greedy strategies are known to be brittle: early estimation noise can cause the algorithm to over-commit to an apparently good arm, stop collecting data from others, and never recover. UCB-style optimism was introduced precisely to prevent this collapse by forcing continued exploration \citep{Auer2002UCB}. Surprisingly, we empirically find that this classical intuition does not necessarily carry over to diversity-aware \emph{mixture} selection. Across extensive experiments on multiple datasets and evaluation metrics, adding the UCB bonus consistently slows convergence and often reduces sample efficiency. In contrast, a simple \emph{Mixture-Greedy} strategy that repeatedly optimizes the empirical objective over the simplex $\Delta_m$ (without any explicit exploration bonus) converges faster and achieves comparable or superior performance, especially for widely used non-quadratic metrics such as FID and Vendi, where constructing UCB bounds is challenging.

Our explanation is that diversity-aware mixture objectives can induce \emph{implicit exploration}. The main distinction from best-arm identification is that the decision space here is not the discrete set of arms but the continuous simplex of mixtures. For many diversity-aware metrics, the optimum is attained by a non-degenerate mixture, and the objective itself tends to disfavor degenerate solutions that place nearly all mass on a single generator. As a result, optimizing the empirical objective can naturally produce interior mixture weights, which in turn ensures that \emph{every} generator continues to receive samples. In other words, the mixture geometry plus the diversity-aware objective can create an exploration effect that is internal to the optimization problem, rather than an external confidence bonus appended to it.

We formalize this phenomenon by providing theoretical results showing that, under explicit structural conditions, Mixture-Greedy stays uniformly away from the boundary of $\Delta_m$, which yields linear sampling of all arms and sublinear regret without UCB. Our analyses cover representative families of objectives that matter in practice and were difficult to handle within prior UCB-based mixture frameworks: entropy-based objectives related to log-Vendi, FID-type Fr\'echet objectives under suitable interiority margin conditions, and kernel-based quadratic objectives including inverse-RKE-type formulations. Taken together with the empirical evidence, these results highlight a structural gap between diversity-aware mixture selection and classical linear-reward bandits: in the former, exploration may arise from the objective and the mixture structure themselves, suggesting that bandit methods for generative model selection should be designed around objective-induced geometry rather than relying solely on generic optimism bonuses. Here is a summary of this work's main contributions:
\begin{itemize}[leftmargin=*]
\item Demonstrating empirically that removing the UCB exploration bonus accelerates convergence and improves sample efficiency in mixture-based generative model multi-armed bandits.

\item Studying the theoretical nature of exploration in diversity-aware generative model selection and characterizing when explicit confidence bonuses are unnecessary.

\item Developing sublinear regret guarantees for Mixture-Greedy under entropy-based, kernel-based, and FID-type objectives via implicit exploration induced by objective structure.
\end{itemize}

\vspace{-3mm}
\section{Related Works}
\textbf{Evaluation metrics: fidelity, diversity, and memorization/novelty.}
Generative models are routinely evaluated with embedding-based sample metrics, including Inception Score (IS) \citep{salimans2016improved}, Fr\'echet Inception Distance (FID) \citep{ heusel2017ttur} as well as precision--recall style decompositions that separate quality from coverage \citep{sajjadi2018assessing,kynkaanniemi2019improvedpr,naeem2020reliable}.
Beyond distribution-level scores, recent work emphasizes sample-level diagnostics for memorization and novelty, including authenticity-style auditing \citep{alaa2022faithful}, feature-likelihood based generalization measures \citep{jiralerspong2023fld}, and rarity score \citep{han2023rarity}.


Another line of work highlights that these metrics inherit biases from the embedding space.
\citet{kynkaanniemi2023role} show that FID is tightly coupled to ImageNet class geometry; \citet{stein2023exposing} demonstrate systematic mismatches between embedding-based scores and human evaluation.
In our numerical analysis, we use CLIP \citep{clip} and DINOv2 \citep{dinov2} backbone embeddings following the results and suggestions of these references.

\textbf{Online selection of generators, mixtures, and hyperparameters.}
Online allocation of sampling/evaluation budgets across pretrained generators can be cast as a stochastic bandit problem.
\citet{hu2025banditgen} derive UCB-style guarantees for online selection under FID-type objectives, and \citet{rezaei2025more} show that diversity-aware objectives may be optimized by \emph{interior} mixtures, proposing Mixture-UCB methods with guarantees for quadratic kernel objectives.
\citet{chen2024fast} view generative model hyperparameter search as adaptive resource allocation, proposing successive-Halving-style procedures guided by distributional tests.
Complementarily, \citet{hosseini2026efficient} study the robustness of offline bandit-based model evaluation, showing that small adversarial perturbations to high-dimensional reward models can significantly alter the resulting bandit behavior.
Our work focuses on a different design question: in diversity-aware \emph{mixture} selection, is explicit optimism necessary, or can the objective geometry enforce sampling of all arms?

\textbf{Greedy bandits: failures and implicit exploration under structure.}
In classical stochastic bandits with linear rewards, purely greedy policies can incur linear regret due to early mis-ordering, motivating explicit exploration principles such as UCB \citep{Auer2002UCB,lattimore2020bandit}.
However, positive results show that greedy (or exploration-light) algorithms can achieve sublinear regret when the problem structure induces implicit exploration, including many-armed regimes \citep{bayati2020unreasonable} and contextual/linear bandits under diversity or smoothing assumptions \citep{bastani2021mostly,kannan2018smoothed}. Furthermore, recent work highlights that UCB-style optimism can be inefficient for non-linear bandit objectives. For instance, \citep{rajaraman2024statisticalcomplexityoptimalalgorithms} prove that UCB is statistically suboptimal for non-linear ridge bandits. 
Our analysis establishes an analogous phenomenon for diversity-aware \emph{mixture} objectives in generative model selection.

\textbf{Kernel-based diversity objectives and mixture selection.}
Kernel methods have become a central tool for evaluating generative models because they compare distributions through the pairwise similarity geometry of generated samples. Classical examples include MMD-based two-sample comparison~\citep{gretton2012kernel} and the Kernel Inception Distance~\citep{binkowski2018demystifying,wang2023distributedKID}, while more recent entropy- and spectrum-based criteria such as RKE~\citep{jalali2023information}, Vendi~\citep{friedman2023vendi,ospanov2024towards}, KEN~\citep{pmlr-v235-zhang24bh,Zhang_2025_CVPR}, and conditional kernel entropy~\citep{jalali2025sparke,Ospanov_2025_ICCV,jalali2026conditional} use kernel eigenstructure to quantify diversity, novelty, and conditional diversity. Quality-weighted variants of Vendi score incorporate per-item utility to trade off quality and diversity within a unified similarity-based objective \citep{nguyen2024qualityweightedvendi}. Additionally, \cite{ospanov2025vendiconvergence} study statistical convergence of RKE/Vendi and proposes truncated variants with finite-sample convergence guarantees.

Beyond evaluation, kernel matrices have also been used to compare, align, and fuse learned representations~\citep{pmlr-v267-jalali25a,gong2025kernel,wu2025fusing,wu2026maximum_isit,wu2026koda}, suggesting that embeddings can be studied through their induced similarity geometry rather than only through downstream prediction accuracy. This perspective is particularly relevant for generative-model selection: recent work has shown that embedding-based scores can expose systematic differences between generators~\citep{stein2023exposing}, support online model evaluation and selection~\citep{hu2025banditgen,hu2025pak,lewandowski2025matchchoosemodel,jafari2026dak,jules2026nemos}, and, importantly, favor non-degenerate mixtures of generators over any single model~\citep{rezaei2025more}. Our work builds on this kernel-geometric viewpoint, but focuses on a different algorithmic consequence: when the evaluation objective itself rewards diverse mixtures, the resulting simplex geometry can naturally keep multiple generators active, thereby reducing the need for a UCB bonus.

\section{Preliminaries}
\subsection{Generative model evaluation}

A generative model $\mathcal{G}$ induces a probability distribution $P_{\mathcal{G}}$ over a sample space $\mathcal{X}$. 
We write $x \sim P_{\mathcal{G}}$ for a generated sample. In practice, evaluation is performed in a fixed embedding space. Throughout our analysis, we assume $x \in \mathbb{R}^d$ denotes the \emph{embedded representation} of a sample. Mathematically, given i.i.d.\ generated samples $x_1,\dots,x_n \sim P_{\mathcal{G}}$ and real samples $y_1,\dots,y_{n_{\mathrm{real}}} \sim P_{\mathrm{data}}$ in the same embedding space, an evaluation metric computes a scalar score $\mathrm{Score}(x_1,\dots,x_n)$ reflecting fidelity and/or diversity.

\textbf{Fr\'echet Distance (FD).}
Let $\widehat\mu_x, \widehat\Sigma_x$ denote the empirical mean and covariance matrix of $\{x_i\}_{i=1}^n$, and similarly $\widehat\mu_y, \widehat\Sigma_y$ for real data. The empirical Fr\'echet distance is
\begin{align}\label{eq:fd_def}
&\mathrm{FD}(\widehat{P}_G,\widehat{P}_{\mathrm{data}}) \,
=\,
\|\widehat\mu_x-\widehat\mu_y\|_2^2
+
\,\mathrm{Tr}\Bigl(\widehat\Sigma_x+\widehat\Sigma_y-2\bigl(\widehat\Sigma_x^{1/2}\widehat\Sigma_y\widehat\Sigma_x^{1/2}\bigr)^{1/2}\Bigr) 
\end{align}

\subsection{Kernel-based scores}

A kernel is a symmetric positive semidefinite function 
$k:\mathcal{X}\times\mathcal{X}\to\mathbb{R}$.
Equivalently, there exists a Hilbert space $\mathcal{H}$ and feature map $\phi:\mathcal{X}\to\mathcal{H}$ such that
\[
k(x,y) = \bigl\langle \phi(x), \phi(y) \bigr\rangle_{\mathcal{H}}.
\]
Throughout, we assume the kernel is normalized: $k(x,x)=1$ for every $x\in\mathcal{X}$.
Note that even if a kernel function $k$ is not normalized, we can replace $k$ with the normalized $\widetilde{k}(x,y)=k(x,y)/\sqrt{k(x,x)k(y,y)}$ that remains a valid kernel function. Given samples $x_1,\dots,x_n$, let $K\in\mathbb{R}^{n\times n}$ denote the kernel (Gram) matrix with entries defined as $K_{ij}=k(x_i,x_j)$ for every $1\le i,j\le n$.

\textbf{Kernel Distance (KD).}
KD is the squared maximum mean discrepancy (MMD) between $P$ and $Q$:
\begin{align}
\mathrm{KD}(P,Q)
=
\mathbb{E}\bigl[k(X,X')\bigr]
+
\mathbb{E}\bigl[k(Y,Y')\bigr]
-
2\mathbb{E}\bigl[k(X,Y)\bigr]
\label{eq:mmd_def}
\end{align}
where $X,X'\sim P$ and $Y,Y'\sim Q$ are independent. Empirical estimators replace expectations with sample averages.

\textbf{Von Neumann entropy and Vendi.}
Define the normalized Gram matrix $\rho = \tfrac{1}{n}K$, that is PSD and unit-trace given a normalized kerk function.
Let $\lambda_1,\dots,\lambda_n$ be its eigenvalues, which form a probability model as they are non-negative and add to $1$. The von Neumann entropy (VNE) of this density matrix is defined as
\[
\mathrm{VNE}\bigl(\rho\bigr)
=
\sum_{i=1}^n \lambda_i \log \frac{1}{\lambda_i}
\]
The Vendi score \citep{friedman2023vendi} is defined as the exponential of the VNE of $\tfrac{1}{n}K$
\begin{align}
\mathrm{Vendi}(x_1,\dots,x_n)
=
\exp\Bigl(\mathrm{VNE}\bigl(\tfrac{1}{n}K\bigr)\Bigr).
\label{eq:vendi_def}
\end{align}

\textbf{RKE and inverse-RKE.}
For a distribution $P$, \cite{jalali2023information} define the RKE mode-count as follows:
\[
\mathrm{RKE}(P)
=
\frac{1}{\mathbb{E}_{X,X'\stackrel{\text{iid}}{\sim} P}\bigl[k^2(X,X')\bigr]},
\]
which gives the Inverse-RKE definition as $\mathrm{InvRKE}(P)
=
\mathbb{E}_{X,X'\sim P}[k(X,X')^2]$. For the empirical distribution $\widehat P_n$ supported on $\{x_i\}_{i=1}^n$, the above definitions reduce to $
\mathrm{InvRKE}(\widehat P_n)
=
\frac{1}{n^2}\|{K}\|_F^2$, and
$\mathrm{RKE}(\widehat P_n)
=
{n^2}/{\|{K}\|_F^2}$.

\subsection{Multi-armed bandits for online generative model selection}

We consider $m$ pretrained generators $\mathcal{G}_1,\dots,\mathcal{G}_m$, with corresponding distributions $P_{\mathcal{G}_1},\dots,P_{\mathcal{G}_m}$. 
At each round $t=1,\dots,T$, the learner selects $I_t\in[m]$ and observes $X_t\sim P_{\mathcal{G}_{I_t}}$. 
The objective is to maximize an evaluation score computed from the accumulated generated samples. This task can be naturally viewed as a stochastic multi-armed bandit \citep{slivkins2019introduction,lattimore2020bandit,hu2025banditgen}, where the online learner aims to identify the best model by successive queries to the models.

Beyond selecting a single generator, we consider identifying a mixture of the models. Let $\Delta_m := \{\alpha\in\mathbb{R}^m:\alpha_i\ge0,\ \sum_{i=1}^m \alpha_i=1\}$ be the probability simplex of weights assigned to the $m$ models,
and define the mixture distribution
$ P_\alpha = \sum_{i=1}^m \alpha_i P_{\mathcal{G}_i}$.
The optimization problem is therefore \eqref{eq:intro_mixture_problem},
where standard choices for $\mathrm{Score}$ can be negative-FD, negative-KD, Vendi, RKE, or a composite score combining multiple scores, especially combining the pure-diversity scores with quality precision or density scores.

To address the task, \citet{rezaei2025more} propose the Mixture-UCB algorithm that adapts the established upper-confidence-bound (UCB) approach to the mixture setting. Note that the proposed Mixture-UCB in \citep{rezaei2025more} is designed exclusively for quadratic evaluation scores, including KD, RKE, and their linear combinations with Precision and Density fidelity scores. Furthermore, another natural algorithm that we will analyze in our work is \emph{Mixture-Greedy}, which assigns no UCB bonus term and at each round optimizes the empirical objective over $\Delta_m$ and samples $I_t$ from the resulting mixture $\alpha_t$. We highlight that the Mixture-Greedy approach is directly applicable to all the evaluation scores discussed in this section.

\section{Methodology: Mixture-Greedy \& score-based simplex program}
\label{sec:method_mg}

 We consider $m$ pretrained generators $\mathcal{G}_1,\dots,\mathcal{G}_m$ with induced distributions
$P_{\mathcal{G}_i}$ over embedded samples in $\mathbb{R}^d$.
At round $t$, Mixture-Greedy forms an empirical loss $\widehat{\mathcal{L}}_{t-1}(\alpha)$ over the simplex
$\Delta_m:=\{\alpha\in\mathbb{R}^m:\alpha\succeq 0,\ \mathbf{1}^\top\alpha=1\}$ and selects
\begin{equation}
\alpha_t \in \arg\min_{\alpha\in\Delta_m}\ \widehat{\mathcal{L}}_{t-1}(\alpha).
\label{eq:mg_update_method}
\end{equation}
It then samples an arm $I_t\sim \alpha_t$ and draws $X_t\sim P_{\mathcal{G}_{I_t}}$.

We focus on three objective families used throughout the paper: (i) Fr\'echet distance (FD) via Gaussian moment matching,
(ii) negative log-Vendi (equivalently, negative von Neumann entropy), and (iii) convex quadratic scores (KD/MMD, inverse-RKE),
possibly combined with a linear fidelity term (Precision/Density-type).

\textbf{(i) FD via Gaussian moment matching.}
Fix the empirical real-data moments $(\widehat\mu_{\rm data},\widehat\Sigma_{\rm data})$ in the embedding space.
Let $\widehat\mu_i(t)\in\mathbb{R}^d$ and $\widehat S_i(t)\in\mathbb{R}^{d\times d}$ denote, respectively, the empirical mean and
uncentered second moment of samples from arm $i$ up to time $t$:
$\widehat S_i(t)=\frac{1}{n_i(t)}\sum_{r=1}^{n_i(t)}x_{i,r}x_{i,r}^\top\succeq 0$.
For $\alpha\in\Delta_m$, define the mixture mean and covariance via the exact mixture identities

\begin{align}
    \widehat{\mu}_t(\alpha) &:= \sum_{i=1}^m \alpha_i \widehat{\mu}_i(t), \label{eq:fd_mu_mix_method} \\
    \widehat{\Sigma}_t(\alpha) &:= \sum_{i=1}^m \alpha_i \widehat{S}_i(t) - \widehat{\mu}_t(\alpha)\widehat{\mu}_t(\alpha)^\top.
    \label{eq:fd_Sigma_mix_method}
\end{align}
The FD-type loss is then the Gaussian Fr\'echet functional
\begin{align}
\widehat{\mathcal{L}}^{\rm FD}_t(\alpha)
:=\ \|\widehat\mu_t(\alpha)-\widehat\mu_{\rm data}\|_2^2
+\mathrm{Tr}\bigl(\widehat\Sigma_t(\alpha)\bigr)+\mathrm{Tr}\bigl(\widehat\Sigma_{\rm data}\bigr) \, -2\,\mathrm{Tr}\Bigl(\big(\widehat\Sigma_{\rm data}^{1/2}\widehat\Sigma_t(\alpha)\widehat\Sigma_{\rm data}^{1/2}\big)^{1/2}\Bigr).
\label{eq:fd_loss_method}
\end{align}

\textbf{(ii) Negative log-Vendi from pooled samples.}
Let $\{x_j\}_{j=1}^N$ be the pooled generated samples available at time $t$,
where each $x_j$ came from arm $I_j\in[m]$, and let $K\in\mathbb{R}^{N\times N}$ be the normalized kernel Gram matrix
$K_{rs}=k(x_r,x_s)$. For $\alpha\in\Delta_m$, define per-sample weights
\begin{equation}
q_j(\alpha):=\frac{\alpha_{I_j}}{n_{I_j}(t)},\qquad j\in[N],
\label{eq:q_def_method}
\end{equation}
so $q(\alpha)\succeq 0$ and $\sum_{j=1}^N q_j(\alpha)=1$.
Consider the following unit-trace PSD matrix
\begin{equation}
\rho_t(\alpha):=\mathrm{diag}(q(\alpha))^{1/2}\,K\,\mathrm{diag}(q(\alpha))^{1/2},
\label{eq:rho_def_method}
\end{equation}
where we note that $\mathrm{Tr}(\rho_t(\alpha))=1$ holds and we set the negative log-Vendi loss as
\begin{equation}
\widehat{\mathcal{L}}^{\rm Vendi}_t(\alpha):=\mathrm{Tr}\big(\rho_t(\alpha)\log \rho_t(\alpha)\big).
\label{eq:neglogvendi_method}
\end{equation}
Minimizing \eqref{eq:neglogvendi_method} is equivalent to maximizing $\mathrm{VNE}(\rho_t(\alpha))$ and hence the Vendi score of the generated samples.

\textbf{(iii) Convex quadratics and linear fidelity.}
Let $\widehat K(t)\in\mathbb{R}^{m\times m}$ be symmetric PSD and $\widehat b(t)\in\mathbb{R}^m$ estimate a linear functional
$b_i=\mathbb{E}[\psi_i(X)]$ with $\psi_i\in[0,1]$. We consider
\begin{equation}
\widehat{\mathcal{L}}^{\rm quad}_t(\alpha):=\alpha^\top \widehat K(t)\alpha+w\,\alpha^\top \widehat b(t),\qquad w\ge 0.
\label{eq:quad_loss_method}
\end{equation}
The following proposition shows that the above loss functions yield convex functions of the mixture weights, which can be optimized by convex optimization algorithms.
\begin{proposition}
\label{thm:mg_convex_programs}
For each fixed $t$, the update \eqref{eq:mg_update_method} is a convex optimization problem over $\Delta_m$ for:
(i) $\widehat{\mathcal{L}}^{\rm FD}_t(\alpha)$ in \eqref{eq:fd_loss_method},
(ii) $\widehat{\mathcal{L}}^{\rm Vendi}_t(\alpha)$ in \eqref{eq:neglogvendi_method},
and (iii) $\widehat{\mathcal{L}}^{\rm quad}_t(\alpha)$ in \eqref{eq:quad_loss_method} whenever $\widehat K(t)\succeq 0$.
\end{proposition}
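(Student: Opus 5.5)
The simplex $\Delta_m$ is convex and compact, so the update \eqref{eq:mg_update_method} is a convex program as soon as each loss is a convex function of $\alpha$ on $\Delta_m$. I would therefore verify convexity separately for the three families, each time writing the loss as a convex function composed with an affine (or suitably monotone) map of $\alpha$.

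\textbf{(iii) Quadratic case.} This is immediate. If $\widehat K(t)\succeq 0$, then $\alpha\mapsto\alpha^\top\widehat K(t)\alpha$ is convex, since its Hessian is $2\widehat K(t)\succeq 0$. The fidelity term $w\,\alpha^\top\widehat b(t)$ is linear, and a sum of a convex and a linear function is convex.

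\textbf{(ii) Vendi case.} I would avoid differentiating $\rho_t(\alpha)$ directly. Write $K=\Phi^\top\Phi$ with columns $\phi(x_j)$, and set $D=\mathrm{diag}(q(\alpha))$. Then $\rho_t(\alpha)=D^{1/2}\Phi^\top\Phi D^{1/2}$ has the same nonzero eigenvalues as
\[
C(\alpha):=\Phi D\Phi^\top=\sum_j q_j(\alpha)\,\phi(x_j)\phi(x_j)^\top .
\]
This matrix is affine in $\alpha$, because $q$ is linear in $\alpha$. Since $0\log 0=0$, the loss $\mathrm{Tr}(\rho\log\rho)$ depends only on the nonzero eigenvalues, so
\[
\widehat{\mathcal L}^{\rm Vendi}_t(\alpha)=\mathrm{Tr}\big(C(\alpha)\log C(\alpha)\big).
\]
The map $X\mapsto\mathrm{Tr}(X\log X)$ is convex on PSD matrices. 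This follows from operator convexity of $x\log x$, or equivalently from concavity of von Neumann entropy. Composing a convex function with an affine map preserves convexity, which gives (ii). The main technical point is to justify the spectral equivalence between $\rho_t(\alpha)$ and $C(\alpha)$ carefully, including rank-deficient $D$ when some $\alpha_i=0$. If the feature space is infinite-dimensional, I would instead work with a finite-dimensional span of the pooled features, which is enough here.

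\textbf{(i) FD case.} This is the step I expect to be the main obstacle. First, $\widehat\Sigma_t(\alpha)$ is concave in $\alpha$ in the Loewner order, since it is linear minus $\mu\mu^\top$ and $\mu\mapsto\mu\mu^\top$ is operator convex. Expanding the first two terms of the loss,
\[
\|\widehat\mu_t(\alpha)-\widehat\mu_{\rm data}\|^2+\mathrm{Tr}\big(\widehat\Sigma_t(\alpha)\big)=\sum_i\alpha_i\mathrm{Tr}\,\widehat S_i(t)-2\widehat\mu_t(\alpha)^\top\widehat\mu_{\rm data}+\|\widehat\mu_{\rm data}\|^2 ,
\]
where the quadratic terms in $\widehat\mu_t(\alpha)$ cancel, leaving an affine function of $\alpha$. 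It then remains to show that $\alpha\mapsto-\mathrm{Tr}\big((A\,\widehat\Sigma_t(\alpha)A)^{1/2}\big)$ is convex, where $A=\widehat\Sigma_{\rm data}^{1/2}$. The map $X\mapsto\mathrm{Tr}\big((AXA)^{1/2}\big)$ is concave on PSD matrices, as a concave spectral function composed with a linear map (equivalently, via operator concavity of the square root). It is also monotone nondecreasing in the Loewner order, since $X\preceq Y$ implies $AXA\preceq AYA$ and the square root is operator monotone. A concave, nondecreasing function of a Loewner-concave argument is concave, so the negated term is convex. Summing the affine part and this convex part gives convexity of $\widehat{\mathcal L}^{\rm FD}_t$, completing the proof.
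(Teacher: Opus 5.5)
Your proposal is correct and follows essentially the same route as the paper. The quadratic case is immediate. The Vendi case uses the affine covariance $C(\alpha)=\sum_j q_j(\alpha)\phi(x_j)\phi(x_j)^\top$, which shares the nonzero spectrum of $\rho_t(\alpha)$. The FD case combines the cancellation of the $\|\widehat\mu_t(\alpha)\|_2^2$ terms with Loewner-concavity of $\widehat\Sigma_t(\alpha)$ and the operator-concave, operator-monotone square root. The only cosmetic difference is that you phrase the last step via a concave, Loewner-nondecreasing trace functional, whereas the paper chains the two operator inequalities at the matrix level before taking the trace.
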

\begin{proof}
    We provide the proof in the Appendix.
\end{proof}

\textbf{Exponentiated-gradient (EG) solver and practical instantiation.}
To implement the Mixture-Greedy update \eqref{eq:mg_update_method} in an efficient manner on the simplex,
we apply the exponentiated gradient (EG) descent, a mirror-descent method with KL geometry that maintains $\alpha\in\Delta_m$ by construction.
Starting from a feasible iterate $\alpha^{(0)}$ (we use the uniform distribution $\tfrac{1}{m}\mathbf{1}$ in our experiments), every EG step updates
\begin{equation}
\widetilde\alpha^{(s+1)}_i = \alpha^{(s)}_i\exp\bigl(-\eta g^{(s)}_i\bigr),
\;\;
\alpha^{(s+1)} = \frac{1}{\Vert \widetilde\alpha^{(s+1)}\Vert_1}\widetilde\alpha^{(s+1)},
\label{eq:eg_update_maintext}
\end{equation}
where $g^{(s)}=\nabla_\alpha \widehat{\mathcal{L}}_{t-1}(\alpha^{(s)})$ and $\eta>0$ is a stepsize.
In our setting, the objective families in this section lead to smooth convex losses over $\Delta_m$
(Proposition~\ref{thm:mg_convex_programs}); therefore, a fixed number of EG steps per round would be sufficient.
Algorithm~\ref{alg:mg_eg} summarizes the main steps.

\textbf{Score-specific gradients.}
For quadratic objectives \eqref{eq:quad_loss_method}, the gradient is explicit:
$\nabla_\alpha \widehat{\mathcal{L}}^{\rm quad}_t(\alpha)=2\widehat K(t)\alpha+w\,\widehat b(t)$.
For FD \eqref{eq:fd_loss_method} and log-Vendi \eqref{eq:neglogvendi_method}, gradients require matrix-function evaluations
(e.g., a matrix square-root/inverse-square-root for FD, and a matrix logarithm for log-Vendi).
As we discuss, one can compute these via eigendecomposition.
To reduce cost for log-Vendi when the pooled sample size is significantly large, we also use a finite-dimensional feature representation
(e.g., random Fourier features for shift-invariant kernels), replacing the $N\times N$ kernel-matrix logarithm by a
$D\times D$ feature-covariance logarithm when $D\ll N$.
The full derivations are provided in Appendix~\ref{app:method_proofs_grads}.

\begin{algorithm}[t]
\caption{Mixture-Greedy via Exponentiated Gradient}
\label{alg:mg_eg}
\small
\begin{algorithmic}[1]
\Require Models $\{\mathcal{G}_i\}_{i=1}^m$, horizon $T$, warm start $M\ge 1$, EG stepsize $\eta>0$, EG steps $S\ge 1$
\State Initialize counts $n_i(0)=M$ for all $i\in[m]$ and draw $M$ warm-start samples from each model to initialize statistics
\For{$t=1,2,\dots,T$}
    \State Form the empirical loss $\widehat{\mathcal{L}}_{t-1}(\alpha)$ from samples up to round $t-1$
    \State Initialize $\alpha^{(0)} \gets \alpha_{t-1}$ (warm start) or $\alpha^{(0)}\gets \frac{1}{m}\mathbf{1}$
    \For{$s=0,1,\dots,S-1$}
        \State Compute $g^{(s)} \gets \nabla_\alpha \widehat{\mathcal{L}}_{t-1}\big(\alpha^{(s)}\big)$
        \State $\widetilde{\alpha}^{(s+1)}_i \gets \alpha^{(s)}_i \exp(-\eta\, g^{(s)}_i)$ for all $i\in[m]$
        \State $\alpha^{(s+1)} \gets \widetilde{\alpha}^{(s+1)}/\sum_{j=1}^m \widetilde{\alpha}^{(s+1)}_j$
    \EndFor
    \State Set $\alpha_t \gets \alpha^{(S)}$, sample $I_t \sim \alpha_t$, draw $X_t \sim P_{\mathcal{G}_{I_t}}$
    \State Update count $n_{I_t}(t)\gets n_{I_t}(t-1)+1$, and update the empirical statistics needed for $\widehat{\mathcal{L}}_t$
\EndFor
\end{algorithmic}
\end{algorithm}

\section{Implicit exploration and regret for Diversity Scores}
\label{sec:mg_regret}

We analyze Mixture-Greedy for two diversity-aware objectives:
(i) negative log-Vendi (negative von Neumann entropy of the kernel matrix), and
(ii) the Fr\'echet distance functional under mean and covariance matching.
Our goal is to characterize structural conditions under which Mixture-Greedy, without any explicit UCB bonus, achieves sublinear regret.

Regarding the bandit protocol and regret definition, we consider a warm start of $M\ge 1$ samples per arm (i.e., $n_i(0)=M$).
At each round $t=1,\dots,T$, Mixture-Greedy computes
\begin{equation}\label{eq:mg_update_main}
\alpha_t \in \arg\min_{\alpha\in\Delta_m}\widehat F_{t-1}(\alpha),
\qquad I_t\sim \alpha_t,
\end{equation}
where $\widehat F_{t-1}$ is the plug-in empirical objective formed from samples up to time $t-1$.
We measure regret against the best fixed mixture:
\begin{equation}\label{eq:mg_regret_main}
\mathrm{Reg}_T := \sum_{t=1}^T\Bigl[F(\alpha_t)-\min_{\alpha\in\Delta_m}F(\alpha)\Bigr].
\end{equation}

\subsection{Negative log-Vendi induces implicit exploration}
\label{subsec:mg_regret_vendi_main}

Let $\phi:\mathcal{X}\to\mathbb{R}^d$ be a finite-dimensional feature map.
Define $S_i:=\mathbb{E}[\phi(X)\phi(X)^\top]$ for $X\sim P_{\mathcal{G}_i}$ and
$S_\alpha:=\sum_{i=1}^m\alpha_i S_i$.
The negative log-Vendi objective is
\[
F_{\mathrm{NLV}}(\alpha):=\mathrm{Tr}(S_\alpha\log S_\alpha).
\]

\begin{assumption}[Normalized kernel]\label{ass:vendi_norm_main}
$\|\phi(x)\|_2=1$ for all $x$.
\end{assumption}

\begin{assumption}[Population innovation for log-Vendi]\label{ass:vendi_innov_main}
There exist constants $\nu_0\in(0,1]$ and $\varepsilon_0\in[0,\nu_0/8)$ such that for every $i\in[m]$
there exists a unit vector $v_i$ satisfying
$v_i^\top S_i v_i \ge \nu_0$ and $\sum_{j\neq i} v_i^\top S_j v_i \le \varepsilon_0$.
\end{assumption}

Under these structural conditions, the entropy geometry enforces a uniform interiority property for empirical minimizers.
This yields linear sampling of all arms and enables time-uniform concentration.

\begin{theorem}[Mixture-Greedy regret for negative log-Vendi]\label{thm:nlv_main}
Fix $T\ge 1$ and $\delta\in(0,1)$.
Under Assumptions~\ref{ass:vendi_norm_main}--\ref{ass:vendi_innov_main}, there exists a warm-start size
$M=M(d,m,\nu_0,\varepsilon_0,T,\delta)$ such that, with probability at least $1-\delta$ the following holds, where $C_{\mathrm{NLV}}$ depends only on $(d,m,\nu_0,\varepsilon_0)$:
\[
\mathrm{Reg}_T
\ \le\
C_{\mathrm{NLV}}\Bigl(1+\sqrt{\log\frac{m(T+1)}{\delta}}\Bigr)\sqrt{T}\,(1+\log T),
\]

\end{theorem}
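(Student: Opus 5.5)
The argument has three stages. First, a deterministic interiority lemma: the empirical minimizers stay uniformly away from the boundary of $\Delta_m$. Second, this forces every arm to be sampled linearly often, which gives time-uniform concentration of the empirical second moments. Third, a Lipschitz/sensitivity argument turns concentration into per-round suboptimality, which we sum over $t$.

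\emph{Step 1 (interiority).} Let $\widehat S_i(t)$ be the empirical second moments of $\phi$ for arm $i$, so that $\widehat F_t(\alpha)=\mathrm{Tr}(\widehat S_\alpha\log\widehat S_\alpha)$ with $\widehat S_\alpha=\sum_i\alpha_i\widehat S_i(t)$. The $\widehat S_\alpha$ form a convex family of unit-trace matrices by Assumption~\ref{ass:vendi_norm_main}, and $\widehat F_t$ is convex by Proposition~\ref{thm:mg_convex_programs}.

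Suppose the empirical innovation condition holds, i.e.\ $v_i^\top\widehat S_i v_i\ge\nu_0/2$ and $\sum_{j\ne i}v_i^\top\widehat S_j v_i\le\nu_0/4$. Consider the one-sided directional derivative of $\widehat F_t$ at $\alpha$ in the direction $e_i-\alpha$:
\[
\mathrm{Tr}\bigl((\widehat S_i-\widehat S_\alpha)(\log\widehat S_\alpha+I)\bigr)=\mathrm{Tr}\bigl(\widehat S_i\log\widehat S_\alpha\bigr)-\mathrm{Tr}\bigl(\widehat S_\alpha\log\widehat S_\alpha\bigr).
\]
The second term is bounded by $\log d$. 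For the first term, use operator concavity of $\log$ together with the eigenvalue bound $\lambda_{\max}$ of $\widehat S_\alpha$ restricted near $v_i$, which is at most $\alpha_i+\nu_0/4$. Since $\widehat S_i$ puts mass at least $\nu_0/2$ on $v_i$, a pinching/Jensen argument gives
\[
\mathrm{Tr}\bigl(\widehat S_i\log\widehat S_\alpha\bigr)\le\tfrac{\nu_0}{2}\log\bigl(\alpha_i+\varepsilon_0'\bigr)+C.
\]
This is the delicate point. The entropy penalizes small $\alpha_i$ only logarithmically, and the cross term $\varepsilon_0$ caps the penalty. So I would first establish the bound $-\mathrm{Tr}(\widehat S_i\log\widehat S_\alpha)\ge -\tfrac{\nu_0}{2}\log(\alpha_i+2\varepsilon_0)-O(1)$.

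The plan is to show that if $\alpha_i<\gamma$ for some explicit $\gamma=\gamma(d,\nu_0,\varepsilon_0)>0$, then the directional derivative toward $e_i$ is strictly negative, contradicting optimality. This yields $\alpha_{t,i}\ge\gamma$ for all $i$. I expect this step to be the main obstacle. The requirement $\varepsilon_0<\nu_0/8$ must be what makes $\log(\alpha_i+\varepsilon_0)$ negative enough to dominate the $\log d$ term. If that fails, I would instead compare $\widehat F_t(\alpha)$ with $\widehat F_t\bigl((1-s)\alpha+se_i\bigr)$ directly, using concavity of von Neumann entropy together with the entropy gain from spreading mass onto the nearly orthogonal direction $v_i$.

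\emph{Step 2 (linear sampling and concentration).} The warm start $M$ is chosen so that, by matrix Hoeffding applied to bounded rank-one terms $\phi\phi^\top$, the empirical innovation condition holds at $t=0$ with probability at least $1-\delta/2$. For later times, $I_t\sim\alpha_t$ with $\alpha_{t,i}\ge\gamma$. Freedman's inequality plus a union bound then gives $n_i(t)\ge M+\gamma t/2-O(\sqrt{t\log(m(T+1)/\delta)})\gtrsim M+\gamma t$. A time-uniform matrix Hoeffding bound with a union bound over $i$ and $t$ gives
\[
\|\widehat S_i(t)-S_i\|_{\mathrm{op}}\le C\sqrt{\log(dm(T+1)/\delta)/n_i(t)}.
\]
These bounds are $\le c\nu_0$ for $M$ large, so the empirical innovation condition persists for all $t$ and the induction closes.

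\emph{Step 3 (regret).} By Step 1 applied to the population objective, the minimizer $\alpha^\star$ of $F$ is also $\gamma$-interior, and all relevant $S_\alpha,\widehat S_\alpha$ have eigenvalues that may be small. Since $x\log x$ is not Lipschitz at $0$, I would use the continuity bound for von Neumann entropy (Fannes--Audenaert):
\[
\bigl|\widehat F_t(\alpha)-F(\alpha)\bigr|\le\epsilon\log d+h(\epsilon),\qquad \epsilon=\max_i\|\widehat S_i-S_i\|_1\le d\max_i\|\widehat S_i-S_i\|_{\mathrm{op}}.
\]
This holds uniformly in $\alpha$ because $\widehat S_\alpha-S_\alpha$ is a convex combination of the per-arm errors. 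The term $h(\epsilon)\approx\epsilon\log(1/\epsilon)$ is the source of the $\log T$ factor.

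Then $F(\alpha_t)-F(\alpha^\star)\le 2\sup_\alpha|\widehat F_{t-1}-F|\lesssim d\,\epsilon_t(\log d+\log(1/\epsilon_t))$ with $\epsilon_t\lesssim\sqrt{\log(m(T+1)/\delta)/(\gamma t)}$. Summing over $t$ via $\sum_t t^{-1/2}\le2\sqrt T$ gives
\[
\mathrm{Reg}_T\le C_{\mathrm{NLV}}\Bigl(1+\sqrt{\log\tfrac{m(T+1)}{\delta}}\Bigr)\sqrt T(1+\log T),
\]
with total failure probability at most $\delta$.
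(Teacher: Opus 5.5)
Your Steps 2 and 3 follow the paper's route. These are: a martingale count floor as in Lemma~\ref{lem:counts_floor_app}; concentration of $\widehat S_i$ union-bounded over the possible sample counts $n\in\{M,\dots,M+T\}$ (this, not a union over $t$, is what handles the random $n_i(t)$); Fannes--Audenaert plus a norm comparison, which produces the $\log T$ factor; the comparison $F(\alpha_t)-\min F\le 2\sup_\alpha|\widehat F_{t-1}-F|$; and summation. The genuine gap is Step 1, and it cannot be fixed by a sharper inequality. Your estimate reads
\[
\mathrm{Tr}\bigl(\widehat S_i\log\widehat S_\alpha\bigr)-\mathrm{Tr}\bigl(\widehat S_\alpha\log\widehat S_\alpha\bigr)\ \le\ \nu\log(\alpha_i+\varepsilon')+\log d+C,
\]
where $\varepsilon'$ is the cross mass along $v_i$ ($\varepsilon_0$ plus estimation error). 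The right side can be negative only if $\varepsilon'<e^{-(\log d+C)/\nu}$. So the cross term must be polynomially small in $d$ and exponentially small in $1/\nu$. The hypothesis $\varepsilon_0<\nu_0/8$ gives nothing of the kind, and your empirical slack $\nu_0/4$ is weaker still. Your fallback cannot help either: by convexity it is the same first-order test, and the floor is simply false under the stated assumptions. Take $m=2$. Let arm 1 output $\phi\equiv e_1$, and let arm 2 output $e_1$ w.p.\ $\varepsilon_0$, $e_2$ w.p.\ $\nu_0$, and a uniform element of $\{e_3,\dots,e_d\}$ otherwise. With $v_1=e_1$ and $v_2=e_2$, Assumption~\ref{ass:vendi_innov_main} holds for $\nu_0=1/2$ and $\varepsilon_0=1/20<\nu_0/8$. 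Yet the derivative of $F_{\mathrm{NLV}}$ at $e_2$ toward $e_1$ is $\log\varepsilon_0+\mathrm{VNE}(S_2)\ge\log\frac{1}{20}+0.45\log(d-2)>0$ for $d\ge 1000$. By convexity $\alpha^\star=e_2$. For large $M$ every empirical minimizer is also $e_2$, so arm 1 is never sampled after the warm start and no floor $\gamma(d,\nu_0,\varepsilon_0)>0$ exists. (Regret happens to be small in this example, but not via linear sampling.)

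The paper does not derive interiority from $\varepsilon_0<\nu_0/8$ either. Its appendix version, Theorem~\ref{thm:nlv_regret_app}, assumes in hypothesis (i) that the floor of Lemma~\ref{lem:nlv_floor_app} is positive, i.e.\ $\varepsilon_0+(m-1)\eta<\exp\bigl(-1-\frac{1}{\nu_{\mathrm{eff}}}(\frac{d}{e}+\log m)\bigr)$. This silently forces $\varepsilon_0$ to be exponentially small, which no choice of $M$ can supply. To repair your proof, state such a quantitative smallness hypothesis explicitly. Then take $M$ large enough that the estimation slack also fits under that threshold, not merely under $c\nu_0$.

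For the first term, make your pinching/Jensen step rigorous as follows. Write $\mathrm{Tr}(\widehat S_i\log\widehat S_\alpha)=-\mathrm{VNE}(\widehat S_i)-D(\widehat S_i\,\|\,\widehat S_\alpha)\le -D(\widehat S_i\,\|\,\widehat S_\alpha)$, with $D$ the quantum relative entropy. Apply data processing under the measurement $\{v_iv_i^\top,\,I-v_iv_i^\top\}$ and bound the binary relative entropy. This gives $\mathrm{Tr}(\widehat S_i\log\widehat S_\alpha)\le\nu\log(v_i^\top\widehat S_\alpha v_i)+\log 2\le\nu\log(\alpha_i+\varepsilon')+\log 2$, which is your claim with $C=\log 2$.

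With these fixes, your Step 1 is slightly better than the paper's. The paper perturbs along $e_i-e_j$ with $\alpha_j\ge 1/m$ and uses operator monotonicity of $\log$ to get $\mathrm{Tr}(\widehat S_j\log\widehat S_\alpha)\ge -d/e-\log m$. Your direction $e_i-\alpha$, with $\mathrm{VNE}\le\log d$, gives the interiority threshold $(2d)^{-1/\nu}$ in place of roughly $e^{-(d/e+\log m)/\nu}$.
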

\begin{proof}
We provide the proof in the Appendix.
\end{proof}

\subsection{Fr\'echet Distance: regret under a population interiority margin}
\label{subsec:mg_regret_fid_main}

The Fr\'echet Distance functional is smooth at the simplex boundary and may admit boundary minimizers.
We therefore state a population interiority condition ensuring that optimal mixtures lie in the interior.

Let arm $i$ produce i.i.d.\ embeddings $Z\in\mathbb{R}^d$ with mean $\mu_i$ and covariance $\Sigma_i$.
For $\alpha\in\Delta_m$ define
\[
\mu_\alpha=\sum_{i=1}^m\alpha_i\mu_i,\; \Sigma_\alpha=\sum_{i=1}^m\alpha_i\Bigl(\Sigma_i+(\mu_i-\mu_\alpha)(\mu_i-\mu_\alpha)^\top\Bigr)
\]
Fix $(\mu_0,\Sigma_0)$ with $\Sigma_0\succeq \lambda_0 I$ and define
\[
F_{\mathrm{FD}}(\alpha)
:=\|\mu_\alpha-\mu_0\|_2^2+\mathrm{Tr}\Bigl(\Sigma_\alpha+\Sigma_0
-2(\Sigma_0^{1/2}\Sigma_\alpha\Sigma_0^{1/2})^{1/2}\Bigr)
\]

\begin{assumption}[Bounded embeddings]\label{ass:fid_bdd_main}
There exists $B<\infty$ such that $\|Z\|_2\le B$ almost surely for every arm.
\end{assumption}

\begin{assumption}[Uniform positive definiteness]\label{ass:fid_pd_main}
There exists $\nu>0$ such that $\Sigma_i\succeq \nu I$ for all $i\in[m]$.
\end{assumption}

\begin{assumption}[Population interiority margin]\label{ass:fid_margin_main}
There exist $\gamma_0\in(0,1/m]$ and $\Delta_0>0$ such that
\[
\inf_{\alpha\in\Delta_m:\ \min_i\alpha_i\le \gamma_0}F_{\mathrm{FD}}(\alpha)
\ \ge\ \min_{\alpha\in\Delta_m}F_{\mathrm{FD}}(\alpha)+\Delta_0.
\]
\end{assumption}

\begin{theorem}[Mixture-Greedy regret for Fr\'echet Distance]\label{thm:fid_main}
Fix $T\ge 1$ and $\delta\in(0,1)$.
Under Assumptions~\ref{ass:fid_bdd_main}--\ref{ass:fid_margin_main}, there exists
$M=M(B,\lambda_0,\nu,m,d,\gamma_0,\Delta_0,T,\delta)$ such that, with probability at least $1-\delta$, where $C_{\mathrm{FD}}$ depends only on $(B,\lambda_0,\nu,m,d,\gamma_0)$:
\[
\mathrm{Reg}_T
\ \le\
C_{\mathrm{FD}}\Bigl(1+\sqrt{\log\frac{m(T+1)}{\delta}}\Bigr)\sqrt{T}
\]
\end{theorem}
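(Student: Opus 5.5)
We argue on a single high-probability event on which all per-arm empirical moments are uniformly accurate. We then show that, on this event, every greedy iterate stays a fixed distance from the simplex boundary. This forces every arm to be sampled linearly often, and summing the resulting uniform error bounds gives the $\sqrt{T}$ rate.

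\textbf{Step 1 (concentration).} Since embeddings are bounded by $B$ (Assumption~\ref{ass:fid_bdd_main}), Hoeffding for the means and matrix Hoeffding/Bernstein for the uncentered second moments apply. A union bound over arms $i\in[m]$ and over all possible counts $n\in\{M,\dots,M+T\}$ then gives, with probability at least $1-\delta/2$,
\[
\|\widehat\mu_i-\mu_i\|_2+\|\widehat S_i-S_i\|_{\mathrm{op}}\le c\,B^2\sqrt{d\,\log(m(T+1)/\delta)/n_i}
\]
simultaneously for all $i$ and $t$. We need a Lipschitz bound for $F_{\mathrm{FD}}$ in these moments, uniformly over $\alpha\in\Delta_m$. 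The mean term and $\mathrm{Tr}(\Sigma_\alpha)$ are polynomial in the moments with constants depending on $B$. The term $\mathrm{Tr}((\Sigma_0^{1/2}\Sigma\Sigma_0^{1/2})^{1/2})$ is Lipschitz on $\{\Sigma\succeq (\nu/2)I\}$, with constant of order $\sqrt{\|\Sigma_0\|}/\sqrt{\lambda_0\nu}$; this follows from the bound $\|A^{1/2}-B^{1/2}\|\le\|A-B\|/(\sqrt{\lambda_{\min}(A)}+\sqrt{\lambda_{\min}(B)})$. By Assumption~\ref{ass:fid_pd_main}, $\Sigma_\alpha\succeq\nu I$ for every $\alpha$, and once the moment errors are below $\nu/2$ the empirical $\widehat\Sigma_t(\alpha)\succeq(\nu/2)I$ as well. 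Consequently
\[
\sup_{\alpha}|\widehat F_t(\alpha)-F_{\mathrm{FD}}(\alpha)|\le L\sum_i\alpha_i\,\epsilon_i(t),
\qquad \epsilon_i(t)\asymp\sqrt{\log(m(T+1)/\delta)/n_i(t)}.
\]
The linear-in-$\alpha$ form comes from the fact that the mixture moments are linear combinations of per-arm moments.

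\textbf{Step 2 (interiority via the margin).} This is the main step. Choose $M$ large enough that $\epsilon_i(t)\le\epsilon_i(0)<\Delta_0/(3L)$ holds for all $t$, which is possible because $n_i(t)\ge M$ always. Then $\sup|\widehat F_t-F_{\mathrm{FD}}|<\Delta_0/3$. Combined with Assumption~\ref{ass:fid_margin_main}, any empirical minimizer must satisfy $\min_i\alpha_{t,i}>\gamma_0$. The argument: a point with some coordinate at most $\gamma_0$ has empirical value at least $\min F+2\Delta_0/3$, whereas the population minimizer has empirical value at most $\min F+\Delta_0/3$.

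\textbf{Step 3 (linear sampling).} Each arm is now drawn with probability at least $\gamma_0$ in every round. Freedman's inequality for the martingale $\sum_s(\mathbf 1[I_s=i]-\alpha_{s,i})$, union-bounded over $i$ and $t$, gives on an event of probability at least $1-\delta/2$
\[
n_i(t)\ge M+\gamma_0 t-\sqrt{2t\log(2m(T+1)/\delta)}\ \ge\ M+\gamma_0 t/2 .
\]
The last inequality holds for $t$ beyond a threshold of order $\gamma_0^{-2}\log(\cdot)$; before that threshold, the warm start $M$ (chosen at least this threshold) handles the bound directly.

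\textbf{Step 4 (regret).} Let $\alpha^\star$ be a population minimizer. The standard greedy argument gives
\[
F(\alpha_t)-F(\alpha^\star)\le 2\sup_\alpha|\widehat F_{t-1}-F|\le 2L\max_i\epsilon_i(t-1)\lesssim L\sqrt{\log(m(T+1)/\delta)}\big/\sqrt{M+\gamma_0(t-1)/2}.
\]
Summing over $t$ gives $C_{\mathrm{FD}}\sqrt{\log(\cdot)}\sqrt{T}$ with $C_{\mathrm{FD}}\propto L/\sqrt{\gamma_0}$, which depends only on $(B,\lambda_0,\nu,m,d,\gamma_0)$. The dependence on $\Delta_0$ enters only through $M$, consistent with the statement.

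The most delicate part is the uniform Lipschitz control of the matrix square-root term. The other steps are routine once that bound is in place, but this one requires the eigenvalue lower bound for $\widehat\Sigma_t(\alpha)$ to hold uniformly along the entire trajectory, and that uniformity is exactly what the choice of $M$ must secure.
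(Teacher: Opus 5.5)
Your proposal is correct and follows the same route as the paper. A uniform deviation bound below a fixed fraction of $\Delta_0$, secured by the warm start, forces every empirical minimizer out of the $\gamma_0$-boundary layer. Martingale count bounds then give $n_{\min}(t)=\Omega(t)$, and the ERM comparison lemma summed against $t^{-1/2}$ yields the $\sqrt{T}$ rate. Your explicit Lipschitz control of the square-root term, via $\widehat\Sigma_t(\alpha)\succeq(\nu/2)I$ from Assumption~\ref{ass:fid_pd_main}, simply fills in the ``Lipschitz analysis for the Fr\'echet term'' that the paper invokes without writing out. Note that the resulting constant also depends on $\|\mu_0\|$ and $\|\Sigma_0\|$, which the statement implicitly controls through $B$.
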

\begin{proof}
We provide the proof in the Appendix.
\end{proof}

\begin{figure*}[t]
        \centering
        \includegraphics[width=\linewidth]{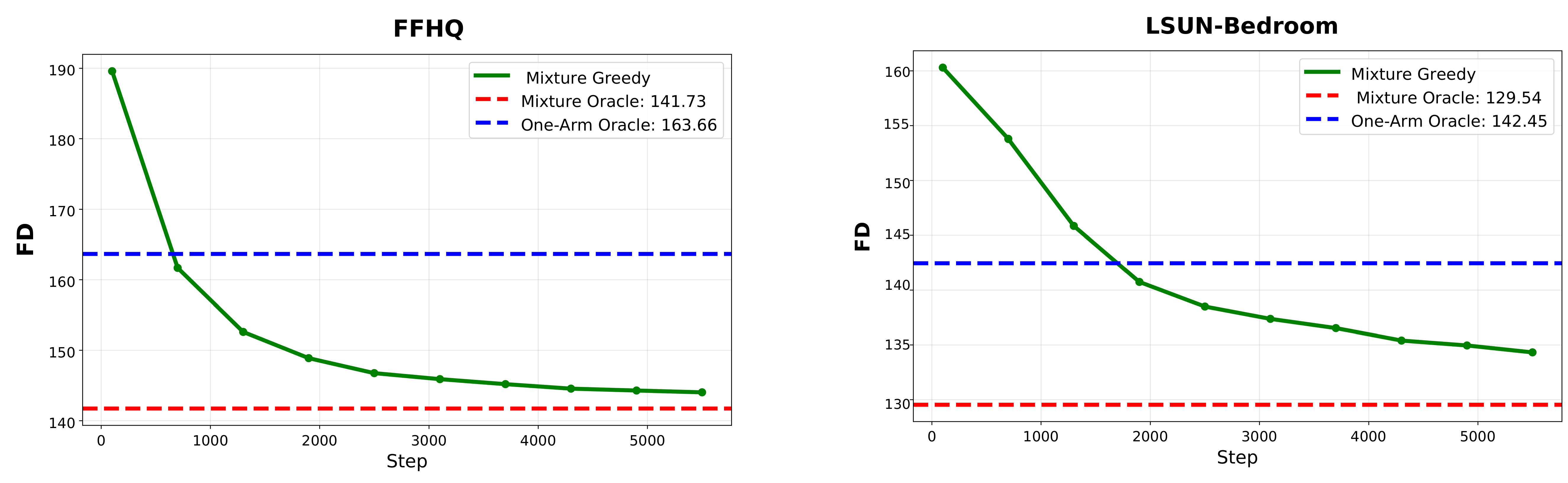}
       
    \caption{Convergence of Mixture Greedy to the Mixture Oracle in terms of Fréchet Distance (FD) across image selection steps on FFHQ (left) and LSUN-Bedroom (right).}
     \label{fig:FID_ffhq_lsun}

\end{figure*}

\begin{remark}
Our analysis extends to the RKE diversity objective, $\mathrm{RKE}(\alpha)=1/\mathrm{Tr}(S_\alpha^2)$,
optimized via the equivalent convex functional $F_{\mathrm{RKE}}(\alpha)=\mathrm{Tr}(S_\alpha^2)$.
Under a standard population margin condition analogous to Assumption~\ref{ass:fid_margin_main},
Mixture-Greedy achieves an $\widetilde{\mathcal{O}}(\sqrt{T})$ regret bound.
The same rate holds when the objective is augmented with linear fidelity terms.
The statements and proofs are in Appendix~\ref{app:rke_mg}.
\end{remark}

\subsection{Including a linear fidelity term}
\label{subsec:mg_linear_main}

The fidelity scores for generative models, including Precision~\citep{kynkaanniemi2019improvedpr} and Density~\citep{naeem2020reliable} are linear functionals of the generator distribution.
We model this using bounded functions $\psi_i:\mathcal{X}\to[0,1]$:
\[
G(\alpha):=\sum_{i=1}^m \alpha_i \theta_i,
\quad
\theta_i:=\mathbb{E}_{X\sim P_{\mathcal{G}_i}}[\psi_i(X)]\in[0,1],
\]
and for a base diversity-aware objective $F(\alpha)$, we consider
$H(\alpha):=F(\alpha)+w\,G(\alpha)$ for given coefficient $w>0$.
Given this objective, Mixture-Greedy minimizes the plug-in empirical objective
$\widehat H_{t-1}(\alpha)=\widehat F_{t-1}(\alpha)+w\,\widehat G_{t-1}(\alpha)$.

\textbf{Negative log-Vendi.}
The linear increment along any simplex direction is uniformly bounded,
thus the entropy-induced interiority mechanism remains intact,
with the quantitative floor adjusted by a controlled $w$-dependent term.

\begin{corollary}[Mixture-Greedy regret for negative log-Vendi + linear term]
\label{cor:nlv_linear_main}
Assume the conditions of Theorem~\ref{thm:nlv_main} and $0\le \psi_i\le 1$.
For $M$ sufficiently large (see Appendix~\ref{app:mg_linear}),
with probability at least $1-\delta$,
\[
\sum_{t=1}^T\Bigl(H(\alpha_t)-\min_{\alpha\in\Delta_m}H(\alpha)\Bigr)
 \le
C_{\mathrm{NLV},w}\sqrt{T\log\frac{m(T+1)}{\delta}},
\]
where $C_{\mathrm{NLV},w}$ depends only on $(d,m,\nu_0,\varepsilon_0,w)$.
\end{corollary}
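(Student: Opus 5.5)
The plan is to follow the proof of Theorem~\ref{thm:nlv_main}. Its architecture is: (a) \emph{interiority} of every empirical minimizer; (b) linear sampling of every arm; (c) time-uniform concentration of the plug-in objective; (d) summation of per-round excess risk. We will show that adding $wG$ perturbs only step (a), and only by a bounded amount. Steps (b)--(d) then carry over with $F$ replaced by $H$.

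\textbf{Step 1: interiority for $H$ and $\widehat H_{t-1}$.} Write $i$ for an arm whose weight $\alpha_i$ is small. The interiority mechanism for $F_{\mathrm{NLV}}$ uses Assumption~\ref{ass:vendi_innov_main}: along $v_i$, the matrix $S_\alpha$ has Rayleigh quotient at most $\alpha_i+\varepsilon_0$, yet at least $\alpha_i\nu_0$. Hence the directional derivative of $\mathrm{Tr}(S_\alpha\log S_\alpha)$ in the direction $e_i-\alpha$ behaves like $\log(\alpha_i+\varepsilon_0)+O(1)$. For $\alpha_i\to0$ and $\varepsilon_0$ small, this tends to $-\infty$ in the regime relevant to the theorem, or at least falls below a fixed negative threshold $-c(\nu_0,\varepsilon_0,d)$. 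The linear term contributes the directional derivative $w(\theta_i-\sum_j\alpha_j\theta_j)$, which lies in $[-w,w]$ because $\psi_i\in[0,1]$. The same bound holds for $\widehat G_{t-1}$, since it is an empirical average of $[0,1]$-valued quantities.

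We will therefore rerun the boundary argument with the threshold shifted by $w$. Concretely, we pick $\gamma_w>0$ so that the entropy-derivative lower bound at $\alpha_i\le\gamma_w$ exceeds $w$ plus a concentration slack. Moving mass toward $e_i$ then strictly decreases $\widehat H_{t-1}$. Consequently every minimizer satisfies $\min_i\alpha_{t,i}\ge\gamma_w$, with $\gamma_w=\gamma_w(d,m,\nu_0,\varepsilon_0,w)$. This uses the empirical innovation condition, which holds for $\widehat S_i$ once $M$ is large, exactly as in Theorem~\ref{thm:nlv_main}.

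\textbf{Main obstacle.} The key point is checking that the entropy derivative bound is strong enough to absorb the additive $w$. We expect it is, because the mechanism behaves like $-\log(\alpha_i)$, which is unbounded. If the original proof only gives a constant-margin interiority, we will instead require the $\log$ blow-up near the face $\{\alpha_i=0\}$ along $v_i$. From the eigenvalue of $S_\alpha$ in direction $v_i$ being at most $\alpha_i+\varepsilon_0$, we only get the bound $\log(\alpha_i+\varepsilon_0)$. So the $w$-dependence may need to enter via a requirement like $\log(1/(2\varepsilon_0))\gtrsim w$, or via a smaller admissible $\gamma_w$. This explains why $C_{\mathrm{NLV},w}$ and $M$ depend on $w$.

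\textbf{Steps 2--4.} With $\alpha_{t,i}\ge\gamma_w$, the counts satisfy $n_i(t)\ge M+\gamma_w t/2-O(\sqrt{t\log(m(T+1)/\delta)})$ for all $i,t$ simultaneously; this follows from Freedman/Azuma and a union bound. Time-uniform matrix concentration for $\widehat S_i$, together with Hoeffding for $\widehat\theta_i$, gives $\sup_{\alpha\in\Delta_m^{\gamma_w}}|\widehat H_{t-1}(\alpha)-H(\alpha)|\lesssim \sqrt{\log(m(T+1)/\delta)/(M+t)}$. Here $\Delta_m^{\gamma_w}:=\{\alpha\in\Delta_m:\min_i\alpha_i\ge\gamma_w\}$ is the truncated simplex. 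On this set the eigenvalues are bounded below, so $\mathrm{Tr}(S\log S)$ is Lipschitz in $S$; the linear part is $1$-Lipschitz in $\widehat\theta$. The minimizer of $H$ lies in $\Delta_m^{\gamma_w}$ by the population version of Step~1. The standard argument gives $H(\alpha_t)-\min H\le 2\sup|\widehat H_{t-1}-H|$. Summing $\sum_t (M+t)^{-1/2}\lesssim\sqrt T$ yields the claimed bound.

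The $(1+\log T)$ factor of Theorem~\ref{thm:nlv_main} disappears here, and the reason is as follows. Interiority now gives a uniform eigenvalue floor on $\Delta_m^{\gamma_w}$, so no logarithmic Lipschitz loss near small eigenvalues is incurred. We can also absorb the constant $1$ into $C_{\mathrm{NLV},w}$, because $\log(m(T+1)/\delta)\ge\log 2$.
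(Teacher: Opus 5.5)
\textbf{The gap: the eigenvalue floor.} The problem is in your last step, where you drop the $(1+\log T)$ factor. You assert that interiority yields a uniform eigenvalue floor on $\Delta_m^{\gamma_w}$, so that $S\mapsto\mathrm{Tr}(S\log S)$ is Lipschitz there with a constant depending only on $(d,m,\nu_0,\varepsilon_0,w)$.

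Interiority gives only $S_\alpha\succeq\gamma_w\sum_i S_i$. The hypotheses (unit-norm features, and innovation of each arm along a single direction $v_i$) put no lower bound on the nonzero spectrum of $\sum_i S_i$. For instance, take $d\ge m+1$ and let the arms emit orthonormal vectors $v_1,\dots,v_m$ deterministically, except that arm $1$ emits a unit vector $u\perp v_1,\dots,v_m$ with probability $\lambda$. For every $\lambda\in(0,1/2]$ the assumptions hold with $(\nu_0,\varepsilon_0)=(1/2,0)$. Yet $S_\alpha$ has the eigenvalue $\alpha_1\lambda$ along $u$. In that direction $\widehat S_\alpha-S_\alpha$ is nonzero, and $\log S_\alpha+I$ has size $|1+\log(\alpha_1\lambda)|$, unbounded as $\lambda\downarrow 0$. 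So no Lipschitz constant in terms of the allowed parameters exists. Keeping the whole segment from $S_\alpha$ to $\widehat S_\alpha$ away from zero eigenvalues would also make $M$ depend on this unknown floor.

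Without a floor, the available deterministic continuity estimate is the Fannes--Audenaert modulus $C_d\,\varepsilon_t\log(1/\varepsilon_t)$ of Lemma~\ref{lem:nlv_modulus_app}, which cannot be improved in the worst case. This is exactly what the paper's proof (Theorem~\ref{thm:nlv_linear_regret_app}) uses, under the warm-start condition $\tfrac12\sqrt d\,\varepsilon_t\le 1/e$. It is why that proof concludes $\mathrm{Reg}_T(H)\le C_{\mathrm{NLV},w}\bigl(1+\sqrt{\log\tfrac{m(T+1)}{\delta}}\bigr)\sqrt T\,(1+\log T)$. The paper therefore establishes the corollary only up to the $(1+\log T)$ factor that the displayed bound omits.

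To close your argument you would need one of the following:
\begin{itemize}
\item keep that factor and use Fannes--Audenaert, as the paper does;
\item add an explicit assumption such as a lower bound $\lambda_\ast>0$ on the nonzero eigenvalues of $\sum_i S_i$, with $M$ and the constant then depending on $\lambda_\ast$;
\item replace the deterministic continuity step by a variance-sensitive analysis of the plug-in entropy, which you have not supplied.
\end{itemize}

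\textbf{The rest matches the paper.} Your Step 1 is Lemma~\ref{lem:nlv_floor_linear_app}. The paper perturbs along $e_i-e_j$ with $\alpha_j\ge 1/m$ and bounds the linear contribution by $w(\widehat\theta_i-\widehat\theta_j)\le w$. It obtains the floor $\exp\bigl(-\tfrac{1}{\nu_{\mathrm{eff}}}(\tfrac de+\log m+w)\bigr)-\varepsilon_{\mathrm{eff}}$. As you anticipated, positivity of this floor is a smallness condition on $\varepsilon_0$ itself, which enlarging $M$ cannot supply. Your direction $e_i-\alpha$ also works and replaces $\tfrac de+\log m$ by $\log d$, since $-\mathrm{Tr}(\widehat S_\alpha\log\widehat S_\alpha)\le\log d$. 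Steps 2--4 are Lemmas~\ref{lem:counts_floor_app}, \ref{lem:nlv_conc_app}, \ref{lem:theta_conc_app} and~\ref{lem:ERM_H_app}.

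\textbf{A missing inequality in Step 1.} One part of Step 1 needs an actual inequality rather than ``behaves like''. The Rayleigh-quotient bounds along $v_i$ do not directly control $\mathrm{Tr}(\widehat S_i\log\widehat S_\alpha)$, because $\widehat S_i-(v_i^\top\widehat S_iv_i)\,v_iv_i^\top$ need not be PSD. A safe route is as follows.
\begin{itemize}
\item Put $s:=v_i^\top\widehat S_\alpha v_i\le\alpha_i+\varepsilon_{\mathrm{eff}}$ and assume $0<s<\nu_{\mathrm{eff}}/4$.
\item Let $P$ be the spectral projector of $\widehat S_\alpha$ onto eigenvalues at most $\tau:=4s/\nu_{\mathrm{eff}}$. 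Then $\mathrm{Tr}(\widehat S_i\log\widehat S_\alpha)\le(\log\tau)\,\mathrm{Tr}(\widehat S_iP)$.
\item Since $\|(I-P)v_i\|^2\le s/\tau$ and $\|\widehat S_i\|_{\mathrm{op}}\le 1$, you get $\mathrm{Tr}(\widehat S_iP)\ge(\sqrt{\nu_{\mathrm{eff}}}-\sqrt{s/\tau})^2=\nu_{\mathrm{eff}}/4$.
\item Hence $\mathrm{Tr}(\widehat S_i\log\widehat S_\alpha)\le\tfrac{\nu_{\mathrm{eff}}}{4}\log\tfrac{4(\alpha_i+\varepsilon_{\mathrm{eff}})}{\nu_{\mathrm{eff}}}$.
\end{itemize}
This is the logarithmic blow-up your $w$-shifted threshold needs.
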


\textbf{Fr\'echet Distance.}
Since $G(\alpha)\in[0,1]$, adding $wG$ perturbs any interiority margin by at most $w$,
hence the Fr\'echet Distance guarantee extends under the corresponding margin condition for $H$.

\begin{corollary}[Mixture-Greedy regret for Fr\'echet Distance + linear term]
\label{cor:fid_linear_main}
Assume the conditions of Theorem~\ref{thm:fid_main} and $0\le \psi_i\le 1$.
If $H(\alpha)=F_{\mathrm{FD}}(\alpha)+wG(\alpha)$ satisfies the population interiority margin
(Assumption~\ref{ass:fid_margin_main} with $F_{\mathrm{FD}}$ replaced by $H$),
then with probability at least $1-\delta$,
\[
\sum_{t=1}^T\Bigl(H(\alpha_t)-\min_{\alpha\in\Delta_m}H(\alpha)\Bigr)
 \le
C_{\mathrm{FD},w}\sqrt{T\log\Bigl(\frac{m(T+1)}{\delta}\Bigr)},
\]
where $C_{\mathrm{FD},w}$ depends only on $(B,\lambda_0,\nu,m,d,\gamma_0,w)$.
\end{corollary}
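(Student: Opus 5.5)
The plan is to reduce Corollary~\ref{cor:fid_linear_main} to the proof of Theorem~\ref{thm:fid_main}, treating $wG$ as a bounded, linear, well-concentrated perturbation. The proof of Theorem~\ref{thm:fid_main} rests on three ingredients: (a) the population margin; (b) uniform concentration $\sup_{\alpha\in\Delta_m}|\widehat F_{t-1}(\alpha)-F_{\mathrm{FD}}(\alpha)|\lesssim \sqrt{\log(m(T+1)/\delta)/\min_i n_i(t-1)}$, obtained from concentration of $\widehat\mu_i,\widehat S_i$ and local Lipschitzness of the Fr\'echet functional on $\{\Sigma\succeq c I\}$; and (c) an interiority induction. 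I will redo each step with $H$ in place of $F_{\mathrm{FD}}$, using only that the margin assumption now holds for $H$.

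\textbf{Concentration.} Since $\psi_i\in[0,1]$, Hoeffding's inequality plus a union bound over arms and the (at most $T+1$) possible sample counts gives, with probability $1-\delta/2$,
\[
|\widehat\theta_i(t)-\theta_i|\le\sqrt{\log(4m(T+1)/\delta)/(2n_i(t))}
\]
simultaneously for all $i,t$. Because $G$ is linear in $\alpha$,
\[
\sup_\alpha|\widehat G_t(\alpha)-G(\alpha)|\le\max_i|\widehat\theta_i(t)-\theta_i|.
\]
Combining this with the FD event from Theorem~\ref{thm:fid_main} at level $\delta/2$, I obtain the uniform deviation bound
\[
\sup_\alpha|\widehat H_t-H|\le (C+w)\sqrt{\log(m(T+1)/\delta)/\min_i n_i(t)}.
\]

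\textbf{Interiority induction.} I will show by induction on $t$ that $\min_i(\alpha_t)_i>\gamma_0$. Suppose the deviation is at most $\Delta_0/3$ at time $t-1$. Any $\alpha$ with $\min_i\alpha_i\le\gamma_0$ has $\widehat H_{t-1}(\alpha)\ge \min H+2\Delta_0/3$. Meanwhile the population minimizer $\alpha^\star$ satisfies $\widehat H_{t-1}(\alpha^\star)\le\min H+\Delta_0/3$. Hence every empirical minimizer is $\gamma_0$-interior.

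To keep the deviation below $\Delta_0/3$, I need a lower bound on counts. Since each arm is drawn with probability at least $\gamma_0$ every round, Freedman or Azuma applied to $n_i(t)-M-\gamma_0 t$ gives $n_i(t)\ge M+\gamma_0 t/2$ for $t\gtrsim\log(mT/\delta)/\gamma_0^2$, and $n_i(t)\ge M$ always. Choosing $M\gtrsim (C+w)^2\log(m(T+1)/\delta)/\Delta_0^2$ then closes the induction uniformly in $t$. This is the step I expect to be the main obstacle: the induction and the martingale count bound must be made consistent. I will handle it with a stopping-time argument, defining the martingale only up to the first exit from the interior, exactly as presumably done for Theorem~\ref{thm:fid_main}. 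The constants' dependence on $\Delta_0$ is absorbed into $M$, not into $C_{\mathrm{FD},w}$.

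\textbf{Regret.} On the good event, the standard plug-in argument gives
\[
H(\alpha_t)-\min H\le 2\sup_\alpha|\widehat H_{t-1}-H|\le 2(C+w)\sqrt{\log(m(T+1)/\delta)/(M+\gamma_0(t-1)/2)}.
\]
Summing over $t$ and using $\sum_t t^{-1/2}\le 2\sqrt T$ yields the bound
\[
C_{\mathrm{FD},w}\sqrt{T\log(m(T+1)/\delta)},
\]
with $C_{\mathrm{FD},w}$ depending on $(B,\lambda_0,\nu,m,d,\gamma_0,w)$. The FD Lipschitz constant requires $\Sigma_\alpha\succeq\nu I$, which holds because $\Sigma_\alpha\succeq\sum_i\alpha_i\Sigma_i\succeq\nu I$. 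Its empirical analogue holds once $M$ is large enough that $\|\widehat\Sigma_i-\Sigma_i\|\le\nu/2$, which is already included in $M$.
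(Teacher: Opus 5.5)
Your proposal is correct and follows the paper's proof of Theorem~\ref{thm:fid_linear_regret_app} essentially step for step. The steps are: uniform concentration of the Fr\'echet part and of the linear part $\widehat G$; margin-induced $\gamma_0$-interiority of every empirical minimizer; the count lemma giving $n_{\min}(t)=\Omega(t)$; and the plug-in comparison summed over $t$.

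The only difference is that you derive the below-margin deviation event from the warm start via $n_i(t)\ge M$, whereas the paper simply assumes that event as a condition on $M$. This same observation also makes your stopping-time worry unnecessary: on the concentration event, interiority holds at every round without any induction.
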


\section{Numerical Results}
\begin{figure*}[t]

        \centering
        \includegraphics[width=\linewidth]{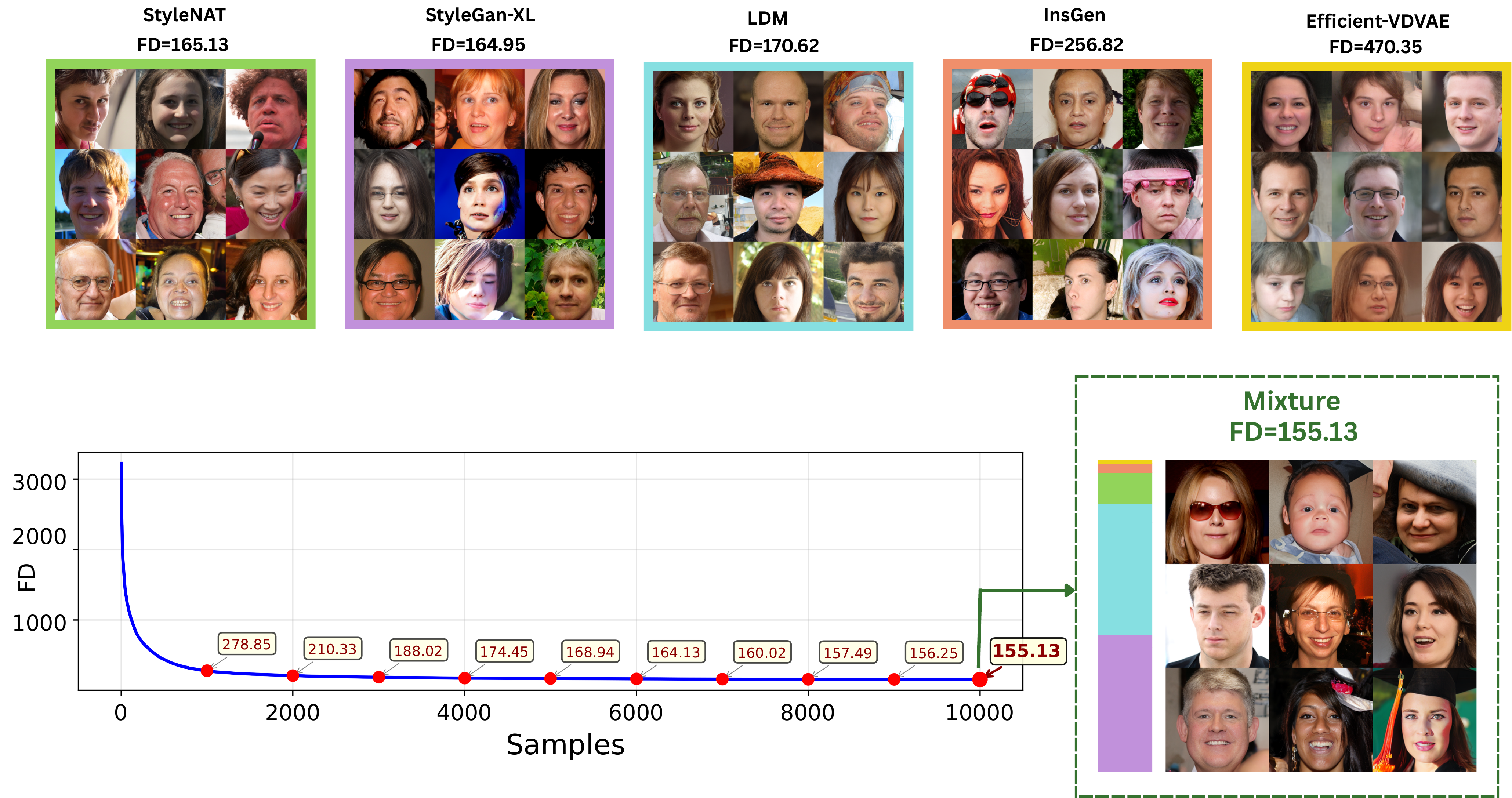}

    \caption{Mixture Greedy, minimizing Frechet Distance using exponentiated gradient descent on FFHQ generated samples.}
    \label{fig:FID}
\end{figure*}

We assess the performance of the proposed approach in both real-data and 
synthetic environments. For the real-data experiments, we adopt benchmark 
datasets standard in ~\citep{stein2023exposing}. 
Complementarily, we constructed synthetic scenarios with controlled data-generating 
mechanisms. Please refer to the Appendix \ref{app:syn} for details and results. For image feature extraction, we use DINOv2-ViT-L/14 \citep{dinov2} following the study by \citet{stein2023exposing}. For text encoding, we use SBERT \citep{sbert}. 

\subsection{Real-World Settings}
In the real-data setting, we evaluate our method on samples from pretrained generative 
models provided by the evaluation benchmark in ~\citep{stein2023exposing}. 
Each generative model is treated as an arm in our experimental framework. 
Here, we consider three datasets: 
FFHQ, LSUN-Bedroom, and ImageNet. 
For each dataset, we treat pretrained generative models as arms in our framework. All models are pre-trained on the dataset provided by~\citet{stein2023exposing}.

\textbf{FFHQ.}
For the FFHQ dataset~\citep{karras2019style}, 
we considered five distinct generative models:
LDM~\citep{Rombach2022LDM}, 
StyleGAN-XL~\citep{sauer2022styleganxl}, 
Efficient-VDVAE~\citep{hazami2022efficientvdvae}, 
InsGen~\citep{yang2021insgen}, 
and StyleNAT~\citep{walton2022stylenat}. 

\textbf{LSUN-Bedroom.}
For the LSUN-Bedroom dataset, we used generated samples from four models:
StyleGAN~\citep{karras2019style}, 
Projected GAN~\citep{sauer2021projectedgan}, 
iDDPM~\citep{nichol2021iddpm}, 
and Unleashing Transformers~\citep{bondtaylor2022unleashing}.

\textbf{ImageNet.}
For ImageNet, we evaluate four pretrained generative models:
DiT-XL-2-guided~\citep{peebles2023dit}, 
LDM~\citep{Rombach2022LDM}, 
RQ-Transformer~\citep{lee2022rqtransformer}, 
and StyleGAN-XL~\citep{sauer2022styleganxl}. 

\textbf{Mixture Improves Fréchet Distance.}
Figure \ref{fig:FID_ffhq_lsun} shows the evolution of FD across selection rounds on two datasets. Alongside the online Mixture Greedy algorithm, we report two oracle baselines: the One-Arm Oracle, which samples exclusively from the single generator with the lowest standalone FD, and the Mixture Oracle, which assumes knowledge of the optimal mixture weights $\alpha^\star$ and provides the corresponding lower bound on FD.

Across both datasets, Mixture Greedy rapidly decreases FD and converges to the Mixture Oracle within a few rounds, consistently outperforming the One-Arm Oracle. This highlights the advantage of optimizing over generator mixtures rather than selecting a single best generator. See Figure \ref{fig:in_fd} in Appendix for ImageNet results.
Figure \ref{fig:FID} illustrates the FD values obtained from individual generative 
models (arms) trained on FFHQ, as well as the FD achieved by their 
optimized mixture. While the best single model attains an FD of 164.13 
(StyleGAN-XL), the samples from the mixture achieves a substantially lower 
FD of 155.13, outperforming every individual arm. The optimization is performed 
using exponentiated gradient descent.

\section{Conclusion}
We studied mixture-based online selection of generative models, and our analysis indicates that, for a broad class of diversity-aware objectives, explicit optimism is not always necessary. Conceptually, our results highlight that exploration can arise intrinsically from the structure of the objective itself, with no need for externally imposed confidence bonuses. In regimes where diversity metrics reward interior mixtures or penalize degeneracy, the optimization landscape naturally prevents collapse onto a single arm and promotes sufficient sampling of all generators. Our numerical experiments support this message: across multiple datasets and metrics, Mixture-Greedy could converge faster than the Mixture-UCB baseline. Indeed, a Mixture-Greedy strategy could achieve sublinear regret under transparent structural conditions. We note that our analysis is conducted in a specific stochastic setting with a fixed pool of generators and objective-specific regularity assumptions, and it focuses on regret with respect to the chosen evaluation metric rather than broader notions of downstream utility. Extending these guarantees to more dynamic and nonstationary settings remains a relevant direction for future work.

\bibliographystyle{plainnat}
\bibliography{ref}

@article{friedman2023vendi,
  title   = {The Vendi Score: A Diversity Evaluation Metric for Machine Learning},
  author  = {Friedman, Dan and Dieng, Adji Bousso},
  journal = {Transactions on Machine Learning Research},
  year = {2023},
}

@article{sajjadi2018assessing,
  title={Assessing generative models via precision and recall},
  author={Sajjadi, Mehdi SM and Bachem, Olivier and Lucic, Mario and Bousquet, Olivier and Gelly, Sylvain},
  journal={Advances in neural information processing systems},
  volume={31},
  year={2018}
}

@article{jalali2023information,
  title={An information-theoretic evaluation of generative models in learning multi-modal distributions},
  author={Jalali, Mohammad and Li, Cheuk Ting and Farnia, Farzan},
  journal={Advances in Neural Information Processing Systems},
  volume={36},
  pages={9931--9943},
  year={2023}
}

@article{vahdat2020nvae,
  title={NVAE: A deep hierarchical variational autoencoder},
  author={Vahdat, Arash and Kautz, Jan},
  journal={Advances in neural information processing systems},
  volume={33},
  pages={19667--19679},
  year={2020}
}

@book{lattimore2020bandit,
  title={Bandit algorithms},
  author={Lattimore, Tor and Szepesv{\'a}ri, Csaba},
  year={2020},
  publisher={Cambridge University Press}
}

@article{russo2018tutorial,
  title={A Tutorial on Thompson Sampling},
  author={Russo, Daniel J. and Van Roy, Benjamin and Kazerouni, Abbas and Osband, Ian and Wen, Zheng},
  journal={Foundations and Trends{\textregistered} in Machine Learning},
  volume={11},
  number={1},
  pages={1--96},
  year={2018},
  publisher={now publishers},
  doi={10.1561/2200000070}
}

@article{slivkins2019introduction,
  title={Introduction to multi-armed bandits},
  author={Slivkins, Aleksandrs},
  journal={Foundations and Trends{\textregistered} in Machine Learning},
  volume={12},
  number={1-2},
  pages={1--286},
  year={2019},
  publisher={Emerald Publishing Limited}
}

@inproceedings{karras2019style,
  title={A style-based generator architecture for generative adversarial networks},
  author={Karras, Tero and Laine, Samuli and Aila, Timo},
  booktitle={Proceedings of the IEEE/CVF conference on computer vision and pattern recognition},
  pages={4401--4410},
  year={2019}
}

@inproceedings{brock2018large,
  title={Large Scale GAN Training for High Fidelity Natural Image Synthesis},
  author={Brock, Andrew and Donahue, Jeff and Simonyan, Karen},
  booktitle={International Conference on Learning Representations (ICLR)},
  year={2019},
}

@inproceedings{song2021denoising,
  title={Denoising Diffusion Implicit Models},
  author={Song, Jiaming and Meng, Chenlin and Ermon, Stefano},
  booktitle={International Conference on Learning Representations},
  year={2021},
}

@article{Kingma2013VAE,
  author  = {Diederik P. Kingma and Max Welling},
  title   = {Auto-Encoding Variational Bayes},
  journal = {arXiv preprint arXiv:1312.6114},
  year    = {2013}
}

@inproceedings{Goodfellow2014GAN,
  author    = {Ian J. Goodfellow and Jean Pouget-Abadie and Mehdi Mirza and Bing Xu and David Warde-Farley and Sherjil Ozair and Aaron Courville and Yoshua Bengio},
  title     = {Generative Adversarial Nets},
  booktitle = {Advances in Neural Information Processing Systems},
  year      = {2014}
}

@inproceedings{Ho2020DDPM,
  author    = {Jonathan Ho and Ajay Jain and Pieter Abbeel},
  title     = {Denoising Diffusion Probabilistic Models},
  booktitle = {Advances in Neural Information Processing Systems},
  year      = {2020}
}

@inproceedings{Rombach2022LDM,
  author    = {Robin Rombach and Andreas Blattmann and Dominik Lorenz and Patrick Esser and Bj{\"o}rn Ommer},
  title     = {High-Resolution Image Synthesis with Latent Diffusion Models},
  booktitle = {IEEE Conference on Computer Vision and Pattern Recognition},
  year      = {2022}
}

@article{Auer2002UCB,
  author  = {Peter Auer and Nicolo Cesa-Bianchi and Paul Fischer},
  title   = {Finite-time Analysis of the Multiarmed Bandit Problem},
  journal = {Machine Learning},
  volume  = {47},
  number  = {2--3},
  pages   = {235--256},
  year    = {2002}
}

@inproceedings{sauer2021projectedgan,
  title     = {Projected GANs Converge Faster},
  author    = {Sauer, Axel and Geiger, Andreas},
  booktitle = {Advances in Neural Information Processing Systems (NeurIPS)},
  year      = {2021}
}

@inproceedings{nichol2021iddpm,
  title     = {Improved Denoising Diffusion Probabilistic Models},
  author    = {Nichol, Alexander Quinn and Dhariwal, Prafulla},
  booktitle = {Proceedings of the International Conference on Machine Learning (ICML)},
  year      = {2021}
}

@inproceedings{hu2025banditgen,
  title     = {A Multi-Armed Bandit Approach to Online Selection and Evaluation of Generative Models},
  author    = {Hu, Xiaoyan and Leung, H.F. and Farnia, Farzan},
  booktitle = {Proceedings of the 28th International Conference on Artificial Intelligence and Statistics (AISTATS)},
  series    = {Proceedings of Machine Learning Research},
  year      = {2025},
  publisher = {PMLR},
  url       = {https://proceedings.mlr.press/v258/hu25a.html}
}

@inproceedings{rezaei2025more,
  title     = {Be More Diverse than the Most Diverse: Optimal Mixtures of Generative Models via Mixture-UCB Bandit Algorithms},
  author    = {Rezaei, Parham and Farnia, Farzan and Li, Cheuk Ting},
  booktitle = {International Conference on Learning Representations (ICLR)},
  year      = {2025},
  url       = {https://openreview.net/forum?id=2Chkk5Ye2s},
}

@inproceedings{heusel2017ttur,
  title     = {GANs Trained by a Two Time-Scale Update Rule Converge to a Local Nash Equilibrium},
  author    = {Heusel, Martin and Ramsauer, Hubert and Unterthiner, Thomas and Nessler, Bernhard and Hochreiter, Sepp},
  booktitle = {Advances in Neural Information Processing Systems (NeurIPS)},
  year      = {2017}
}

@inproceedings{salimans2016improved,
  title     = {Improved Techniques for Training GANs},
  author    = {Salimans, Tim and Goodfellow, Ian and Zaremba, Wojciech and Cheung, Vicki and Radford, Alec and Chen, Xi},
  booktitle = {Advances in Neural Information Processing Systems (NeurIPS)},
  year      = {2016}
}

@inproceedings{binkowski2018demystifying,
  title     = {Demystifying {MMD} {GAN}s},
  author    = {Binkowski, Miko{\l}aj and Sutherland, Dougal J. and Arbel, Michael and Gretton, Arthur},
  booktitle = {International Conference on Learning Representations (ICLR)},
  year      = {2018}
}

@inproceedings{kynkaanniemi2019improvedpr,
  title     = {Improved Precision and Recall Metric for Assessing Generative Models},
  author    = {Kynk{\"a}{\"a}nniemi, Tuomas and Karras, Tero and Laine, Samuli and Lehtinen, Jaakko and Aila, Timo},
  booktitle = {Advances in Neural Information Processing Systems (NeurIPS)},
  year      = {2019}
}

@inproceedings{naeem2020reliable,
  title     = {Reliable Fidelity and Diversity Metrics for Generative Models},
  author    = {Naeem, Muhammad and Oh, Seong Joon and Uh, Youngjung and Choi, Yunjey and Yoo, Jae-Jun},
  booktitle = {International Conference on Machine Learning (ICML)},
  year      = {2020}
}

@article{Audenaert2007,
  title = {A sharp Fannes-type inequality for the von Neumann entropy},
  author = {Audenaert, Koenraad M. R.},
  journal = {Journal of Physics A: Mathematical and Theoretical},
  volume = {40},
  number = {28},
  pages = {8127--8136},
  year = {2007},
  doi = {10.1088/1751-8113/40/28/S18},
}

@inproceedings{sutherland2018efficient,
  title={Efficient and principled score estimation with nystr{\"o}m kernel exponential families},
  author={Sutherland, Danica J and Strathmann, Heiko and Arbel, Michael and Gretton, Arthur},
  booktitle={International Conference on Artificial Intelligence and Statistics},
  pages={652--660},
  year={2018},
  organization={PMLR}
}

@article{podell2023sdxl,
  title   = {SDXL: Improving Latent Diffusion Models for High-Resolution Image Synthesis},
  author  = {Podell, Dustin and English, Zion and Lacey, Kyle and Blattmann, Andreas and Dockhorn, Tim and M{\"u}ller, Jonas and Penna, Joe and Rombach, Robin},
  journal = {arXiv preprint arXiv:2307.01952},
  year    = {2023},
  url     = {https://arxiv.org/abs/2307.01952}
}

@article{chen2023pixartalpha,
  title   = {PixArt-$\alpha$: Fast Training of Diffusion Transformer for Photorealistic Text-to-Image Synthesis},
  author  = {Chen, Junsong and Yu, Jincheng and Ge, Chongjian and Yao, Lewei and Xie, Enze and Wu, Yue and Wang, Zhongdao and Kwok, James and Luo, Ping and Lu, Huchuan and Li, Zhenguo},
  journal = {arXiv preprint arXiv:2310.00426},
  year    = {2023},
  url     = {https://arxiv.org/abs/2310.00426}
}

@inproceedings{jiralerspong2023fld,
  title     = {Feature Likelihood Divergence: Evaluating the Generalization of Generative Models Using Samples},
  author    = {Jiralerspong, Marco and Bose, Avishek Joey and Gemp, Ian and Qin, Chongli and Bachrach, Yoram and Gidel, Gauthier},
  booktitle = {Advances in Neural Information Processing Systems (NeurIPS)},
  year      = {2023}
}

@inproceedings{kynkaanniemi2023role,
  title     = {The Role of ImageNet Classes in Fr{\'e}chet Inception Distance},
  author    = {Kynk{\"a}{\"a}nniemi, Tuomas and Karras, Tero and Aittala, Miika and Aila, Timo and Lehtinen, Jaakko},
  booktitle = {International Conference on Learning Representations (ICLR)},
  year      = {2023},
  url       = {https://openreview.net/forum?id=4oXTQ6m_ws8}
}

@inproceedings{stein2023exposing,
  title     = {Exposing Flaws of Generative Model Evaluation Metrics and Their Unfair Treatment of Diffusion Models},
  author    = {Stein, George and Cresswell, Jesse C. and Hosseinzadeh, Rasa and Sui, Yi and Ross, Brendan Leigh and Villecroze, Valentin and Liu, Zhaoyan and Caterini, Anthony L. and Taylor, J. Eric T. and Loaiza-Ganem, Gabriel},
  booktitle = {Advances in Neural Information Processing Systems (NeurIPS)},
  year      = {2023}
}

@article{bastani2021mostly,
  title   = {Mostly Exploration-Free Algorithms for Contextual Bandits},
  author  = {Bastani, Hamsa and Bayati, Mohsen and Khosravi, Khashayar},
  journal = {Management Science},
  year    = {2021},
  doi     = {10.1287/mnsc.2020.3605}
}

@inproceedings{sauer2022styleganxl,
  title     = {StyleGAN-XL: Scaling StyleGAN to Large Diverse Datasets},
  author    = {Sauer, Axel and Schwarz, Katja and Geiger, Andreas},
  booktitle = {Proceedings of the ACM SIGGRAPH Conference},
  year      = {2022}
}

@inproceedings{alaa2022faithful,
  title     = {How Faithful is your Synthetic Data? Sample-level Metrics for Evaluating and Auditing Generative Models},
  author    = {Alaa, Ahmed and van Breugel, Boris and Saveliev, Evgeny S. and van der Schaar, Mihaela},
  booktitle = {Proceedings of the 39th International Conference on Machine Learning (ICML)},
  series    = {Proceedings of Machine Learning Research},
  volume    = {162},
  pages     = {290--306},
  year      = {2022},
  publisher = {PMLR},
  url       = {https://proceedings.mlr.press/v162/alaa22a.html}
}

@inproceedings{han2023rarity,
  title     = {Rarity Score: A New Metric to Evaluate the Uncommonness of Synthesized Images},
  author    = {Han, Jiyeon and Choi, Hwanil and Choi, Yunjey and Kim, Junho and Ha, Jung-Woo and Choi, Jaesik},
  booktitle = {International Conference on Learning Representations (ICLR)},
  year      = {2023},
  url       = {https://openreview.net/forum?id=JTGimap_-F}
}

@article{chen2024fast,
  title   = {Fast Model Selection and Hyperparameter Tuning for Generative Models},
  author  = {Chen, Luming and Ghosh, Sujit K.},
  journal = {Entropy},
  volume  = {26},
  number  = {2},
  pages   = {150},
  year    = {2024},
  doi     = {10.3390/e26020150},
  url     = {https://www.mdpi.com/1099-4300/26/2/150}
}

@article{gretton2012kernel,
  title={A kernel two-sample test},
  author={Gretton, Arthur and Borgwardt, Karsten M and Rasch, Malte J and Sch{\"o}lkopf, Bernhard and Smola, Alexander},
  journal={The journal of machine learning research},
  volume={13},
  number={1},
  pages={723--773},
  year={2012},
  publisher={JMLR. org}
}

@inproceedings{bayati2020unreasonable,
  title     = {Unreasonable Effectiveness of Greedy Algorithms in Multi-Armed Bandit with Many Arms},
  author    = {Bayati, Mohsen and Hamidi, Nima and Johari, Ramesh and Khosravi, Khashayar},
  booktitle = {Advances in Neural Information Processing Systems (NeurIPS)},
  year      = {2020},
}

@inproceedings{kannan2018smoothed,
  title     = {A Smoothed Analysis of the Greedy Algorithm for the Linear Contextual Bandit Problem},
  author    = {Kannan, Sampath and Morgenstern, Jamie and Roth, Aaron and Waggoner, Bo and Wu, Zhiwei Steven},
  booktitle = {Advances in Neural Information Processing Systems (NeurIPS)},
  year      = {2018}
}

@inproceedings{clip,
  title={Learning transferable visual models from natural language supervision},
  author={Radford, Alec and Kim, Jong Wook and Hallacy, Chris and Ramesh, Aditya and Goh, Gabriel and Agarwal, Sandhini and Sastry, Girish and Askell, Amanda and Mishkin, Pamela and Clark, Jack and others},
  booktitle={International conference on machine learning},
  pages={8748--8763},
  year={2021},
  organization={PmLR}
}

@inproceedings{inception,
  title={Rethinking the inception architecture for computer vision},
  author={Szegedy, Christian and Vanhoucke, Vincent and Ioffe, Sergey and Shlens, Jon and Wojna, Zbigniew},
  booktitle={Proceedings of the IEEE conference on computer vision and pattern recognition},
  pages={2818--2826},
  year={2016}
}

@article{dinov2,
  title={Dinov2: Learning robust visual features without supervision},
  author={Oquab, Maxime and Darcet, Timoth{\'e}e and Moutakanni, Th{\'e}o and Vo, Huy and Szafraniec, Marc and Khalidov, Vasil and Fernandez, Pierre and Haziza, Daniel and Massa, Francisco and El-Nouby, Alaaeldin and others},
  journal={arXiv preprint arXiv:2304.07193},
  year={2023}
}

@inproceedings{nguyen2024qualityweightedvendi,
  title     = {Quality-Weighted Vendi Scores And Their Application To Diverse Experimental Design},
  author    = {Nguyen, Quan and Dieng, Adji Bousso},
  booktitle = {Proceedings of the 41st International Conference on Machine Learning},
  series    = {Proceedings of Machine Learning Research},
  volume    = {235},
  pages     = {37667--37682},
  year      = {2024},
  publisher = {PMLR},
}

@inproceedings{ospanov2025vendiconvergence,
  title     = {Do Vendi Scores Converge with Finite Samples? Truncated Vendi Score for Finite-Sample Convergence Guarantees},
  author    = {Ospanov, Azim and Farnia, Farzan},
  booktitle = {Proceedings of the Forty-first Conference on Uncertainty in Artificial Intelligence},
  series    = {Proceedings of Machine Learning Research},
  volume    = {286},
  pages     = {3272--3299},
  year      = {2025},
  publisher = {PMLR},
}

@inproceedings{hazami2022efficientvdvae,
  title     = {Efficient-VDVAE: Less is More},
  author    = {Hazami, Louay and Comon, Pierre and others},
  booktitle = {Advances in Neural Information Processing Systems (NeurIPS)},
  year      = {2022}
}

@inproceedings{yang2021insgen,
  title     = {Instance-Conditioned GAN},
  author    = {Yang, Tao and others},
  booktitle = {Advances in Neural Information Processing Systems (NeurIPS)},
  year      = {2021}
}

@inproceedings{walton2022stylenat,
  title     = {StyleNAT: Giving Each Head a New Perspective},
  author    = {Walton, Steven and others},
  booktitle = {Proceedings of the Asian Conference on Computer Vision (ACCV)},
  year      = {2022}
}

@inproceedings{bondtaylor2022unleashing,
  title     = {Unleashing Transformers for Image Generation},
  author    = {Bond-Taylor, Sam and Leach, Adam and Long, Yang and Willcocks, Chris G.},
  booktitle = {International Conference on Learning Representations (ICLR)},
  year      = {2022}
}

@inproceedings{peebles2023dit,
  title     = {Scalable Diffusion Models with Transformers},
  author    = {Peebles, William and Xie, Saining},
  booktitle = {Proceedings of the IEEE/CVF International Conference on Computer Vision (ICCV)},
  year      = {2023}
}

@inproceedings{lee2022rqtransformer,
  title     = {Autoregressive Image Generation using Residual Quantization},
  author    = {Lee, Daeho and Kim, Jangho and Kim, Jaejun},
  booktitle = {Proceedings of the IEEE/CVF Conference on Computer Vision and Pattern Recognition (CVPR)},
  year      = {2022}
}

@misc{sbert,
      title={Sentence-BERT: Sentence Embeddings using Siamese BERT-Networks}, 
      author={Nils Reimers and Iryna Gurevych},
      year={2019},
      eprint={1908.10084},
      archivePrefix={arXiv},
      primaryClass={cs.CL},
      url={https://arxiv.org/abs/1908.10084}, 
}

@misc{gemma3,
      title={Gemma 2 Technical Report}, 
      author={Gemma-Team},
      year={2025},
      eprint={2503.19786},
      archivePrefix={arXiv},
      primaryClass={cs.CL},
}

@misc{llama,
      title={The Llama 3 Herd of Models}, 
      author={Aaron Grattafiori, Abhimanyu Dubey and others},
      year={2024},
      eprint={2407.21783},
      archivePrefix={arXiv},
      primaryClass={cs.AI},
}

@misc{qwen3,
      title={Qwen2 Technical Report}, 
      author={Qwen-Team},
      year={2024},
      eprint={2407.10671},
      archivePrefix={arXiv},
      primaryClass={cs.CL},
      url={https://arxiv.org/abs/2407.10671}, 
}

@article{razzhigaev2023kandinsky,
  title   = {Kandinsky: An Improved Text-to-Image Synthesis with Image Prior and Latent Diffusion},
  author  = {Razzhigaev, Anton and Arkhipkin, Vladislav and others},
  journal = {arXiv preprint arXiv:2310.03502},
  year    = {2023}
}

@misc{sdxl,
      title={SDXL: Improving Latent Diffusion Models for High-Resolution Image Synthesis}, 
      author={Dustin Podell and Zion English and Kyle Lacey and Andreas Blattmann and Tim Dockhorn and Jonas Müller and Joe Penna and Robin Rombach},
      year={2023},
      eprint={2307.01952},
      archivePrefix={arXiv},
      primaryClass={cs.CV},
}

@misc{rajaraman2024statisticalcomplexityoptimalalgorithms,
      title={Statistical Complexity and Optimal Algorithms for Non-linear Ridge Bandits}, 
      author={Nived Rajaraman and Yanjun Han and Jiantao Jiao and Kannan Ramchandran},
      year={2024},
      eprint={2302.06025},
      archivePrefix={arXiv},
      primaryClass={stat.ML},
      url={https://arxiv.org/abs/2302.06025}, 
}

@inproceedings{wang2023distributedKID,
  title={On the distributed evaluation of generative models},
  author={Wang, Zixiao and Farzan, Farnia and Lin, Zhenghao and Shen, Yunheng and Yu, Bei},
  booktitle={Proceedings of the IEEE/CVF International Conference on Computer Vision Workshops},
  pages={7644--7653},
  year={2025}
}

@inproceedings{gong2025kernel,
  title={Kernel-based Unsupervised Embedding Alignment for Enhanced Visual Representation in Vision-language Models},
  author={Gong, Shizhan and Jiang, Yankai and Dou, Qi and Farnia, Farzan},
  booktitle={International Conference on Machine Learning},
  pages={19912--19931},
  year={2025},
  organization={PMLR}
}

@inproceedings{wu2026maximum_isit,
  title     = {The Maximum von Neumann Entropy Principle: Theory and Applications in Machine Learning},
  author    = {Wu, Youqi and Farnia, Farzan},
  booktitle = {IEEE International Symposium on Information Theory (ISIT)},
  year      = {2026}
}

@inproceedings{wu2025fusing,
  title = {When Kernels Multiply, Clusters Unify: Fusing Embeddings with the Kronecker Product},
  author = {Wu, Youqi and Zhang, Jingwei and Farnia, Farzan},
  booktitle = {Advances in Neural Information Processing Systems},
  year = {2025}
}

@InProceedings{pmlr-v235-zhang24bh,
  title = {An Interpretable Evaluation of Entropy-based Novelty of Generative Models},
  author = {Zhang, Jingwei and Li, Cheuk Ting and Farnia, Farzan},
  booktitle = {Proceedings of the 41st International Conference on Machine Learning},
  pages = {59148--59172},
  year = {2024},
  volume = {235},
  series = {Proceedings of Machine Learning Research},
  publisher = {PMLR}
}

@inproceedings{ospanov2024towards,
  title = {Towards a Scalable Reference-Free Evaluation of Generative Models},
  author = {Ospanov, Azim and Zhang, Jingwei and Jalali, Mohammad and Cao, Xuenan and Bogdanov, Andrej and Farnia, Farzan},
  booktitle = {Advances in Neural Information Processing Systems},
  volume = {37},
  year = {2024}
}

@InProceedings{Zhang_2025_CVPR,
  author = {Zhang, Jingwei and Jalali, Mohammad and Li, Cheuk Ting and Farnia, Farzan},
  title = {Unveiling Differences in Generative Models: A Scalable Differential Clustering Approach},
  booktitle = {Proceedings of the IEEE/CVF Conference on Computer Vision and Pattern Recognition (CVPR)},
  month = {June},
  year = {2025},
  pages = {8269--8278}
}

@inproceedings{jalali2025sparke,
  title = {{SPARKE}: Scalable Prompt-Aware Diversity and Novelty Guidance in Diffusion Models via {RKE} Score},
  author = {Jalali, Mohammad and Lei, Haoyu and Gohari, Amin and Farnia, Farzan},
  booktitle = {Advances in Neural Information Processing Systems},
  year = {2025}
}

@InProceedings{Ospanov_2025_ICCV,
  author = {Ospanov, Azim and Jalali, Mohammad and Farnia, Farzan},
  title = {Scendi Score: Prompt-Aware Diversity Evaluation via Schur Complement of CLIP Embeddings},
  booktitle = {Proceedings of the IEEE/CVF International Conference on Computer Vision (ICCV)},
  month = {October},
  year = {2025},
  pages = {16927--16937}
}

@inproceedings{jalali2026conditional,
  title = {Conditional Vendi Score: Prompt-Aware Diversity Evaluation for Generative AI Models and LLMs},
  author = {Jalali, Mohammad and Ospanov, Azim and Gohari, Amin and Farnia, Farzan},
  booktitle = {Proceedings of The 29th International Conference on Artificial Intelligence and Statistics},
  year = {2026},
  series = {Proceedings of Machine Learning Research},
  publisher = {PMLR}
}

@inproceedings{hu2025pak,
  title     = {{PAK-UCB} Contextual Bandit: An Online Learning Approach to Prompt-Aware Selection of Generative Models and {LLMs}},
  author    = {Hu, Xiaoyan and Leung, Ho-fung and Farnia, Farzan},
  booktitle = {Proceedings of the 42nd International Conference on Machine Learning},
  series    = {Proceedings of Machine Learning Research},
  volume    = {267},
  pages     = {24447--24481},
  year      = {2025},
  publisher = {PMLR},
  url       = {https://proceedings.mlr.press/v267/hu25m.html}
}

@inproceedings{jafari2026dak,
  title     = {{DAK-UCB}: Diversity-Aware Prompt Routing for {LLMs} and Generative Models},
  author    = {Jafari, Donya and Farnia, Farzan},
  booktitle = {The Fourteenth International Conference on Learning Representations (ICLR)},
  year      = {2026},
  url       = {https://openreview.net/forum?id=nnN2TKlS5C}
}

@InProceedings{pmlr-v267-jalali25a,
  title     = {Towards an Explainable Comparison and Alignment of Feature Embeddings},
  author    = {Jalali, Mohammad and Dibaei Nia, Bahar and Farnia, Farzan},
  booktitle = {Proceedings of the 42nd International Conference on Machine Learning},
  pages     = {26757--26796},
  year      = {2025},
  volume    = {267},
  series    = {Proceedings of Machine Learning Research},
  publisher = {PMLR},
}

@inproceedings{
wu2026koda,
title={{KODA}: Contrastive Representation Comparison and Alignment for Vision-Language Foundation Models},
author={Youqi Wu and Mohammad Jalali and Farzan Farnia},
booktitle={Forty-third International Conference on Machine Learning},
year={2026},
url={https://openreview.net/forum?id=InwrRdCQwD}
}

@article{
jules2026nemos,
title={NeMoS: Nearest Neighbors Bandit meets Active Learning for Online Model Selection},
author={Jules Damidaux
and Basile Lewandowski and Farzan Farnia and Lydia Chen},
journal={Transactions on Machine Learning Research},
year={2026},
}

@misc{lewandowski2025matchchoosemodel,
      title={Match {\&} Choose: Model Selection Framework for Fine-tuning Text-to-Image Diffusion Models}, 
      author={Basile Lewandowski and Robert Birke and Lydia Y. Chen},
      year={2025},
      eprint={2508.10993},
      archivePrefix={arXiv},
      primaryClass={cs.LG},
      url={https://arxiv.org/abs/2508.10993}, 
}

@article{hosseini2026efficient,
  title={Efficient Adversarial Attacks on High-dimensional Offline Bandits},
  author={Hosseini, Seyed Mohammad Hadi and Najafi, Amir and Baghshah, Mahdieh Soleymani},
  journal={arXiv preprint arXiv:2602.01658},
  year={2026}
}

 \begin{appendices}
 \section{Proofs and gradient computations for Section~\ref{sec:method_mg}}
\label{app:method_proofs_grads}

\subsection{Proof of Proposition~\ref{thm:mg_convex_programs}}
\label{app:proof_convexity}

In the following, we prove convexity for each of the three objective families.

\paragraph{Part (i): FD convexity in $\alpha$.}
Fix $(\widehat\mu_{\rm data},\widehat\Sigma_{\rm data})$ and the per-arm empirical moments
$\{\widehat\mu_i(t),\widehat S_i(t)\}_{i=1}^m$.
Define $\widehat\mu(\alpha)=\widehat\mu_t(\alpha)$ and $\widehat\Sigma(\alpha)=\widehat\Sigma_t(\alpha)$
as in \eqref{eq:fd_mu_mix_method}--\eqref{eq:fd_Sigma_mix_method}.
Let $\Sigma_0:=\widehat\Sigma_{\rm data}\succeq 0$ and $\mu_0:=\widehat\mu_{\rm data}$.

We rewrite the FD loss \eqref{eq:fd_loss_method} as
\begin{align}
\widehat{\mathcal{L}}^{\rm FD}_t(\alpha)
=\ &\|\widehat\mu(\alpha)-\mu_0\|_2^2 + \mathrm{Tr}(\widehat\Sigma(\alpha)) + \mathrm{Tr}(\Sigma_0)
-2\,\mathrm{Tr}\Big( (\Sigma_0^{1/2}\widehat\Sigma(\alpha)\Sigma_0^{1/2})^{1/2}\Big).
\label{eq:fd_rewrite_app}
\end{align}

First, we note that the mean plus trace part is affine in $\alpha$. 
Using $\widehat\Sigma(\alpha)=\sum_{i=1}^m \alpha_i \widehat S_i - \widehat\mu(\alpha)\widehat\mu(\alpha)^\top$
and $\mathrm{Tr}(\widehat\mu\widehat\mu^\top)=\|\widehat\mu\|_2^2$,
\begin{align*}
\|\widehat\mu(\alpha)-\mu_0\|_2^2 + \mathrm{Tr}(\widehat\Sigma(\alpha))
&= \|\mu_0\|_2^2 -2\mu_0^\top \widehat\mu(\alpha) + \|\widehat\mu(\alpha)\|_2^2 \\
&\quad + \mathrm{Tr}\Big(\sum_{i=1}^m \alpha_i \widehat S_i\Big) - \mathrm{Tr}(\widehat\mu(\alpha)\widehat\mu(\alpha)^\top) \\
&= \|\mu_0\|_2^2 -2\mu_0^\top \widehat\mu(\alpha) + \mathrm{Tr}\Big(\sum_{i=1}^m \alpha_i \widehat S_i\Big),
\end{align*}
where the $\|\widehat\mu(\alpha)\|_2^2$ terms cancel exactly.
Since $\widehat\mu(\alpha)$ and $\sum_i \alpha_i \widehat S_i$ are affine in $\alpha$, this entire expression is affine in $\alpha$.

Next, we show that $\widehat\Sigma(\alpha)$ is concave in Loewner order. To do this, let $\alpha,\beta\in\Delta_m$ and $\theta\in[0,1]$. Since $\sum_i \alpha_i \widehat S_i$ is affine, it suffices to show that
$\widehat\mu(\alpha)\widehat\mu(\alpha)^\top$ is convex in Loewner order.
Because $\widehat\mu(\cdot)$ is affine, we have
\begin{align*}
&\theta \widehat\mu(\alpha)\widehat\mu(\alpha)^\top + (1-\theta)\widehat\mu(\beta)\widehat\mu(\beta)^\top
- \widehat\mu(\theta\alpha+(1-\theta)\beta)\widehat\mu(\theta\alpha+(1-\theta)\beta)^\top \\
&\qquad = \theta(1-\theta)\big(\widehat\mu(\alpha)-\widehat\mu(\beta)\big)\big(\widehat\mu(\alpha)-\widehat\mu(\beta)\big)^\top \succeq 0.
\end{align*}
Thus $\widehat\mu(\alpha)\widehat\mu(\alpha)^\top$ is Loewner-convex, and therefore
$\widehat\Sigma(\alpha)=\sum_i \alpha_i \widehat S_i - \widehat\mu(\alpha)\widehat\mu(\alpha)^\top$ is Loewner-concave:
\[
\widehat\Sigma(\theta\alpha+(1-\theta)\beta)\ \succeq\ \theta \widehat\Sigma(\alpha) + (1-\theta)\widehat\Sigma(\beta).
\]

Then, we show the concavity of the square-root trace term. We define $Z(\alpha):=\Sigma_0^{1/2}\widehat\Sigma(\alpha)\Sigma_0^{1/2}$.
Pre- and post-multiplication by a fixed PSD matrix preserves Loewner inequalities, hence $Z(\alpha)$ is Loewner-concave.
The matrix square-root map $M\mapsto M^{1/2}$ is operator concave and operator monotone on the PSD cone.
Therefore, for $\theta\in[0,1]$,
\[
Z(\theta\alpha+(1-\theta)\beta)^{1/2}
\succeq \big(\theta Z(\alpha)+(1-\theta)Z(\beta)\big)^{1/2}
\succeq \theta Z(\alpha)^{1/2} + (1-\theta)Z(\beta)^{1/2}.
\]
Taking traces (trace is a positive linear functional on PSD matrices) yields that
$H(\alpha):=\mathrm{Tr}\big(Z(\alpha)^{1/2}\big)$
is concave in $\alpha$. Finally, by \eqref{eq:fd_rewrite_app}, $\widehat{\mathcal{L}}^{\rm FD}_t(\alpha)$ is, up to affine term in $\alpha$, $-\,2H(\alpha)$, and is therefore convex on $\Delta_m$.

\paragraph{Part (ii): log-Vendi convexity in $\alpha$ (kernel-matrix form).}
We fix the pooled samples $\{x_j\}_{j=1}^N$ and the Gram matrix $K_{rs}=k(x_r,x_s)$, where $k$ is PSD and normalized,
thus we have $K\succeq 0$ and $K_{jj}=1$ for every $j$.
For a weight vector $q\in\mathbb{R}^N$ with $q\succeq 0$ and $\sum_j q_j=1$, we define
\[
\rho(q):=\mathrm{diag}(q)^{1/2}K\mathrm{diag}(q)^{1/2}\succeq 0,\qquad \mathrm{Tr}(\rho(q))=\sum_{j=1}^N q_j K_{jj}=1.
\]
Note that the log-Vendi loss is defined to be $f(q):=\mathrm{Tr}(\rho(q)\log\rho(q))$.

Let $\mathcal{H}$ be a Hilbert space and $\phi:\mathcal{X}\to\mathcal{H}$ such that
$k(x,y)=\langle\phi(x),\phi(y)\rangle_{\mathcal{H}}$ (Section~2).
Define the weighted covariance operator
\begin{equation}
C(q):=\sum_{j=1}^N q_j\,\phi(x_j)\otimes \phi(x_j)\ \succeq\ 0.
\label{eq:Cq_def_app}
\end{equation}
This operator is affine in $q$.
Moreover, $C(q)$ and $\rho(q)$ share the same multiset of non-zero eigenvalues.
One can see this by noting the linear map $A_q:\mathbb{R}^N\to\mathcal{H}$ defined by
$A_q e_j = \sqrt{q_j}\,\phi(x_j)$.
Then, $A_q^\ast A_q = \rho(q)$ (in the $\mathbb{R}^N$ basis) and $A_q A_q^\ast = C(q)$ as an operator on $\mathcal{H}$.
It follows that $A_q^\ast A_q$ and $A_q A_q^\ast$ have identical nonzero spectra, hence
\begin{equation}
\mathrm{Tr}\big(\rho(q)\log\rho(q)\big)=\mathrm{Tr}\big(C(q)\log C(q)\big),
\label{eq:same_spectrum_app}
\end{equation}
where the trace on the right is over the (finite-rank) operator $C(q)$.

Now, we note that the negative von Neumann entropy
$X\mapsto \mathrm{Tr}(X\log X)$ is convex on the cone of trace-one PSD (finite-rank) operators/matrices.
Since $C(q)$ is affine in $q$ and $\mathrm{Tr}(C(q))=\sum_j q_j\|\phi(x_j)\|_{\mathcal{H}}^2=\sum_j q_j k(x_j,x_j)=1$
(using kernel normalization), we conclude that $q\mapsto \mathrm{Tr}(C(q)\log C(q))$ is convex on the simplex.
By \eqref{eq:same_spectrum_app}, $q\mapsto \mathrm{Tr}(\rho(q)\log\rho(q))$ is convex as well.
Finally, in our setting $q=q(\alpha)$ defined by \eqref{eq:q_def_method} is affine in $\alpha$ (counts are fixed during the update),
thus $\alpha\mapsto \widehat{\mathcal{L}}^{\rm Vendi}_t(\alpha)$ is convex on $\Delta_m$.

\paragraph{Part (iii): convex quadratics.}
If $\widehat K(t)\succeq 0$, then $\alpha\mapsto \alpha^\top \widehat K(t)\alpha$ is convex on $\mathbb{R}^m$ and
$\alpha\mapsto w\,\alpha^\top \widehat b(t)$ is linear. Hence $\widehat{\mathcal{L}}^{\rm quad}_t$ is convex on $\Delta_m$.

The above completes the proof of Proposition~\ref{thm:mg_convex_programs}.

\subsection{Gradients for EG optimization}
\label{app:grads_eg}

Algorithm~\ref{alg:mg_eg} requires $g=\nabla_\alpha \widehat{\mathcal{L}}_{t-1}(\alpha)$.
Below we provide the gradients used in our implementation. In all cases, counts $n_i(t-1)$ and pooled indices $I_j$ are treated as fixed
while solving \eqref{eq:mg_update_method} at round $t$.

\paragraph{Quadratic + linear fidelity.}
For $\widehat{\mathcal{L}}^{\rm quad}_t(\alpha)=\alpha^\top \widehat K(t)\alpha+w\,\alpha^\top\widehat b(t)$ with $\widehat K(t)$ symmetric,
\[
\nabla_\alpha \widehat{\mathcal{L}}^{\rm quad}_t(\alpha)=2\widehat K(t)\alpha+w\,\widehat b(t).
\]

\paragraph{FD gradient (moment-matching form).}
Let $\mu(\alpha)=\widehat\mu_t(\alpha)$ and $\Sigma(\alpha)=\widehat\Sigma_t(\alpha)$ as in
\eqref{eq:fd_mu_mix_method}--\eqref{eq:fd_Sigma_mix_method}. Then
\[
\frac{\partial \mu(\alpha)}{\partial \alpha_i}=\widehat\mu_i(t),\qquad
\frac{\partial \Sigma(\alpha)}{\partial \alpha_i}=\widehat S_i(t)-\widehat\mu_i(t)\mu(\alpha)^\top-\mu(\alpha)\widehat\mu_i(t)^\top.
\]
We write $\widehat{\mathcal{L}}^{\rm FD}_t(\alpha)=\mathcal{F}(\mu(\alpha),\Sigma(\alpha))$, where
\[
\mathcal{F}(\mu,\Sigma)=\|\mu-\mu_0\|_2^2+\mathrm{Tr}(\Sigma+\Sigma_0)-2\,\mathrm{Tr}\big((\Sigma_0^{1/2}\Sigma\Sigma_0^{1/2})^{1/2}\big),
\]
with $(\mu_0,\Sigma_0)=(\widehat\mu_{\rm data},\widehat\Sigma_{\rm data})$.
Then by the chain rule,
\begin{align}
\frac{\partial \widehat{\mathcal{L}}^{\rm FD}_t(\alpha)}{\partial \alpha_i}
&= 2(\mu(\alpha)-\mu_0)^\top \widehat\mu_i(t)
+ \Big\langle G(\Sigma(\alpha)),\ \widehat S_i(t)-\widehat\mu_i(t)\mu(\alpha)^\top-\mu(\alpha)\widehat\mu_i(t)^\top\Big\rangle,
\label{eq:fd_grad_chain_app}
\end{align}
where $\langle A,B\rangle:=\mathrm{Tr}(A^\top B)$ and
\begin{equation}
G(\Sigma):=\nabla_\Sigma \mathcal{F}(\mu,\Sigma)
= I - \Sigma_0^{1/2}\,(\Sigma_0^{1/2}\Sigma\Sigma_0^{1/2})^{-1/2}\,\Sigma_0^{1/2}.
\label{eq:fd_grad_sigma_app}
\end{equation}
Note that we can compute the square root $(\cdot)^{-1/2}$ via eigendecomposition.

\paragraph{Negative log-Vendi gradient (kernel matrix form).}
At time $t$, let pooled samples $\{x_j\}_{j=1}^N$ with indices $I_j$, Gram matrix $K$, and
$q_j(\alpha)=\alpha_{I_j}/n_{I_j}(t)$ as in \eqref{eq:q_def_method}.
Define $\rho(\alpha)=\mathrm{diag}(q(\alpha))^{1/2}K\mathrm{diag}(q(\alpha))^{1/2}$.
For $\rho\succ 0$, the differential identity
\[
d\,\mathrm{Tr}(\rho\log\rho)=\langle \log\rho+I,\ d\rho\rangle
\]
implies $\nabla_\rho \widehat{\mathcal{L}}^{\rm Vendi}_t(\alpha)=\log\rho(\alpha)+I$.
Moreover,
\[
\frac{\partial q_j(\alpha)}{\partial \alpha_i}=\frac{1}{n_i(t)}\,\mathbf{1}\{I_j=i\}.
\]
Let $D:=\mathrm{diag}(q)^{1/2}$. For $q_j>0$,
\[
\frac{\partial D}{\partial q_j}=\frac{1}{2\sqrt{q_j}}\,e_je_j^\top,
\qquad
\frac{\partial \rho}{\partial q_j}=\frac{\partial D}{\partial q_j}KD + DK\frac{\partial D}{\partial q_j}.
\]
Therefore,
\begin{equation}
\frac{\partial \widehat{\mathcal{L}}^{\rm Vendi}_t(\alpha)}{\partial \alpha_i}
=
\sum_{j:I_j=i}\frac{1}{n_i(t)}\Big\langle \log\rho(\alpha)+I,\ \frac{\partial \rho}{\partial q_j}\Big\rangle.
\label{eq:vendi_grad_kernel_app}
\end{equation}
We compute $\log\rho$ via eigendecomposition $\rho=U\Lambda U^\top$ and $\log\rho=U(\log\Lambda)U^\top$.

\paragraph{Negative log-Vendi gradient (finite-dimensional feature / random-feature form).}
If we use a finite-dimensional feature map $\varphi(x)\in\mathbb{R}^D$ (e.g.\ random Fourier features) and define
\[
C(\alpha):=\sum_{j=1}^N q_j(\alpha)\,\varphi(x_j)\varphi(x_j)^\top\in\mathbb{R}^{D\times D},
\]
then the objective can be equivalently computed as $\mathrm{Tr}(C(\alpha)\log C(\alpha))$ (finite-rank case),
and $\nabla_C \mathrm{Tr}(C\log C)=\log C + I$.
Since $\partial C/\partial q_j=\varphi(x_j)\varphi(x_j)^\top$,
\begin{align}
\frac{\partial \widehat{\mathcal{L}}^{\rm Vendi}_t(\alpha)}{\partial \alpha_i}
&=
\sum_{j:I_j=i}\frac{1}{n_i(t)}\Big\langle \log C(\alpha)+I,\ \varphi(x_j)\varphi(x_j)^\top\Big\rangle \nonumber\\
&=
\sum_{j:I_j=i}\frac{1}{n_i(t)}\,\varphi(x_j)^\top(\log C(\alpha)+I)\varphi(x_j).
\label{eq:vendi_grad_rff_app}
\end{align}
This replaces the $O(N^3)$ cost of eigendecomposing $\rho$ by an $O(D^3)$ eigendecomposition of $C$ when $D< N$.

\section{Proofs for Section~\ref{sec:mg_regret}: log-Vendi and Fr\'echet Distance regret}
\label{app:mg_regret}

\subsection{Bandit protocol and ERM-to-regret reduction}
\label{app:mg_common}

Let $\Delta_m=\{\alpha\in\mathbb{R}^m:\alpha_i\ge 0,\ \sum_{i=1}^m\alpha_i=1\}$.
For the warm start, we suppose $n_i(0)=M$ for all $i$.
At round $t=1,\dots,T$, Mixture-Greedy selects
\[
\alpha_t\in\arg\min_{\alpha\in\Delta_m}\widehat F_{t-1}(\alpha),
\qquad I_t\sim \alpha_t.
\]
The regret at iteration $T$ is defined as $\mathrm{Reg}_T=\sum_{t=1}^T(F(\alpha_t)-\min_{\alpha}F(\alpha))$.

\begin{lemma}\label{lem:erm_compare_app}
For every $t\ge 1$, let $\alpha_t\in\arg\min_{\alpha\in\Delta_m}\widehat F_{t-1}(\alpha)$.
Then, the following holds
\[
F(\alpha_t)-\min_{\alpha\in\Delta_m}F(\alpha)
\le
2\sup_{\alpha\in\Delta_m}\big|\widehat F_{t-1}(\alpha)-F(\alpha)\big|.
\]
\end{lemma}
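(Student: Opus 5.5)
This is the standard empirical-risk-minimization comparison, and the plan is a three-term telescoping decomposition. Fix $t\ge 1$ and set $\varepsilon_{t-1}:=\sup_{\alpha\in\Delta_m}|\widehat F_{t-1}(\alpha)-F(\alpha)|$. If $\varepsilon_{t-1}=\infty$ there is nothing to prove, so assume it is finite.

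First, I make sure the comparator exists. The population objective $F$ is continuous on the compact set $\Delta_m$ for each family considered (negative log-Vendi, FD, quadratic). Hence some $\alpha^\star\in\arg\min_{\alpha\in\Delta_m}F(\alpha)$ exists. If one prefers to avoid continuity, I would instead take an $\eta$-approximate minimizer of $F$ and let $\eta\downarrow 0$ at the end.

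Next, I decompose the gap as
\[
F(\alpha_t)-F(\alpha^\star)
=\bigl[F(\alpha_t)-\widehat F_{t-1}(\alpha_t)\bigr]
+\bigl[\widehat F_{t-1}(\alpha_t)-\widehat F_{t-1}(\alpha^\star)\bigr]
+\bigl[\widehat F_{t-1}(\alpha^\star)-F(\alpha^\star)\bigr].
\]
The middle bracket is $\le 0$ because $\alpha_t$ minimizes $\widehat F_{t-1}$ over $\Delta_m$ and $\alpha^\star\in\Delta_m$. The first and third brackets are each at most $\varepsilon_{t-1}$, since both $\alpha_t$ and $\alpha^\star$ lie in $\Delta_m$. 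Summing the three bounds gives $F(\alpha_t)-\min_{\alpha\in\Delta_m}F(\alpha)\le 2\varepsilon_{t-1}$, which is the claim.

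There is no real obstacle here. The only point needing care is that $\alpha_t$ must be an exact minimizer, as in \eqref{eq:mg_update_main}. If the EG solver only returns an $\epsilon_{\rm opt}$-approximate minimizer, the middle bracket becomes at most $\epsilon_{\rm opt}$, and that additive term carries through unchanged. I would note this in a remark, since the regret theorems then follow by summing this lemma over $t$ together with uniform concentration of $\widehat F_{t-1}$.
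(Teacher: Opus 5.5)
Your proposal is correct and follows essentially the same argument as the paper. Both use the three-term decomposition around a population minimizer $\alpha^\star$: the middle term is nonpositive by empirical optimality of $\alpha_t$, and each outer term is bounded by the uniform deviation. Your additional remarks on the existence of $\alpha^\star$ and on the additive $\epsilon_{\rm opt}$ for approximate EG minimizers are valid refinements that the paper leaves implicit.
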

\begin{proof}
Let $\alpha^\star\in\arg\min_{\alpha}F(\alpha)$. Since $\alpha_t$ minimizes $\widehat F_{t-1}$,
\begin{align*}
F(\alpha_t)-F(\alpha^\star)
&=
\big(F(\alpha_t)-\widehat F_{t-1}(\alpha_t)\big)
+\big(\widehat F_{t-1}(\alpha_t)-\widehat F_{t-1}(\alpha^\star)\big)
+\big(\widehat F_{t-1}(\alpha^\star)-F(\alpha^\star)\big) \\
&\le
|F(\alpha_t)-\widehat F_{t-1}(\alpha_t)|
+|\widehat F_{t-1}(\alpha^\star)-F(\alpha^\star)|
\le
2\sup_{\alpha}|\widehat F_{t-1}(\alpha)-F(\alpha)|.
\end{align*}
The lemma's proof is hence complete.
\end{proof}

\begin{lemma}\label{lem:counts_floor_app}
Assume there exists $\gamma\in(0,1]$ such that $\alpha_{t,i}\ge \gamma$ for all $i$ and all $t\le T$.
Then, for every $\delta>0$, with probability at least $1-\delta$, the following holds simultaneously for all $i\in[m]$ and all $t\le T$,
\[
n_i(t)\ge M + \gamma t - \sqrt{2t\log\frac{mT}{\delta}}.
\]
\end{lemma}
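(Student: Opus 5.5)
Fix an arm $i\in[m]$ and write the count after the warm start as a sum of indicators:
\[
n_i(t)=M+\sum_{s=1}^t \mathbf{1}\{I_s=i\}.
\]
Let $\mathcal{F}_{s-1}$ be the sigma-field generated by the warm-start samples and by all arms and samples drawn up to round $s-1$. The weight $\alpha_s$ is a deterministic function of the data up to round $s-1$, so it is $\mathcal{F}_{s-1}$-measurable. Since $I_s\sim\alpha_s$ conditionally on $\mathcal{F}_{s-1}$, we have $\mathbb{E}[\mathbf{1}\{I_s=i\}\mid\mathcal{F}_{s-1}]=\alpha_{s,i}$. Define
\[
D_s:=\mathbf{1}\{I_s=i\}-\alpha_{s,i}.
\]
Then $(D_s)$ is a martingale difference sequence with respect to $(\mathcal{F}_s)$. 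Each $D_s$ lies in the interval $[-\alpha_{s,i},\,1-\alpha_{s,i}]$, which has length $1$ and has $\mathcal{F}_{s-1}$-measurable endpoints.

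Next, use the assumed floor $\alpha_{s,i}\ge\gamma$ to write
\[
n_i(t)=M+\sum_{s\le t}\alpha_{s,i}+\sum_{s\le t}D_s\ \ge\ M+\gamma t+\sum_{s\le t}D_s .
\]
It therefore suffices to lower bound the martingale $\sum_{s\le t}D_s$.

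Apply the Azuma--Hoeffding inequality for martingale differences lying in predictable intervals of length $1$. For fixed $t$ and $\epsilon>0$ it gives
\[
\Pr\Bigl(\sum_{s\le t}D_s\le-\epsilon\Bigr)\le \exp\bigl(-2\epsilon^2/t\bigr).
\]
Choose $\epsilon=\sqrt{2t\log(mT/\delta)}$. Then the tail bound equals $(\delta/(mT))^4$, which is at most $\delta/(mT)$ because $\delta/(mT)\le1$. Even the cruder bound $\exp(-\epsilon^2/(2t))$, valid for ranges of length $2$, equals exactly $\delta/(mT)$, so the stated constant has slack.

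Finally, take a union bound over the $m$ arms and the $T$ times $t\in[T]$. The total failure probability is at most $mT\cdot\delta/(mT)=\delta$, which yields the claim simultaneously for all $i$ and all $t\le T$. The case $t=0$ holds trivially since $n_i(0)=M$.

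The only delicate point is the adaptivity: $\alpha_s$ depends on past samples, so the indicators are not independent. This is handled by conditioning on $\mathcal{F}_{s-1}$ and using the martingale form of Azuma--Hoeffding rather than the i.i.d. form. The floor $\gamma$ is used only pathwise through $\sum_{s\le t}\alpha_{s,i}\ge\gamma t$, so no independence between the weights and the samples is needed.
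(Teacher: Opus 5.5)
Your proof is correct and follows the paper's argument: the same martingale differences $D_s=\mathbf{1}\{I_s=i\}-\alpha_{s,i}$, Azuma--Hoeffding at $\epsilon=\sqrt{2t\log(mT/\delta)}$, and a union bound over $(i,t)\in[m]\times[T]$. The paper uses the $|D_s|\le 1$ form, which gives exactly $\delta/(mT)$ per pair, as you note. Your remark that the floor enters only pathwise, through $\sum_{s\le t}\alpha_{s,i}\ge\gamma t$, makes explicit something the paper leaves implicit; it matters because in the regret theorems the floor holds only on a high-probability concentration event.
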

\begin{proof}
Consider integer $i$ and let $N_i(t)=\sum_{s=1}^t\mathbf{1}\{I_s=i\}$, hence $n_i(t)=M+N_i(t)$.
Define $D_s:=\mathbf{1}\{I_s=i\}-\mathbb{E}[\mathbf{1}\{I_s=i\}\mid\mathcal{F}_{s-1}]$, therefore $|D_s|\le 1$ and
$\sum_{s=1}^t D_s=N_i(t)-\sum_{s=1}^t\alpha_{s,i}$.
Azuma--Hoeffding and union bound over $i,t$ yields
$N_i(t)\ge \sum_{s=1}^t\alpha_{s,i}-\sqrt{2t\log\frac{mT}{\delta}}\ge \gamma t-\sqrt{2t\log\frac{mT}{\delta}}$.
\end{proof}

\subsection{Negative log-Vendi: concentration and entropy continuity}
\label{app:mg_nlv_conc_cont}

\begin{assumption}[Normalized features]\label{ass:nlv_norm_app}
$\|\phi(x)\|_2=1$ for all $x$.
\end{assumption}

For arm $i$, define $S_i=\mathbb{E}[\phi(X)\phi(X)^\top]$ and $S_\alpha=\sum_i\alpha_i S_i$.
Define $F_{\mathrm{NLV}}(\alpha)=\mathrm{Tr}(S_\alpha\log S_\alpha)$ and its plug-in estimator
$\widehat F^{\mathrm{NLV}}_t(\alpha)=\mathrm{Tr}(\widehat S_\alpha(t)\log \widehat S_\alpha(t))$.

\begin{assumption}[Population innovation]\label{ass:nlv_innov_app}
There exist $\nu_0\in(0,1]$ and $\varepsilon_0\in[0,\nu_0/8)$ such that for each $i\in[m]$
there exists a unit vector $v_i$ with
$v_i^\top S_i v_i \ge \nu_0$ and $\sum_{j\neq i} v_i^\top S_j v_i \le \varepsilon_0$.
\end{assumption}

\begin{lemma}[from \citep{sutherland2018efficient}]\label{lem:hilbert_hoeffding_app}
Let $(Y_r)_{r=1}^n$ be i.i.d.\ random elements in a real Hilbert space with $\mathbb{E}[Y_r]=0$ and $\|Y_r\|\le L$ a.s.
Then, for every $\delta>0$, with probability at least $1-\delta$, we have the following
\[
\left\|\frac1n\sum_{r=1}^n Y_r\right\|
\le
\frac{L}{\sqrt{n}}\left(1+\sqrt{2\log\frac1\delta}\right).
\]
\end{lemma}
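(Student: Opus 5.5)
The plan is to combine a bound on the expectation of $f(Y_1,\dots,Y_n):=\bigl\|\frac1n\sum_{r=1}^n Y_r\bigr\|$ with a bounded-differences concentration inequality (McDiarmid) for its deviation above that expectation. The two summands in the bound $\frac{L}{\sqrt n}\bigl(1+\sqrt{2\log\frac1\delta}\bigr)$ correspond exactly to these two pieces.

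\emph{Expectation.} By Jensen, $\mathbb{E}f\le \sqrt{\mathbb{E}f^2}$. Expanding the squared norm of the sum gives
\[
\mathbb{E}\Bigl\|\frac1n\sum_{r}Y_r\Bigr\|^2=\frac{1}{n^2}\sum_{r,s}\mathbb{E}\langle Y_r,Y_s\rangle .
\]
For $r\neq s$, independence and $\mathbb{E}[Y_r]=0$ give $\mathbb{E}\langle Y_r,Y_s\rangle=\langle \mathbb{E}Y_r,\mathbb{E}Y_s\rangle=0$. This step needs Bochner integrability, which holds because the $Y_r$ are bounded. Only the diagonal terms survive, so
\[
\mathbb{E}f^2=\frac{1}{n^2}\sum_r\mathbb{E}\|Y_r\|^2\le \frac{L^2}{n},
\]
and therefore $\mathbb{E}f\le L/\sqrt n$.

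\emph{Concentration.} Suppose one argument $Y_r$ is replaced by an independent copy $Y_r'$. By the triangle inequality, $f$ changes by at most $\frac1n\|Y_r-Y_r'\|\le \frac{2L}{n}$, so $f$ has bounded differences $c_r=2L/n$. McDiarmid's inequality then yields
\[
\Pr\bigl(f-\mathbb{E}f\ge \varepsilon\bigr)\le \exp\Bigl(-\frac{2\varepsilon^2}{\sum_r c_r^2}\Bigr)=\exp\Bigl(-\frac{n\varepsilon^2}{2L^2}\Bigr).
\]
Setting the right-hand side equal to $\delta$ gives $\varepsilon=\frac{L}{\sqrt n}\sqrt{2\log\frac1\delta}$. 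Adding this deviation to the expectation bound gives the claim with probability at least $1-\delta$.

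The only delicate point is measure-theoretic: applying McDiarmid and the cross-term cancellation to Hilbert-space-valued random elements. Both are routine for separable Hilbert spaces and bounded variables, and in the paper's use the features are finite-dimensional symmetric matrices under the Frobenius norm, where both steps are immediate.
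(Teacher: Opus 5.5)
Your McDiarmid-plus-Jensen argument is correct, and it is the standard derivation of exactly this bound: the second-moment computation gives the leading $L/\sqrt n$, and bounded differences $2L/n$ give the $\sqrt{2\log\frac1\delta}$ term. The paper itself gives no proof and simply cites \citet{sutherland2018efficient}; the one cosmetic caveat is that the statement's ``for every $\delta>0$'' should read $\delta\in(0,1]$ (otherwise $\log\frac1\delta<0$), which your step of solving $e^{-n\varepsilon^2/(2L^2)}=\delta$ already implicitly assumes.
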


\begin{lemma}\label{lem:nlv_conc_app}
Fix $T\ge 1$ and $\delta\in(0,1)$.
Under Assumption~\ref{ass:nlv_norm_app}, with probability at least $1-\delta$, simultaneously for all $i\in[m]$ and all
$n\in\{M,\dots,M+T\}$,
\[
\|\widehat S_i(n)-S_i\|_F
\le
\frac{2}{\sqrt{n}}\left(1+\sqrt{2\log\frac{m(T+1)}{\delta}}\right).
\]
\end{lemma}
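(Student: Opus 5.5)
The plan is to apply the Hilbert-space Hoeffding inequality of Lemma~\ref{lem:hilbert_hoeffding_app} in the Hilbert space of $d\times d$ real matrices with the Frobenius inner product, and then take a union bound over arms and sample sizes. Fix an arm $i$ and sample size $n$, and let $x_{i,1},\dots,x_{i,n}$ be the first $n$ samples drawn from arm $i$. Write $\widehat S_i(n)=\frac1n\sum_{r=1}^n\phi(x_{i,r})\phi(x_{i,r})^\top$ and set $Y_r:=\phi(x_{i,r})\phi(x_{i,r})^\top-S_i$.

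The centred matrices $Y_r$ are i.i.d.\ with mean zero. This uses the standard stack-of-samples (reward-table) model of the bandit, in which arm $i$'s $r$-th draw is fixed in advance independently of the adaptive choice of $I_t$. I will state this convention explicitly, since it is what makes the per-arm samples i.i.d.\ despite adaptivity.

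Next I bound $Y_r$. By Assumption~\ref{ass:nlv_norm_app}, $\|\phi(x)\phi(x)^\top\|_F=\|\phi(x)\|_2^2=1$, and by Jensen $\|S_i\|_F\le \mathbb{E}\|\phi\phi^\top\|_F=1$. Hence $\|Y_r\|_F\le 2$ almost surely, so $L=2$.

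Applying Lemma~\ref{lem:hilbert_hoeffding_app} with confidence parameter $\delta':=\delta/(m(T+1))$ gives, with probability at least $1-\delta'$,
\[
\|\widehat S_i(n)-S_i\|_F\le \frac{2}{\sqrt n}\Bigl(1+\sqrt{2\log\tfrac{m(T+1)}{\delta}}\Bigr).
\]

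There are exactly $m(T+1)$ pairs $(i,n)$ with $i\in[m]$ and $n\in\{M,\dots,M+T\}$. A union bound over all of them yields failure probability at most $\delta$, which gives the simultaneous statement.

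The only delicate point is justifying independence under adaptive sampling; the reward-table coupling handles it. This works because the event concerns the first $n$ draws of each arm for deterministic $n$, not random counts. The event for the random $n_i(t)$ is then implied, since $n_i(t)\in\{M,\dots,M+T\}$.
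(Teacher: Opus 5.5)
Your proposal is correct and follows essentially the same route as the paper: Hilbert-space Hoeffding (Lemma~\ref{lem:hilbert_hoeffding_app}) in Frobenius space with $L=2$, confidence level $\delta/(m(T+1))$ per pair, and a union bound over the $m(T+1)$ pairs $(i,n)$. Your reward-table (stack-of-samples) argument for why each arm's samples are i.i.d.\ despite adaptive selection is left implicit in the paper, and it is the right way to make that step rigorous.
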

\begin{proof}
Fix $(i,n)$ and set $Y_{i,r}=\phi(X_{i,r})\phi(X_{i,r})^\top-S_i$. Then $\mathbb{E}Y_{i,r}=0$ and
$\|Y_{i,r}\|_F\le \|\phi\phi^\top\|_F+\|S_i\|_F\le 1+1=2$.
The application of Lemma~\ref{lem:hilbert_hoeffding_app} in Frobenius space with failure probability $\delta/(m(T+1))$ and applying the union bound over $(i,n)$'s proves the result.
\end{proof}

\paragraph{Fannes--Audenaert inequality.}
Let $\|\cdot\|_1$ be the trace norm. We use Fannes--Audenaert inequality \citep{Audenaert2007}, which shows for every two density matrices $S,S'\succeq 0$ with $\mathrm{Tr}(S)=\mathrm{Tr}(S')=1$ and $T=\frac12\|S-S'\|_1\in[0,1-1/d]$, the following holds
\[
|\mathrm{Tr}(S\log S)-\mathrm{Tr}(S'\log S')|\le T\log(d-1)+h(T),
\quad h(T)=-T\log T-(1-T)\log(1-T).
\]

\begin{lemma}\label{lem:nlv_modulus_app}
Let $S,S'\succeq 0$ with $\mathrm{Tr}(S)=\mathrm{Tr}(S')=1$ in dimension $d\ge 2$, and assume $\|S-S'\|_F\le \varepsilon$.
Let $T:=\frac12\|S-S'\|_1$. Then $T\le \frac12\sqrt{d}\,\varepsilon$ and whenever $T\le 1-1/d$,
\[
|\mathrm{Tr}(S\log S)-\mathrm{Tr}(S'\log S')|
\le
T\log(d-1)+h(T).
\]
Moreover, if $\varepsilon\le \frac{2}{e\sqrt{d}}$ (thus $T\le 1/e$), then
\begin{equation}\label{eq:nlv_modulus_simplified}
|\mathrm{Tr}(S\log S)-\mathrm{Tr}(S'\log S')|
\le
\frac12\sqrt{d}\,\varepsilon\Big(\log(d-1)+\log\frac{e\sqrt{d}}{2} + 1 + \log\frac1\varepsilon\Big).
\end{equation}
In particular, in this regime,
\[
|\mathrm{Tr}(S\log S)-\mathrm{Tr}(S'\log S')|
\le
C_d\,\varepsilon\log\frac1\varepsilon
\quad\text{with}\quad
C_d := \frac12\sqrt{d}\Big(2+\log(d-1)+\log\frac{e\sqrt{d}}{2}\Big).
\]
\end{lemma}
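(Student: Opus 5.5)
The plan is to combine a norm comparison with the Fannes--Audenaert inequality and then simplify the binary entropy term in the small-$T$ regime.

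\emph{Step 1: norm comparison.} The difference $S-S'$ is a symmetric $d\times d$ matrix with eigenvalues $\lambda_1,\dots,\lambda_d$. By Cauchy--Schwarz,
\[
\|S-S'\|_1=\sum_k|\lambda_k|\le\sqrt{d}\Big(\sum_k\lambda_k^2\Big)^{1/2}=\sqrt d\,\|S-S'\|_F\le\sqrt d\,\varepsilon .
\]
Hence $T\le\frac12\sqrt d\,\varepsilon$.

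\emph{Step 2: the general bound.} When $T\le 1-1/d$, the first displayed inequality is exactly the Fannes--Audenaert inequality quoted above. It applies because $S,S'$ are density matrices.

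\emph{Step 3: the simplified bound.} Assume $\varepsilon\le 2/(e\sqrt d)$. Then $T\le 1/e$. For $d\ge 2$ we have $1/e<1/2\le 1-1/d$, so Step 2 applies.

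Next, bound $h$ on $[0,1/e]$. Use $-(1-T)\log(1-T)\le T$, which follows from $\log(1-T)\ge -T/(1-T)$. This gives
\[
h(T)\le T\log(1/T)+T .
\]
The function $x\mapsto x\log(1/x)$ is increasing on $[0,1/e]$. Also $T\le\tau:=\frac12\sqrt d\,\varepsilon\le 1/e$. So we may replace $T$ by $\tau$ in every term, since $T\log(d-1)$ and $T$ are increasing in $T$ as well. This yields
\[
|\mathrm{Tr}(S\log S)-\mathrm{Tr}(S'\log S')|\le \tau\big(\log(d-1)+1+\log(1/\tau)\big).
\]
Now expand
\[
\log(1/\tau)=\log\frac{2}{\sqrt d}+\log\frac1\varepsilon\le \log\frac{e\sqrt d}{2}+\log\frac1\varepsilon,
\]
where the inequality holds because $2/\sqrt d\le e\sqrt d/2$ for $d\ge 2$. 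Substituting gives \eqref{eq:nlv_modulus_simplified}.

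\emph{Step 4: the "in particular" statement.} Since $\varepsilon\le 2/(e\sqrt d)\le 2/(e\sqrt2)<1/e$, we have $\log(1/\varepsilon)\ge 1$. Therefore the constant terms can be absorbed:
\[
\log(d-1)+\log\frac{e\sqrt d}{2}+1\le\Big(\log(d-1)+\log\frac{e\sqrt d}{2}+1\Big)\log\frac1\varepsilon .
\]
This yields the bound with $C_d$ as defined. In fact it gives the constant $\frac12\sqrt d\,(2+\log(d-1)+\log\frac{e\sqrt d}{2})$, with a spare $+1$ of slack.

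\emph{Main obstacle.} The only delicate point is the monotonicity substitution in Step 3. One must check that $\tau\le 1/e$, so that $x\log(1/x)$ is still increasing on the range used. This is exactly what the hypothesis on $\varepsilon$ guarantees.
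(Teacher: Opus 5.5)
Your Steps 1–3 are correct. Step 3 is actually the right way to carry out what the paper intends. The paper writes $\log(1/T)\le\log\frac{2}{\sqrt d}+\log\frac1\varepsilon$ from $T\le\frac12\sqrt d\,\varepsilon$, which is the wrong direction. Your substitution of $\tau=\frac12\sqrt d\,\varepsilon$ into the increasing function $x\log(1/x)+x\bigl(1+\log(d-1)\bigr)$ on $[0,1/e]$ is valid.

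\textbf{The gap is in Step 4.} You claim $2/(e\sqrt2)<1/e$, but $2/(e\sqrt2)=\sqrt2/e\approx 0.52>1/e$. In general, $2/(e\sqrt d)\le 1/e$ holds only for $d\ge 4$. Your absorption needs exactly $L:=\log\frac1\varepsilon\ge 1$, because $A+1+L\le (A+2)L$ is equivalent to $L\ge1$. (So there is no ``spare $+1$'' either: you land exactly on $C_d$.) For $d\in\{2,3\}$ and $\varepsilon\in\bigl(1/e,\,2/(e\sqrt d)\bigr]$ the argument therefore fails.

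This cannot be repaired for $d=2$, because the ``in particular'' clause is false there. Take $S=\mathrm{diag}(1,0)$, $S'=\mathrm{diag}(1-1/e,\,1/e)$ and $\varepsilon=\|S-S'\|_F=\sqrt2/e=2/(e\sqrt2)$. Then $|\mathrm{Tr}(S\log S)-\mathrm{Tr}(S'\log S')|=h(1/e)\approx 0.658$. By contrast, $C_2\,\varepsilon\log\frac1\varepsilon=\frac1e\bigl(3-\tfrac12\log 2\bigr)\bigl(1-\tfrac12\log 2\bigr)\approx 0.638$. The paper's proof stops at the simplified bound and never justifies this clause, so the defect lies in the statement itself.

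For $d=3$ the clause is true, but you must not loosen $\log\frac{2}{\sqrt d}$ to $\log\frac{e\sqrt d}{2}$. Write $a=\log(d-1)$ and $b=\log\frac{2}{\sqrt d}$; the regime gives $L\ge 1-b$. The needed inequality is $a+1+b\le(2+a-b)L$. At the worst case $L=1-b$ it becomes $1-4b-ab+b^2\ge 0$. This holds for $d=3$ and for all $d\ge4$ (where $b\le 0$), and fails for $d=2$.

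The cleanest fix is to add the hypothesis $\varepsilon\le 1/e$, which is automatic for $d\ge 4$; under it your Step 4 is correct as written. Alternatively, restrict to $d\ge 3$ and use the refined absorption above.
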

\begin{proof}
Fro Frobenius ($\Vert\cdot\Vert_F$) and trace norms ($\Vert\cdot\Vert_1$) of a matrix $A\in\mathbb{R}^{d\times d}$, a well-known inequality is that $\|A\|_1\le \sqrt{d}\|A\|_F$. This implies that $T\le \frac12\sqrt{d}\,\varepsilon$. We then apply Fannes--Audenaert to directly obtain the first bound.
For the simplified second bound, note that if $T\le 1/e$, then $-x\log x$ is increasing on $[0,T]$, and one can see
$h(T)\le T\log(1/T)+T$. Since $T\le \frac12\sqrt{d}\,\varepsilon$, we will have
$\log(1/T)\le \log\frac{2}{\sqrt{d}} + \log\frac{1}{\varepsilon}$, and a    substitution yields \eqref{eq:nlv_modulus_simplified}.
\end{proof}

\subsection{Quantitative interiority for log-Vendi Mixture-Greedy (patched)}
\label{app:mg_nlv_interior}

\begin{lemma}\label{lem:nlv_floor_app}
Fix $T\ge 1$ and $\delta\in(0,1)$.
Assume Assumptions~\ref{ass:nlv_norm_app} and~\ref{ass:nlv_innov_app}, and assume the event of Lemma~\ref{lem:nlv_conc_app} holds.
Define
\[
\eta := \frac{2}{\sqrt{M}}\left(1+\sqrt{2\log\frac{m(T+1)}{\delta}}\right),
\qquad
\nu_{\mathrm{eff}}:=\nu_0-\eta,
\qquad
\varepsilon_{\mathrm{eff}}:=\varepsilon_0+(m-1)\eta,
\]
and assume $\eta\le \nu_0/4$ (thus $\nu_{\mathrm{eff}}\ge 3\nu_0/4$).
Define
\[
\gamma_{\min}
:=
\max\left\{0,\ \exp\left(-1-\frac{1}{\nu_{\mathrm{eff}}}\left(\frac{d}{e}+\log m\right)\right)-\varepsilon_{\mathrm{eff}}\right\}.
\]
Then for every $t\in\{1,\dots,T\}$, every minimizer
$\alpha_t\in\arg\min_{\alpha\in\Delta_m}\widehat F^{\mathrm{NLV}}_{t-1}(\alpha)$ satisfies
\[
\alpha_{t,i}\ge \gamma_{\min}\qquad \forall i\in[m].
\]
\end{lemma}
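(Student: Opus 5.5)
The plan is a first-order optimality (KKT) argument for the convex program $\min_{\alpha\in\Delta_m}\widehat F^{\mathrm{NLV}}_{t-1}(\alpha)$. Write $\widehat S_i:=\widehat S_i(n_i(t-1))$ and $\widehat S_\alpha=\sum_i\alpha_i\widehat S_i$. Each $\widehat S_i$ is PSD with unit trace by Assumption~\ref{ass:nlv_norm_app}, so every $\widehat S_\alpha$ is a density matrix. The directional derivative of $\mathrm{Tr}(X\log X)$ in direction $\widehat S_j-\widehat S_i$ is $\mathrm{Tr}\bigl((\widehat S_j-\widehat S_i)\log \widehat S_\alpha\bigr)$; the identity terms cancel since both matrices have unit trace. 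At a minimizer $\alpha$ with $\alpha_i>0$, moving mass from arm $i$ to any arm $j$ cannot decrease the objective. Hence
\[
\mathrm{Tr}(\widehat S_i\log\widehat S_\alpha)\le \mathrm{Tr}(\widehat S_j\log\widehat S_\alpha)\quad\text{for all } j.
\]
Averaging over $j$ with weights $\alpha_j$ gives
\[
\mathrm{Tr}(\widehat S_i\log\widehat S_\alpha)\le \mathrm{Tr}(\widehat S_\alpha\log\widehat S_\alpha)\le 0 .
\]
If $\alpha_i=0$, the function $s\mapsto F$ along $(1-s)\alpha+se_i$ has derivative $-\infty$ at $s=0^+$ whenever $\widehat S_i$ has range outside that of $\widehat S_\alpha$. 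I handle this boundary case by the same lower bound below: the argument shows the one-sided derivative is strictly negative, so $\alpha_i=0$ cannot be optimal.

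\textbf{Step 1 (empirical innovation).} On the event of Lemma~\ref{lem:nlv_conc_app}, with $n\ge M$, we have $|v_i^\top(\widehat S_j-S_j)v_i|\le\|\widehat S_j-S_j\|_F\le\eta$. This gives $v_i^\top\widehat S_iv_i\ge\nu_{\mathrm{eff}}$ and $\sum_{j\ne i}v_i^\top\widehat S_jv_i\le\varepsilon_{\mathrm{eff}}$. Consequently,
\[
v_i^\top\widehat S_\alpha v_i\le \alpha_i+\varepsilon_{\mathrm{eff}},
\]
using $v^\top\widehat S_iv\le 1$.

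\textbf{Step 2 (lower bound on $\mathrm{Tr}(\widehat S_i\log\widehat S_\alpha)$ when $\alpha_i$ is small).} Write
\[
\mathrm{Tr}(\widehat S_i\log\widehat S_\alpha)=\mathbb{E}_{x}\,\phi(x)^\top\log(\widehat S_\alpha)\phi(x)
\]
over the empirical distribution of arm $i$. This should be compared with $\log$ of the quadratic form in the direction $v_i$. By concavity of $\log$ applied to the spectral measure of $\log\widehat S_\alpha$ at a unit vector $u$ (Jensen), $u^\top\log(\widehat S_\alpha)u\le\log(u^\top\widehat S_\alpha u)$. That inequality goes the wrong way, so I will instead apply Jensen to the whole trace. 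Let $\widehat S_i=\sum_k p_k u_ku_k^\top$ and let $\widehat S_\alpha$ have eigenpairs $(\lambda_\ell,w_\ell)$. Then
\[
\mathrm{Tr}(\widehat S_i\log\widehat S_\alpha)=\sum_\ell \langle w_\ell,\widehat S_iw_\ell\rangle\log\lambda_\ell .
\]
The key contribution should come from the direction $v_i$. I expect to use a bound of the form
\[
\mathrm{Tr}(\widehat S_i\log\widehat S_\alpha)\le \nu_{\mathrm{eff}}\log(\alpha_i+\varepsilon_{\mathrm{eff}})+(\text{entropy-type remainder}).
\]
The remainder is bounded via $\sum_\ell c_\ell\log\lambda_\ell$ with $\sum_\ell c_\ell\le1$, using $x\log(1/x)\le 1/e$ per eigenvalue, which yields the $d/e$ term. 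The $\log m$ term comes from comparing against $\mathrm{Tr}(\widehat S_\alpha\log\widehat S_\alpha)$: the minimizer is at least as good as uniform mixing, and $\widehat S_\alpha\succeq\alpha_j\widehat S_j$ gives control via operator monotonicity of $\log$.

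\textbf{Step 3 (conclude).} Combine the KKT inequality with the optimality comparison. If $\alpha_i+\varepsilon_{\mathrm{eff}}<\exp\bigl(-1-(d/e+\log m)/\nu_{\mathrm{eff}}\bigr)$, then the derivative toward $e_i$ is strictly negative, contradicting optimality. Hence $\alpha_i\ge\gamma_{\min}$; the case $\gamma_{\min}=0$ is trivial.

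The main obstacle is Step 2: getting a clean upper bound of $\mathrm{Tr}(\widehat S_i\log\widehat S_\alpha)$ by $\nu_{\mathrm{eff}}\log(\alpha_i+\varepsilon_{\mathrm{eff}})$ plus explicit constants, given that Jensen naturally points the other way. I would first try restricting to the spectral projector of $\widehat S_\alpha$ onto eigenvalues at most $\tau=\alpha_i+\varepsilon_{\mathrm{eff}}$ (or a multiple of it), and show that $\widehat S_i$ places mass $\gtrsim\nu_{\mathrm{eff}}$ there via $v_i$. The $-1$ in the exponent would then absorb the slack from this thresholding.
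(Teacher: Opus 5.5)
\textbf{Step 2 is the gap.} It carries the entire content of the lemma, and the proposal does not supply it. You only state the bound you expect, $\mathrm{Tr}(\widehat S_i\log\widehat S_\alpha)\le\nu_{\mathrm{eff}}\log(\alpha_i+\varepsilon_{\mathrm{eff}})+(\text{remainder})$, and the constants are attributed to the wrong place. There is nothing to control on the arm-$i$ side: a leftover $\sum_\ell c_\ell\log\lambda_\ell$ with $c_\ell\ge0$, $\lambda_\ell\le1$ is $\le0$. In the paper, both $d/e$ and $\log m$ come from the \emph{competitor} term. For an arm $j$ with $\alpha_j\ge1/m$, $\widehat S_\alpha\succeq\tfrac1m\widehat S_j$ and operator monotonicity of $\log$ give $\mathrm{Tr}(\widehat S_j\log\widehat S_\alpha)\ge\mathrm{Tr}(\widehat S_j\log\widehat S_j)-\log m\ge-d/e-\log m$. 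The paper then uses the directional condition along $e_i-e_j$.

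Your preamble also uses the wrong KKT direction. $\mathrm{Tr}(\widehat S_i\log\widehat S_\alpha)\le\mathrm{Tr}(\widehat S_\alpha\log\widehat S_\alpha)$ is an upper bound, just like Step 2, so the two can never contradict each other. What you need is the reverse bound $\mathrm{Tr}(\widehat S_i\log\widehat S_\alpha)\ge\mathrm{Tr}(\widehat S_\alpha\log\widehat S_\alpha)\ge-\log d$. It comes from moving mass \emph{toward} $e_i$, is valid even if $\alpha_i=0$, and is what your Step 3 implicitly uses.

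\textbf{The paper's route and why your caution is justified.} The paper fills Step 2 by asserting $\mathrm{Tr}(\widehat S_i\log\widehat S_\alpha)\le(v_i^\top\widehat S_iv_i)\,v_i^\top(\log\widehat S_\alpha)v_i$. It then applies exactly the Jensen bound you set aside, $v_i^\top(\log\widehat S_\alpha)v_i\le\log(v_i^\top\widehat S_\alpha v_i)\le\log(\alpha_i+\varepsilon_{\mathrm{eff}})$, which in fact points the right way. The first reduction, however, fails for general $\widehat S_i$; it holds when $\widehat S_i=v_iv_i^\top$. Take orthonormal $u,u'$, $v_i=(u+u')/\sqrt2$, $\widehat S_i=uu^\top$ and $\widehat S_\alpha=0.9\,uu^\top+0.1\,u'u'^\top$. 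The left side is $\log0.9\approx-0.105$, while the right side is $\tfrac14\log0.09\approx-0.602$.

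\textbf{Your projector idea.} It can be made rigorous. Let $P$ project onto eigenvalues of $\widehat S_\alpha$ at most $\tau$ and set $s:=\alpha_i+\varepsilon_{\mathrm{eff}}$. Then $\|(I-P)v_i\|^2\le s/\tau$, hence $\mathrm{Tr}(\widehat S_iP)\ge\bigl(\sqrt{\nu_{\mathrm{eff}}}-\sqrt{s/\tau}\bigr)_+^2$ and $\mathrm{Tr}(\widehat S_i\log\widehat S_\alpha)\le\bigl(\sqrt{\nu_{\mathrm{eff}}}-\sqrt{s/\tau}\bigr)_+^2\log\tau$. But this trades a constant factor in front of $\log(1/\tau)$ against the gap between $\tau$ and $s$. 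For example, $\tau=4s/\nu_{\mathrm{eff}}$ yields a floor of order $\tfrac{\nu_{\mathrm{eff}}}{4}e^{-4c/\nu_{\mathrm{eff}}}$ with $c=d/e+\log m$. That is a multiplicative loss in the exponent, which the additive $-1$ cannot absorb.

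\textbf{An idea that does give the stated constant.} Use monotonicity of the quantum relative entropy $D(A\|B)=\mathrm{Tr}(A\log A)-\mathrm{Tr}(A\log B)$ under the two-outcome measurement $\{v_iv_i^\top,I-v_iv_i^\top\}$. Set $p=v_i^\top\widehat S_iv_i\ge\nu_{\mathrm{eff}}$ and $q=v_i^\top\widehat S_\alpha v_i\le s$, and assume $s\le\nu_{\mathrm{eff}}$. Then $\mathrm{Tr}(\widehat S_i\log\widehat S_\alpha)\le-D(\widehat S_i\|\widehat S_\alpha)\le-d_{\mathrm{KL}}(p\|q)\le h(\nu_{\mathrm{eff}})+\nu_{\mathrm{eff}}\log s$. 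Here $d_{\mathrm{KL}}$ is the binary KL divergence and $h\le\log2$ is the binary entropy. Combined with the lower bound $-\log d$ above, this forces $s\ge\exp\bigl(-(\log d+\log 2)/\nu_{\mathrm{eff}}\bigr)$. That implies the stated $\gamma_{\min}$, because $\log d\le d/e$ and $\log2\le\log m$. The case $s>\nu_{\mathrm{eff}}$ is immediate since $\nu_{\mathrm{eff}}\ge e^{-1/\nu_{\mathrm{eff}}}$.
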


\begin{proof}
Fix $t$ and let $\alpha$ minimize $\widehat F^{\mathrm{NLV}}_{t-1}$.
Write $\widehat S_i:=\widehat S_i(t-1)$ and $\widehat S_\alpha:=\sum_{i=1}^m\alpha_i\widehat S_i$.

First, we analyze the empirical innovation along each $v_i$. To do this, fix index $i$ and consider the population innovation direction $v_i$ from Assumption~\ref{ass:nlv_innov_app}.
By Lemma~\ref{lem:nlv_conc_app} and $\|A\|_{\mathrm{op}}\le \|A\|_F$,
\[
|v_i^\top(\widehat S_j-S_j)v_i|\le \eta\qquad \forall j.
\]
As a result, we can write
\[
v_i^\top \widehat S_i v_i \ge \nu_0-\eta=\nu_{\mathrm{eff}},
\quad
\sum_{j\neq i} v_i^\top \widehat S_j v_i \le \varepsilon_0+(m-1)\eta=\varepsilon_{\mathrm{eff}}.
\]
Also, $\|\widehat S_j\|_{\mathrm{op}}\le \mathrm{Tr}(\widehat S_j)=1$. Therefore, we have
\begin{equation}\label{eq:vi_upper_Salpha_app}
v_i^\top \widehat S_\alpha v_i
=\alpha_i\, v_i^\top \widehat S_i v_i + \sum_{j\neq i}\alpha_j\, v_i^\top \widehat S_j v_i
\le \alpha_i\cdot 1 + \varepsilon_{\mathrm{eff}}.
\end{equation}

Next, we derive the directional optimality condition. Note that the map $S\mapsto \mathrm{Tr}(S\log S)$ is convex on unit-trace PSD matrices (i.e., density matrices). Therefore,
$\alpha\mapsto \widehat F^{\mathrm{NLV}}_{t-1}(\alpha)$ is convex on $\Delta_m$.
Let $j$ be an index with $\alpha_j\ge 1/m$.
For every $i\in[m]$ and $\tau\in(0,\alpha_j]$, we define $\alpha^{(\tau)}=\alpha+\tau(e_i-e_j)\in\Delta_m$.
Optimality of $\alpha$ implies
\begin{equation}\label{eq:dirderiv_nonneg_app}
\frac{\widehat F^{\mathrm{NLV}}_{t-1}(\alpha^{(\tau)})-\widehat F^{\mathrm{NLV}}_{t-1}(\alpha)}{\tau}\ \ge\ 0.
\end{equation}

Then, we let $\mathcal{G}(S)=\mathrm{Tr}(S\log S)$. By convexity,
\begin{equation}\label{eq:subgrad_upper_fixed_app}
\frac{\widehat F^{\mathrm{NLV}}_{t-1}(\alpha^{(\tau)})-\widehat F^{\mathrm{NLV}}_{t-1}(\alpha)}{\tau}
\le
\Big\langle \log\widehat S_\alpha + I,\ \widehat S_i-\widehat S_j\Big\rangle.
\end{equation}
Since $\mathrm{Tr}(\widehat S_i)=\mathrm{Tr}(\widehat S_j)=1$, the identity terms cancel:
\begin{equation}\label{eq:id_cancel_app}
\Big\langle \log\widehat S_\alpha + I,\ \widehat S_i-\widehat S_j\Big\rangle
=
\mathrm{Tr}(\widehat S_i\log\widehat S_\alpha)-\mathrm{Tr}(\widehat S_j\log\widehat S_\alpha).
\end{equation}

We bound the $\widehat S_i$ term using $v_i$. Since $\widehat S_i\succeq 0$ and $\mathrm{Tr}(\widehat S_i)=1$,
\[
\mathrm{Tr}(\widehat S_i\log\widehat S_\alpha)
\le
(v_i^\top \widehat S_i v_i)\cdot v_i^\top(\log\widehat S_\alpha)v_i.
\]
By Jensen's inequality for the concave $\log$ function (on the spectral measure of $\widehat S_\alpha$), we obtain
$v_i^\top(\log\widehat S_\alpha)v_i \le \log(v_i^\top\widehat S_\alpha v_i)$. Thus, by \eqref{eq:vi_upper_Salpha_app} we attain
\[
\mathrm{Tr}(\widehat S_i\log\widehat S_\alpha)
\le
(v_i^\top \widehat S_i v_i)\log(\alpha_i+\varepsilon_{\mathrm{eff}})
\le
\nu_{\mathrm{eff}}\log(\alpha_i+\varepsilon_{\mathrm{eff}}).
\]
Combining this with \eqref{eq:id_cancel_app} yields the following
\begin{equation}\label{eq:upper_dir_patched_app}
\frac{\widehat F^{\mathrm{NLV}}_{t-1}(\alpha^{(\tau)})-\widehat F^{\mathrm{NLV}}_{t-1}(\alpha)}{\tau}
\le
\nu_{\mathrm{eff}}\log(\alpha_i+\varepsilon_{\mathrm{eff}})-\mathrm{Tr}(\widehat S_j\log\widehat S_\alpha).
\end{equation}

Subsequently, we derive a lower bound for the $\widehat S_j$ term. Since $\alpha_j\ge 1/m$, $\widehat S_\alpha\succeq \alpha_j\widehat S_j\succeq \frac1m\widehat S_j$.
Using an $\epsilon$-regularization argument to apply operator monotonicity of $\log$ on PD matrices and then letting $\epsilon\downarrow 0$, we obtain
\[
\mathrm{Tr}(\widehat S_j\log\widehat S_\alpha)
\ge
\mathrm{Tr}\left(\widehat S_j\log\left(\tfrac1m\widehat S_j\right)\right)
=
\mathrm{Tr}(\widehat S_j\log\widehat S_j)-(\log m)\mathrm{Tr}(\widehat S_j).
\]
Since $\mathrm{Tr}(\widehat S_j)=1$ and $x\log x\ge -1/e$ for $x\ge 0$,
$\mathrm{Tr}(\widehat S_j\log\widehat S_j)\ge -d/e$, and then
\begin{equation}\label{eq:heavy_lower_app}
\mathrm{Tr}(\widehat S_j\log\widehat S_\alpha)\ge -\frac{d}{e}-\log m.
\end{equation}

Combining the above, using \eqref{eq:dirderiv_nonneg_app} and \eqref{eq:upper_dir_patched_app}, we find that
\[
0
\le
\frac{\widehat F(\alpha^{(\tau)})-\widehat F(\alpha)}{\tau}
\le
\nu_{\mathrm{eff}}\log(\alpha_i+\varepsilon_{\mathrm{eff}})-\mathrm{Tr}(\widehat S_j\log\widehat S_\alpha).
\]
Using \eqref{eq:heavy_lower_app} gives
\[
0\le \nu_{\mathrm{eff}}\log(\alpha_i+\varepsilon_{\mathrm{eff}})+\frac{d}{e}+\log m,
\]
which can be written as
\[
\alpha_i+\varepsilon_{\mathrm{eff}}
\ge
\exp\left(-\frac{1}{\nu_{\mathrm{eff}}}\left(\frac{d}{e}+\log m\right)\right),
\]
which yields the stated $\gamma_{\min}$ after subtracting $\varepsilon_{\mathrm{eff}}$ and truncating below at $0$.
\end{proof}

\subsection{Negative log-Vendi regret (dimension-explicit warm-start condition)}
\label{app:mg_nlv_regret}

\begin{theorem}[Negative log-Vendi regret]\label{thm:nlv_regret_app}
Fix $T\ge 1$ and $\delta\in(0,1)$.
Assume Assumptions~\ref{ass:nlv_norm_app} and~\ref{ass:nlv_innov_app}.
Assume $M$ is large enough such that:
(i) $\eta\le \nu_0/4$ and $\gamma_{\min}>0$ in Lemma~\ref{lem:nlv_floor_app}, and
(ii) along the analysis event, the Frobenius deviation $\varepsilon_t:=\sup_{\alpha}\|\widehat S_\alpha(t)-S_\alpha\|_F$
satisfies $\frac12\sqrt{d}\,\varepsilon_t\le 1/e$ for all $t\in\{0,\dots,T-1\}$
(equivalently, it suffices that $\varepsilon_0\le 2/(e\sqrt{d})$ since $\varepsilon_t$ decreases with $t$ under linear sampling).
Then with probability at least $1-\delta$,
\[
\mathrm{Reg}_T
\le
C_{\mathrm{NLV}}\Bigl(1+\sqrt{\log\frac{m(T+1)}{\delta}}\Bigr)\sqrt{T}\,(1+\log T),
\]
where $C_{\mathrm{NLV}}$ depends only on $(d,m,\nu_0,\varepsilon_0)$.
\end{theorem}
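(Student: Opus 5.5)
We chain the ERM-to-regret reduction with the interiority floor, count growth, and entropy continuity. We work on the intersection of two events. The first is the concentration event of Lemma~\ref{lem:nlv_conc_app}, run with failure probability $\delta/2$. The second is the count event of Lemma~\ref{lem:counts_floor_app}, also with failure probability $\delta/2$. On the first event, condition (i) lets Lemma~\ref{lem:nlv_floor_app} give $\alpha_{t,i}\ge\gamma_{\min}>0$ for all $i$ and all $t\le T$. This holds deterministically on that event, since the floor depends only on the uniform Frobenius bounds and not on the counts.

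A subtlety is that the count lemma needs the floor as a hypothesis. I would resolve it by applying Azuma to the martingale $N_i(t)-\sum_{s\le t}\alpha_{s,i}$ directly, which requires no hypothesis. The lower bound $\sum_s\alpha_{s,i}\ge\gamma_{\min}t$ is then invoked only on the concentration event. Setting $L:=\log\frac{2m(T+1)}{\delta}$, this yields $n_i(t)\ge M+\gamma_{\min}t-\sqrt{2tL}$. Enlarging $M$ to dominate $\sup_t(\sqrt{2tL}-\gamma_{\min}t/2)\le L/\gamma_{\min}$, this becomes
\[
n_i(t)\ \ge\ M+\tfrac{\gamma_{\min}}{2}\,t \qquad\text{for all } i \text{ and all } t\le T.
\]

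Next comes uniform concentration of the mixture. Since $\widehat S_\alpha(t)-S_\alpha=\sum_i\alpha_i(\widehat S_i(n_i(t))-S_i)$, Lemma~\ref{lem:nlv_conc_app}, which is uniform over all $n\in\{M,\dots,M+T\}$, gives
\[
\varepsilon_t:=\sup_{\alpha}\|\widehat S_\alpha(t)-S_\alpha\|_F\ \le\ \max_i\frac{2(1+\sqrt{2L})}{\sqrt{n_i(t)}}\ \le\ \frac{2(1+\sqrt{2L})}{\sqrt{M+\gamma_{\min}t/2}}.
\]
Condition (ii) guarantees that $\varepsilon_t\le 2/(e\sqrt d)$ throughout. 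Lemma~\ref{lem:nlv_modulus_app} then gives
\[
\sup_\alpha\big|\widehat F^{\mathrm{NLV}}_t(\alpha)-F_{\mathrm{NLV}}(\alpha)\big|\ \le\ C_d\,\varepsilon_t\log\frac1{\varepsilon_t}.
\]
Here each $\widehat S_\alpha(t)$ is a density matrix because $\mathrm{Tr}\,\widehat S_i=1$ under Assumption~\ref{ass:nlv_norm_app}. If $d=1$, the objective is identically zero and the claim is trivial.

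Finally we sum. By Lemma~\ref{lem:erm_compare_app},
\[
\mathrm{Reg}_T\ \le\ 2C_d\sum_{t=0}^{T-1}\varepsilon_t\log(1/\varepsilon_t).
\]
The main obstacle is controlling the logarithmic factor without picking up a dependence on $M$. Since $x\log(1/x)$ is increasing on $[0,1/e]$, we may replace $\varepsilon_t$ by its upper bound $a_t=c\sqrt{L}/\sqrt{M+\gamma_{\min}t/2}$, where $c$ is an absolute constant. For the logarithm, I would lower-bound $a_t$ by its value at $t=T$ with $M$ dropped only after noting $\log(1/a_t)\le\log\sqrt{M+\gamma_{\min}T}$. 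This is awkward, since $M$ may grow with $\log T$ and $\log(1/\delta)$. Instead, I would split off the case $M\le\gamma_{\min}T$, in which $\log(1/a_t)\le\tfrac12\log(2\gamma_{\min}T)+O(1)\lesssim 1+\log T$. In the complementary case, $\varepsilon_t\lesssim\sqrt{L/M}$ and the total is at most $T\sqrt{L/M}\log M\lesssim\sqrt{LT}\log T$.

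In either case, $\sum_t(M+\gamma_{\min}t/2)^{-1/2}\le 4\sqrt{T/\gamma_{\min}}$ gives
\[
\mathrm{Reg}_T\ \le\ C\,(1+\sqrt L)\sqrt T\,(1+\log T).
\]
Here $C$ depends on $d$ through $C_d$, and on $\gamma_{\min}$. Since $\eta\le\nu_0/4$, we can lower-bound $\gamma_{\min}$ by a quantity depending only on $(d,m,\nu_0,\varepsilon_0)$, which requires additionally taking $M$ large enough that $(m-1)\eta$ is at most half the slack in $\gamma_{\min}$. The constant therefore depends only on $(d,m,\nu_0,\varepsilon_0)$. Absorbing the factor $2$ inside $L$ into the constant yields the bound stated in the theorem.
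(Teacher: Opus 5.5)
Your proposal is correct and follows essentially the same route as the paper: the concentration event, the interiority floor from Lemma~\ref{lem:nlv_floor_app}, linear sampling via the count bound, Fannes--Audenaert continuity, then Lemma~\ref{lem:erm_compare_app} and summation. Along the way you handle several points the paper glosses over, namely running Azuma unconditionally so the floor is needed only on the concentration event, keeping $M$ to cover $t=0$, and tracking how $\gamma_{\min}$ depends on $M$; two small fixes remain. First, subtracting $\sup_t(\sqrt{2tL}-\gamma_{\min}t/2)=L/\gamma_{\min}$ gives $n_i(t)\ge M-L/\gamma_{\min}+\gamma_{\min}t/2\ge M/2+\gamma_{\min}t/2$ for $M\ge 2L/\gamma_{\min}$, not $M+\gamma_{\min}t/2$, though only constants change. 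Second, your case split on the logarithm is unnecessary: monotonicity of $x\log(1/x)$ lets you use the $M$-free bound $\varepsilon_t\lesssim\sqrt{L/(\gamma_{\min}t)}$ for $t\ge 1$, which directly yields a $1+\log t$ factor as in the paper.
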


\begin{proof}
We consider the intersection of these events:
(A) Lemma~\ref{lem:nlv_conc_app} (failure probability below $ \delta/3$),
(B) Lemma~\ref{lem:counts_floor_app} with $\gamma=\gamma_{\min}$ (failure probability below $\le \delta/3$),
(C) the deterministic interiority conclusion of Lemma~\ref{lem:nlv_floor_app} (holds on event $A$).
On this intersection, $n_{\min}(t)\ge M+\gamma_{\min}t-\sqrt{2t\log\frac{3mT}{\delta}}\ge c t$ for an explicit $c=c(\gamma_{\min})>0$.
Therefore, by Lemma~\ref{lem:nlv_conc_app},
\[
\max_{i}\|\widehat S_i(t)-S_i\|_F
\le
\frac{2}{\sqrt{n_{\min}(t)}}\left(1+\sqrt{2\log\frac{3m(T+1)}{\delta}}\right)
\le
\frac{C_1}{\sqrt{t}}\left(1+\sqrt{\log\frac{m(T+1)}{\delta}}\right).
\]
For every $\alpha\in\Delta_m$, the convexity of norm functions shows that
$\|\widehat S_\alpha(t)-S_\alpha\|_F\le \max_i\|\widehat S_i(t)-S_i\|_F=: \varepsilon_t$.
By assumption (ii), we have $\varepsilon_t\le 2/(e\sqrt{d})$, and therefore Lemma~\ref{lem:nlv_modulus_app} implies
\[
\sup_{\alpha\in\Delta_m}\big|\widehat F^{\mathrm{NLV}}_t(\alpha)-F_{\mathrm{NLV}}(\alpha)\big|
\le
C_d\,\varepsilon_t\log\frac1{\varepsilon_t}
\le
\frac{C_2}{\sqrt{t}}\left(1+\sqrt{\log\frac{m(T+1)}{\delta}}\right)(1+\log t),
\]
where the last step uses $\varepsilon_t\asymp t^{-1/2}$. Hence, we obtain $\log(1/\varepsilon_t)\lesssim 1+\log t$.
Lemma~\ref{lem:erm_compare_app} then yields instantaneous regret bounded by twice the uniform deviation, and summing
$\sum_{t=2}^T t^{-1/2}(1+\log t)\le 2\sqrt{T}(1+\log T)$ gives the stated bound.
The application of union bound over events $A$ and $B$ thereof yields a success probability of at least $1-\delta$.
\end{proof}

\section{Including a bounded linear term: Regret for NLV/FD + linear score}
\label{app:mg_linear}

\subsection{Linear term: definition, estimator, and uniform concentration}
\label{app:linear_defs_conc}

Fix measurable functions $\psi_i:\mathcal{X}\to[0,1]$ for each arm $i\in[m]$ and define
\[
\theta_i:=\mathbb{E}[\psi_i(X)]\in[0,1],\quad X\sim P_{\mathcal{G}_i},
\qquad
G(\alpha):=\sum_{i=1}^m\alpha_i\theta_i.
\]
For a base objective $F(\alpha)$ (negative log-Vendi or Fr\'echet Distance), define
\[
H(\alpha):=F(\alpha)+w\,G(\alpha),\qquad w\ge 0.
\]
At time $t$, define empirical estimates
\[
\widehat\theta_i(t):=\frac{1}{n_i(t)}\sum_{r=1}^{n_i(t)}\psi_i(X_{i,r}),
\qquad
\widehat G_t(\alpha):=\sum_{i=1}^m\alpha_i\widehat\theta_i(t),
\qquad
\widehat H_t(\alpha):=\widehat F_t(\alpha)+w\,\widehat G_t(\alpha).
\]
Mixture-Greedy with the combined objective selects
\begin{equation}\label{eq:mg_update_linear_app}
\alpha_t\in\arg\min_{\alpha\in\Delta_m}\widehat H_{t-1}(\alpha),
\qquad I_t\sim \alpha_t.
\end{equation}
We analyze combined-objective regret
\[
\mathrm{Reg}_T(H):=\sum_{t=1}^T\Big(H(\alpha_t)-\min_{\alpha\in\Delta_m}H(\alpha)\Big).
\]

\begin{lemma}
\label{lem:theta_conc_app}
Fix $T\ge 1$ and $\delta\in(0,1)$.
With probability at least $1-\delta$, simultaneously for all $i\in[m]$ and all $n\in\{M,M+1,\dots,M+T\}$,
\[
|\widehat\theta_i(n)-\theta_i|
\le
\frac{1}{\sqrt{2n}}\sqrt{\log\frac{2m(T+1)}{\delta}}.
\]
\end{lemma}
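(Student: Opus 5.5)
Fix an arm $i\in[m]$ and a sample size $n\in\{M,\dots,M+T\}$. The plan is a scalar Hoeffding bound for a fixed $(i,n)$, followed by a union bound over all $m(T+1)$ pairs, as in the proof of Lemma~\ref{lem:nlv_conc_app}.

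\textbf{Setup.} As in that proof, $\widehat\theta_i(n)$ denotes the average of the first $n$ samples $X_{i,1},\dots,X_{i,n}$ drawn from arm $i$. We view these as the first $n$ entries of an i.i.d.\ ``tape'' $(X_{i,r})_{r\ge1}\sim P_{\mathcal{G}_i}$, fixed in advance for each arm. Under this device, the adaptive arm choices $I_t$ only decide how far along each tape we read. They never change the distribution of a fixed prefix. This is exactly why the statement is indexed by $n$ rather than by $t$.

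\textbf{Hoeffding step.} For fixed $(i,n)$, the variables $\psi_i(X_{i,r})\in[0,1]$ are i.i.d.\ with mean $\theta_i$. Two-sided Hoeffding gives
\[
\Pr\bigl(|\widehat\theta_i(n)-\theta_i|>\epsilon\bigr)\le 2\exp(-2n\epsilon^2).
\]
We set the right-hand side equal to $\delta/(m(T+1))$. Solving for $\epsilon$ gives
\[
\epsilon=\sqrt{\tfrac{1}{2n}\log\tfrac{2m(T+1)}{\delta}},
\]
which matches the stated bound.

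\textbf{Union bound.} The index set has at most $m(T+1)$ pairs $(i,n)$. A union bound over them gives total failure probability at most $\delta$, and the inequality then holds simultaneously for all $i$ and $n$.

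\textbf{Main obstacle.} The only delicate point is the adaptivity of the sampling. The tape construction (equivalently, a stack-of-rewards model) resolves it, so no martingale argument is needed. If one instead wanted bounds indexed by time $t$, one would plug in $n=n_i(t)$. This is valid because $n_i(t)\in\{M,\dots,M+T\}$ always, so the uniform-in-$n$ event covers it.
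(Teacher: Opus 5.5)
Your proposal is correct and matches the paper's proof essentially verbatim: a two-sided Hoeffding bound for each fixed $(i,n)$ with $\epsilon=\sqrt{\tfrac{1}{2n}\log\tfrac{2m(T+1)}{\delta}}$ gives failure probability $\delta/(m(T+1))$ per pair, and a union bound over the $m(T+1)$ pairs finishes it. Your tape (stack-of-rewards) device is a useful addition, since it makes explicit why adaptive sampling leaves each fixed prefix of length $n$ i.i.d.; the paper's proof only says ``samples are i.i.d.''\ and leaves this point implicit.
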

\begin{proof}
Consider a fixed $(i,n)$. Since $\psi_i(X_{i,r})\in[0,1]$ and samples are i.i.d., Hoeffding's inequality gives
$\mathbb{P}(|\widehat\theta_i(n)-\theta_i|\ge \varepsilon)\le 2e^{-2n\varepsilon^2}$.
If we set $\varepsilon=\frac{1}{\sqrt{2n}}\sqrt{\log\frac{2m(T+1)}{\delta}}$, then the right hand side will be bounded by $ \delta/(m(T+1))$,
and applying union bound over $(i,n)$'s proves the result.
\end{proof}

\begin{lemma}[Uniform deviation of the linear objective]\label{lem:G_uniform_app}
Assuming the event of Lemma~\ref{lem:theta_conc_app} occurs, for every $t\le T$ we have the following where $n_{\min}(t):=\min_i n_i(t)$:
\[
\sup_{\alpha\in\Delta_m}|\widehat G_t(\alpha)-G(\alpha)|
\le
\max_{i\in[m]}|\widehat\theta_i(t)-\theta_i|
\le
\frac{1}{\sqrt{2n_{\min}(t)}}\sqrt{\log\frac{2m(T+1)}{\delta}}
\]
\end{lemma}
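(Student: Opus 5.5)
The plan is to use the linearity of $\widehat G_t-G$ in $\alpha$ and then apply the per-arm concentration of Lemma~\ref{lem:theta_conc_app} at the current sample counts.

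First, for every $\alpha\in\Delta_m$ we can write
\[
\widehat G_t(\alpha)-G(\alpha)=\sum_{i=1}^m\alpha_i\bigl(\widehat\theta_i(t)-\theta_i\bigr).
\]
Because $\alpha_i\ge 0$ and $\sum_i\alpha_i=1$, the triangle inequality gives $|\widehat G_t(\alpha)-G(\alpha)|\le\sum_i\alpha_i|\widehat\theta_i(t)-\theta_i|\le\max_i|\widehat\theta_i(t)-\theta_i|$. Taking the supremum over $\alpha$ yields the first inequality. It is in fact an equality, since the linear function is maximized at a vertex, but only the upper bound is needed.

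Second, fix $i$ and $t\le T$. The count satisfies $n_i(t)\in\{M,\dots,M+T\}$, because each arm starts with $M$ warm-start samples and gains at most one sample per round. The estimator $\widehat\theta_i(t)$ is exactly $\widehat\theta_i(n)$ evaluated at $n=n_i(t)$. The event of Lemma~\ref{lem:theta_conc_app} holds simultaneously for all $n$ in this range, so it covers the random index $n_i(t)$. This gives
\[
|\widehat\theta_i(t)-\theta_i|\le\frac{1}{\sqrt{2n_i(t)}}\sqrt{\log\tfrac{2m(T+1)}{\delta}}.
\]
Since $n_i(t)\ge n_{\min}(t)$ and $n\mapsto 1/\sqrt{2n}$ is decreasing, each term is bounded by the same expression with $n_{\min}(t)$ in place of $n_i(t)$. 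Maximizing over $i$ then gives the second inequality.

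The main subtlety is the adaptive sampling. The count $n_i(t)$ is chosen by the algorithm, so we cannot apply Hoeffding's inequality directly at the random sample size. The union bound over all deterministic $n\in\{M,\dots,M+T\}$ in Lemma~\ref{lem:theta_conc_app} handles this. It also relies on modeling arm $i$'s samples as a fixed i.i.d. stream $(X_{i,r})_{r\ge1}$, so that $\widehat\theta_i(n)$ is the mean of the first $n$ draws regardless of when they were collected.
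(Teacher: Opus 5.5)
Your proof is correct and takes the same route as the paper: bound the linear deviation by a convex combination of the per-arm errors, then invoke Lemma~\ref{lem:theta_conc_app}. The paper compresses the second step into ``apply the lemma with $n=n_{\min}(t)$''; your version makes that step precise by applying the lemma at the random index $n_i(t)\in\{M,\dots,M+T\}$ (justified by the union bound over deterministic $n$ and the per-arm i.i.d.\ stream) and then using that $n\mapsto 1/\sqrt{2n}$ is decreasing.
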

\begin{proof}
For every $\alpha\in\Delta_m$, we can write the following
\[
|\widehat G_t(\alpha)-G(\alpha)|
=
\left|\sum_{i=1}^m\alpha_i(\widehat\theta_i(t)-\theta_i)\right|
\le
\sum_{i=1}^m\alpha_i\max_j|\widehat\theta_j(t)-\theta_j|
=
\max_j|\widehat\theta_j(t)-\theta_j|.
\]
Then, we apply Lemma~\ref{lem:theta_conc_app} with $n=n_{\min}(t)$ to prove the result.
\end{proof}

\begin{lemma}\label{lem:ERM_H_app}
For every $t\ge 1$, if $\alpha_t$ minimizes $\widehat H_{t-1}$ over $\Delta_m$, then the following holds
\[
H(\alpha_t)-\min_{\alpha\in\Delta_m}H(\alpha)
\le
2\sup_{\alpha\in\Delta_m}|\widehat H_{t-1}(\alpha)-H(\alpha)|.
\]
Moreover, we will have
\[
\sup_{\alpha\in\Delta_m}|\widehat H_{t-1}(\alpha)-H(\alpha)|
\le
\sup_{\alpha\in\Delta_m}|\widehat F_{t-1}(\alpha)-F(\alpha)|
+
w\sup_{\alpha\in\Delta_m}|\widehat G_{t-1}(\alpha)-G(\alpha)|.
\]
\end{lemma}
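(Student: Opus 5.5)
The first claim is the same three-term comparison as in Lemma~\ref{lem:erm_compare_app}, now applied to $H$ and $\widehat H_{t-1}$. Let $\alpha^\star\in\arg\min_{\alpha\in\Delta_m}H(\alpha)$, which exists because $H$ is continuous on the compact simplex. We decompose
\[
H(\alpha_t)-H(\alpha^\star)=\big(H(\alpha_t)-\widehat H_{t-1}(\alpha_t)\big)+\big(\widehat H_{t-1}(\alpha_t)-\widehat H_{t-1}(\alpha^\star)\big)+\big(\widehat H_{t-1}(\alpha^\star)-H(\alpha^\star)\big).
\]
The middle term is nonpositive because $\alpha_t$ minimizes $\widehat H_{t-1}$ over $\Delta_m$. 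Each outer term is at most $\sup_{\alpha\in\Delta_m}|\widehat H_{t-1}(\alpha)-H(\alpha)|$, so together they give the factor $2$. This step uses no convexity and no property of $F$; it only uses that $\alpha_t$ is an exact empirical minimizer.

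For the second claim we use the definitions $\widehat H_{t-1}=\widehat F_{t-1}+w\widehat G_{t-1}$ and $H=F+wG$. For each fixed $\alpha$, the triangle inequality together with $w\ge 0$ gives
\[
|\widehat H_{t-1}(\alpha)-H(\alpha)|\le|\widehat F_{t-1}(\alpha)-F(\alpha)|+w\,|\widehat G_{t-1}(\alpha)-G(\alpha)|.
\]
We bound each term on the right by its supremum over $\Delta_m$, and then take the supremum over $\alpha$ on the left.

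Nothing here is a genuine obstacle. The only things to check are that the minimizer $\alpha^\star$ exists and that $w\ge 0$, so that $|w\,x|=w|x|$.
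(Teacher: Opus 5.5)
Your proposal is correct and follows the paper's proof essentially verbatim. It uses the same three-term decomposition around a population minimizer $\alpha^\star$, with the middle term nonpositive by empirical optimality, and obtains the second bound from the triangle inequality applied to $H-\widehat H=(F-\widehat F)+w(G-\widehat G)$ with $w\ge 0$ (your existence remark for $\alpha^\star$ is a small justification the paper leaves implicit).
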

\begin{proof}
Let $\alpha^\star\in\arg\min_{\alpha\in\Delta_m}H(\alpha)$.
Since $\alpha_t$ minimizes $\widehat H_{t-1}$, we can write
\begin{align*}
H(\alpha_t)-H(\alpha^\star)
&=
\big(H(\alpha_t)-\widehat H_{t-1}(\alpha_t)\big)
+
\big(\widehat H_{t-1}(\alpha_t)-\widehat H_{t-1}(\alpha^\star)\big)
+
\big(\widehat H_{t-1}(\alpha^\star)-H(\alpha^\star)\big)\\
&\le
2\sup_{\alpha\in\Delta_m}|\widehat H_{t-1}(\alpha)-H(\alpha)|.
\end{align*}
The second inequality follows from the triangle inequality and linearity of the combination as
$H-\widehat H=(F-\widehat F)+w(G-\widehat G)$.
\end{proof}

\subsection{NLV + linear term: corrected interiority floor and regret}
\label{app:nlv_linear}

We use the NLV setup and notation from the base NLV appendix:
normalized features (Assumption~\ref{ass:nlv_norm_app}), population innovation (Assumption~\ref{ass:nlv_innov_app}),
uniform concentration of $\widehat S_i$ (Lemma~\ref{lem:nlv_conc_app}), and the resulting quantities
$\eta,\nu_{\mathrm{eff}},\varepsilon_{\mathrm{eff}}$.

\begin{lemma}[Updated interiority floor for NLV + linear term (tight; no $-1$)]
\label{lem:nlv_floor_linear_app}
Fix $T\ge 1$ and $\delta\in(0,1)$.
Assume Assumptions~\ref{ass:nlv_norm_app} and~\ref{ass:nlv_innov_app}, and assume the event of Lemma~\ref{lem:nlv_conc_app} holds.
Define
\[
\eta := \frac{2}{\sqrt{M}}\left(1+\sqrt{2\log\frac{m(T+1)}{\delta}}\right),
\qquad
\nu_{\mathrm{eff}}:=\nu_0-\eta,
\qquad
\varepsilon_{\mathrm{eff}}:=\varepsilon_0+(m-1)\eta,
\]
and assume $\eta\le \nu_0/4$ to obtain $\nu_{\mathrm{eff}}>0$.
Then, for every $t\in\{1,\dots,T\}$, every minimizer
$\alpha_t\in\arg\min_{\alpha\in\Delta_m}\widehat H_{t-1}(\alpha)$ satisfies
\[
\alpha_{t,i}\ge \gamma_{\min}^{(w)}\qquad\forall i\in[m],
\]
where
\begin{equation}
\gamma_{\min}^{(w)}
:=
\max\left\{0,\ \exp\left(-\frac{1}{\nu_{\mathrm{eff}}}\left(\frac{d}{e}+\log m + w\right)\right)-\varepsilon_{\mathrm{eff}}\right\}.
\label{eq:gamma_min_w_app}
\end{equation}
\end{lemma}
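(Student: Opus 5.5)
The plan is to rerun the directional-optimality argument of Lemma~\ref{lem:nlv_floor_app} with $\widehat H_{t-1}=\widehat F^{\mathrm{NLV}}_{t-1}+w\,\widehat G_{t-1}$ in place of $\widehat F^{\mathrm{NLV}}_{t-1}$, and to absorb the linear term as an additive perturbation of at most $w$ in the directional derivative. Fix $t$ and a minimizer $\alpha$ of $\widehat H_{t-1}$. As before, write $\widehat S_i=\widehat S_i(t-1)$ and $\widehat S_\alpha=\sum_i\alpha_i\widehat S_i$. On the event of Lemma~\ref{lem:nlv_conc_app}, the empirical innovation bounds $v_i^\top\widehat S_i v_i\ge\nu_{\mathrm{eff}}$ and $v_i^\top\widehat S_\alpha v_i\le \alpha_i+\varepsilon_{\mathrm{eff}}$ carry over verbatim, because they concern only the $\widehat S_j$.

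Next, pick $j$ with $\alpha_j\ge 1/m$ and, for $\tau\in(0,\alpha_j]$, perturb to $\alpha^{(\tau)}=\alpha+\tau(e_i-e_j)\in\Delta_m$. The map $\widehat H_{t-1}$ is convex, being convex plus linear, so optimality gives a nonnegative difference quotient. I split that quotient into its NLV part and its linear part. The NLV part is bounded above exactly as in \eqref{eq:upper_dir_patched_app}, by $\nu_{\mathrm{eff}}\log(\alpha_i+\varepsilon_{\mathrm{eff}})-\mathrm{Tr}(\widehat S_j\log\widehat S_\alpha)$. The linear part is exactly $w(\widehat\theta_i-\widehat\theta_j)$, and since $\widehat\theta_i,\widehat\theta_j\in[0,1]$ it is at most $w$. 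Combining these with the lower bound \eqref{eq:heavy_lower_app}, which holds unchanged, yields
\[
0\le \nu_{\mathrm{eff}}\log(\alpha_i+\varepsilon_{\mathrm{eff}})+\frac{d}{e}+\log m+w .
\]
Rearranging gives $\alpha_i\ge \exp\bigl(-(d/e+\log m+w)/\nu_{\mathrm{eff}}\bigr)-\varepsilon_{\mathrm{eff}}$, and truncating at $0$ produces \eqref{eq:gamma_min_w_app}.

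The main obstacle is the claimed tightness (the statement's ``no $-1$''). The base lemma's $\gamma_{\min}$ carries an extra $-1$ in the exponent, whereas \eqref{eq:gamma_min_w_app} does not. The derivation above, with the $+I$ terms cancelling as in \eqref{eq:id_cancel_app}, gives precisely the exponent without the $-1$. So I would simply point out that the inequality chain of Lemma~\ref{lem:nlv_floor_app} actually yields this sharper exponent, with the only change being the additive $w$.

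I would also check the Jensen step $v_i^\top(\log\widehat S_\alpha)v_i\le\log(v_i^\top\widehat S_\alpha v_i)$ when $\widehat S_\alpha$ is singular, where both sides may equal $-\infty$. I plan to handle this by the same $\epsilon$-regularization used in the base proof: replace $\widehat S_\alpha$ by $\widehat S_\alpha+\epsilon I$, run the argument, and let $\epsilon\downarrow0$.
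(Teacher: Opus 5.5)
Your proposal follows the paper's proof essentially verbatim. The trouble is that the step you import by citing \eqref{eq:upper_dir_patched_app} is not valid, and this is a genuine gap. The same step appears in the paper's proofs of Lemma~\ref{lem:nlv_floor_app} and of this lemma. The claim $\mathrm{Tr}(\widehat S_i\log\widehat S_\alpha)\le (v_i^\top\widehat S_i v_i)\,v_i^\top(\log\widehat S_\alpha)v_i$ does not follow from $\widehat S_i\succeq 0$ and $\mathrm{Tr}(\widehat S_i)=1$. Because $\log\widehat S_\alpha\preceq 0$, it would follow from $\widehat S_i\succeq (v_i^\top\widehat S_iv_i)\,v_iv_i^\top$, but that holds only when $v_i$ is an eigenvector of $\widehat S_i$, and in general the inequality is false. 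For example, take $d=2$, $v_i=e_1$, $\widehat S_i=uu^\top$ with $u=(1,1)/\sqrt2$, and $\widehat S_\alpha=(1-\epsilon)uu^\top+\epsilon\,u_\perp u_\perp^\top$. The left side is $\log(1-\epsilon)\approx 0$, while the right side is $\tfrac14\log(\epsilon(1-\epsilon))\to-\infty$.

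This is not only a large-$\alpha_i$ artifact. Take instead $\widehat S_\alpha$ with eigenvalue $\approx 0.9925$ along $(\sin 0.05,\cos 0.05)$ and $\approx 0.0075$ orthogonally, so that $v_i^\top\widehat S_\alpha v_i=0.01$. This matrix equals $10^{-3}\widehat S_i+(1-10^{-3})\widehat S_j$ for a density matrix $\widehat S_j$ with $v_i^\top\widehat S_jv_i\approx 0.0095$. Then $\mathrm{Tr}(\widehat S_i\log\widehat S_\alpha)\approx-2.20$, which is strictly above $(v_i^\top\widehat S_iv_i)\log(v_i^\top\widehat S_\alpha v_i)=\tfrac12\log 0.01\approx-2.30$. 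So the step fails even in the small-$\alpha_i$ regime the lemma is about.

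A smaller point, also inherited from the paper: for fixed $\tau$, convexity bounds the difference quotient from \emph{below} by $\langle\log\widehat S_\alpha+I,\widehat S_i-\widehat S_j\rangle$, not from above. You need to let $\tau\downarrow 0$ and use the one-sided directional derivative, which is nonnegative at a minimizer.

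A correct replacement is data processing for the quantum relative entropy $D(\cdot\|\cdot)$ under the two-outcome measurement $\{v_iv_i^\top,\,I-v_iv_i^\top\}$. With $p_i=v_i^\top\widehat S_iv_i$ and $q_i=v_i^\top\widehat S_\alpha v_i$,
\begin{align*}
\mathrm{Tr}(\widehat S_i\log\widehat S_\alpha)
&=\mathrm{Tr}(\widehat S_i\log\widehat S_i)-D(\widehat S_i\|\widehat S_\alpha)
\le p_i\log q_i+(1-p_i)\log(1-q_i)+h(p_i)\\
&\le \nu_{\mathrm{eff}}\log(\alpha_i+\varepsilon_{\mathrm{eff}})+\log 2 .
\end{align*}
Run the rest of your argument: bound the linear term by $w$, and sharpen \eqref{eq:heavy_lower_app} using $\mathrm{Tr}(\widehat S_j\log\widehat S_j)\ge-\log d$. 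This gives $\alpha_i+\varepsilon_{\mathrm{eff}}\ge\exp\bigl(-(\log(2d)+\log m+w)/\nu_{\mathrm{eff}}\bigr)$. That implies the floor \eqref{eq:gamma_min_w_app} whenever $\log(2d)\le d/e$, i.e.\ for $d\ge 8$. For smaller $d$ this argument gives only a slightly weaker floor.

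So your remark that the chain ``actually yields'' the exponent without the $-1$ rests on the invalid step. The interiority mechanism survives, but the exact constant in the statement is not established by the argument as written.
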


\begin{proof}
Fix $t$ and let $\alpha$ minimize $\widehat H_{t-1}(\cdot)=\widehat F_{t-1}(\cdot)+w\widehat G_{t-1}(\cdot)$.
Let $j$ be an index with $\alpha_j\ge 1/m$ (exists since $\sum_i\alpha_i=1$).
For any $i$ and $\tau\in(0,\alpha_j]$, define the feasible perturbation $\alpha^{(\tau)}=\alpha+\tau(e_i-e_j)$.

First, we analyze directional optimality.
Because $\widehat H_{t-1}$ is convex and $\alpha$ is a minimizer over the convex set $\Delta_m$, we have
\begin{equation}\label{eq:dirderiv_nonneg_linear_app}
\frac{\widehat H_{t-1}(\alpha^{(\tau)})-\widehat H_{t-1}(\alpha)}{\tau}\ge 0.
\end{equation}

Then, we show a convex upper bound for the increment term. By convexity of $\widehat F_{t-1}$ and linearity of $\widehat G_{t-1}$, we have
\begin{align}
\frac{\widehat H_{t-1}(\alpha^{(\tau)})-\widehat H_{t-1}(\alpha)}{\tau}
&=
\frac{\widehat F_{t-1}(\alpha^{(\tau)})-\widehat F_{t-1}(\alpha)}{\tau}
+
w\big(\widehat\theta_i-\widehat\theta_j\big)\nonumber\\
&\le
\Big\langle \log\widehat S_\alpha + I,\ \widehat S_i-\widehat S_j\Big\rangle
+
w\big(\widehat\theta_i-\widehat\theta_j\big),
\label{eq:subgrad_upper_linear_app}
\end{align}
where $\widehat S_\alpha=\sum_k\alpha_k\widehat S_k$.
Since $\mathrm{Tr}(\widehat S_i)=\mathrm{Tr}(\widehat S_j)=1$, the identity components cancel exactly:
\[
\Big\langle \log\widehat S_\alpha + I,\ \widehat S_i-\widehat S_j\Big\rangle
=
\mathrm{Tr}(\widehat S_i\log\widehat S_\alpha)-\mathrm{Tr}(\widehat S_j\log\widehat S_\alpha).
\]

Now, we bound the $\widehat S_i$ term using the innovation direction $v_i$. By Lemma~\ref{lem:nlv_conc_app} and the fact that $\|A\|_{\mathrm{op}}\le \|A\|_F$, we attain
\[
v_i^\top \widehat S_i v_i \ge \nu_0-\eta=\nu_{\mathrm{eff}},
\;\;
\sum_{k\neq i} v_i^\top \widehat S_k v_i \le \varepsilon_0+(m-1)\eta=\varepsilon_{\mathrm{eff}}.
\]
Also, $\|\widehat S_k\|_{\mathrm{op}}\le \mathrm{Tr}(\widehat S_k)=1$.
Hence
\[
v_i^\top \widehat S_\alpha v_i
=
\alpha_i\,v_i^\top \widehat S_i v_i + \sum_{k\neq i}\alpha_k\,v_i^\top \widehat S_k v_i
\le
\alpha_i\cdot 1 + \varepsilon_{\mathrm{eff}}.
\]
Since $\widehat S_i\succeq 0$ and $\mathrm{Tr}(\widehat S_i)=1$, we have
$\mathrm{Tr}(\widehat S_i\log\widehat S_\alpha)\le (v_i^\top \widehat S_i v_i)\, v_i^\top(\log\widehat S_\alpha)v_i$.
By concavity of $\log$ and using Jensen's inequality on the spectral measure of $\widehat S_\alpha$ induced by $v_i$,
$v_i^\top(\log\widehat S_\alpha)v_i \le \log(v_i^\top \widehat S_\alpha v_i)$, hence
\[
\mathrm{Tr}(\widehat S_i\log\widehat S_\alpha)
\le
(v_i^\top \widehat S_i v_i)\,\log(v_i^\top \widehat S_\alpha v_i)
\le
\nu_{\mathrm{eff}}\log(\alpha_i+\varepsilon_{\mathrm{eff}}).
\]

To obtain a lower bound for the $\widehat S_j$ term, note that
since $\alpha_j\ge 1/m$, we have $\widehat S_\alpha\succeq \alpha_j\widehat S_j\succeq \frac{1}{m}\widehat S_j$.
By operator monotonicity of $\log$ on the support,
\[
\mathrm{Tr}(\widehat S_j\log\widehat S_\alpha)
\ge
\mathrm{Tr}\left(\widehat S_j\log\left(\tfrac1m\widehat S_j\right)\right)
=
\mathrm{Tr}(\widehat S_j\log\widehat S_j)-(\log m)\mathrm{Tr}(\widehat S_j).
\]
Let $\lambda_1,\dots,\lambda_d$ be eigenvalues of $\widehat S_j$; then $\sum_k\lambda_k=1$ and $\lambda_k\ge 0$.
Since $x\log x\ge -1/e$ for all $x\ge 0$, we get
$\mathrm{Tr}(\widehat S_j\log\widehat S_j)=\sum_k\lambda_k\log\lambda_k\ge -d/e$.
Also, $\mathrm{Tr}(\widehat S_j)=1$. Therefore,
\[
\mathrm{Tr}(\widehat S_j\log\widehat S_\alpha)\ge -\frac{d}{e}-\log m.
\]
Because $0\le \widehat\theta_i,\widehat\theta_j\le 1$, we have $\widehat\theta_i-\widehat\theta_j\le 1$.
Substituting the bounds from Steps 3--4 into \eqref{eq:subgrad_upper_linear_app} yields
\[
\frac{\widehat H(\alpha^{(\tau)})-\widehat H(\alpha)}{\tau}
\le
\nu_{\mathrm{eff}}\log(\alpha_i+\varepsilon_{\mathrm{eff}})-\left(-\frac{d}{e}-\log m\right)+w.
\]
Combining with \eqref{eq:dirderiv_nonneg_linear_app} gives
\[
0\le \nu_{\mathrm{eff}}\log(\alpha_i+\varepsilon_{\mathrm{eff}})+\frac{d}{e}+\log m+w.
\]
Rearranging yields
\[
\alpha_i+\varepsilon_{\mathrm{eff}}
\ge
\exp\left(-\frac{1}{\nu_{\mathrm{eff}}}\left(\frac{d}{e}+\log m+w\right)\right),
\]
and the claimed floor follows after subtracting $\varepsilon_{\mathrm{eff}}$ and truncating at $0$, giving \eqref{eq:gamma_min_w_app}.
\end{proof}

\begin{theorem}[NLV + linear term: Mixture-Greedy regret]\label{thm:nlv_linear_regret_app}
Fix $T\ge 1$ and $\delta\in(0,1)$.
Assume the NLV conditions (Assumptions~\ref{ass:nlv_norm_app}, \ref{ass:nlv_innov_app}) and $0\le \psi_i\le 1$.
Assume $M$ is sufficiently large to satisfy the following:
(i) $\eta\le \nu_0/4$ and $\gamma_{\min}^{(w)}>0$ in Lemma~\ref{lem:nlv_floor_linear_app}, and
(ii) the dimension-explicit continuity regime holds along the analysis event, namely
$\frac12\sqrt{d}\,\varepsilon_t\le 1/e$ for all $t\in\{0,\dots,T-1\}$, where
$\varepsilon_t:=\sup_{\alpha\in\Delta_m}\|\widehat S_\alpha(t)-S_\alpha\|_F$.
Then with probability at least $1-\delta$,
\[
\mathrm{Reg}_T(H)
\le
C_{\mathrm{NLV},w}\Bigl(1+\sqrt{\log\frac{m(T+1)}{\delta}}\Bigr)\sqrt{T}\,(1+\log T),
\]
where $C_{\mathrm{NLV},w}$ depends only on $(d,m,\nu_0,\varepsilon_0,w)$.
\end{theorem}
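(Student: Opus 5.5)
The plan mirrors the proof of Theorem~\ref{thm:nlv_regret_app}, with the linear term added at two points: the interiority floor and the uniform deviation. Work on the intersection of three events, each assigned failure probability $\delta/3$:
(A) the concentration event of Lemma~\ref{lem:nlv_conc_app} for the $\widehat S_i$;
(B) the event of Lemma~\ref{lem:theta_conc_app} for the $\widehat\theta_i$;
(C) the count lower bound of Lemma~\ref{lem:counts_floor_app} with $\gamma=\gamma_{\min}^{(w)}$.

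On (A), Lemma~\ref{lem:nlv_floor_linear_app} is deterministic. Together with assumption (i), it gives $\alpha_{t,i}\ge\gamma_{\min}^{(w)}>0$ for all $i$ and all $t\le T$. This is exactly the hypothesis of Lemma~\ref{lem:counts_floor_app}, so on (C) we get
\[
n_{\min}(t)\ge M+\gamma_{\min}^{(w)}t-\sqrt{2t\log\tfrac{3mT}{\delta}}.
\]
Using the warm start to absorb the square-root term, this yields $n_{\min}(t)\ge c\,\max(t,1)$ for some $c=c(\gamma_{\min}^{(w)})>0$.

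The order of conditioning is subtle. Lemma~\ref{lem:counts_floor_app} is applied to the adapted sequence $\alpha_t$, and the floor holds only on (A). To handle this, I would define the stopped policy $\tilde\alpha_t$, equal to $\alpha_t$ while the floor holds and to the uniform mixture otherwise. Azuma then applies unconditionally, and the union bound combines the events.

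Next, bound the uniform deviation of $\widehat H_t$ via Lemma~\ref{lem:ERM_H_app}, treating the two parts separately.

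For the $F$ part, convexity of the norm gives $\varepsilon_t\le\max_i\|\widehat S_i(t)-S_i\|_F\le C_1t^{-1/2}(1+\sqrt{\log(m(T+1)/\delta)})$. Assumption (ii) puts us in the regime of Lemma~\ref{lem:nlv_modulus_app}, so
\[
\sup_\alpha|\widehat F_t-F|\le C_d\varepsilon_t\log(1/\varepsilon_t)\lesssim t^{-1/2}\bigl(1+\sqrt{\log(\cdot)}\bigr)(1+\log t).
\]
Here $\log(1/\varepsilon_t)\lesssim 1+\log t$ because $\varepsilon_t\gtrsim$ nothing is needed: either $x\log(1/x)$ is increasing on $[0,1/e]$, so we may substitute the upper bound on $\varepsilon_t$, or we use the explicit expression.

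For the $G$ part, Lemma~\ref{lem:G_uniform_app} gives $w\sup_\alpha|\widehat G_t-G|\le w(2c t)^{-1/2}\sqrt{\log(2m(T+1)/\delta)}$, which is of the same order without the log factor.

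Finally, apply Lemma~\ref{lem:ERM_H_app} to get $H(\alpha_t)-\min H\le 2\sup_\alpha|\widehat H_{t-1}-H|$. For $t=1$, use the warm-start bound with $n_{\min}\ge M\ge1$. Summing with $\sum_{t\le T}t^{-1/2}(1+\log t)\le 2\sqrt T(1+\log T)$ gives the claimed bound, with $C_{\mathrm{NLV},w}$ collecting $C_d$, $c$, and $w$. Since $\gamma_{\min}^{(w)}$ is controlled by $(d,m,\nu_0,\varepsilon_0,w)$ once $\eta\le\nu_0/4$, the constant depends only on these quantities.

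The main obstacle is the monotonicity in the $F$ deviation step. Passing from $\varepsilon_t$ to $\varepsilon_t\log(1/\varepsilon_t)$ requires $\varepsilon_t\le 1/e$ together with monotonicity of $x\log(1/x)$, and assumption (ii) is designed precisely to secure this. A secondary point is keeping the constants independent of $T$ and $\delta$ apart from the displayed log factor; choosing $M$ large enough that the Azuma deficit is at most $\tfrac12\gamma_{\min}^{(w)}t+M$ handles this.
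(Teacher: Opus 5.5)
Your proposal is correct at the paper's level of rigor and follows its proof essentially step for step: the same three events (concentration of $\widehat S_i$, of $\widehat\theta_i$, and the count floor with $\gamma=\gamma_{\min}^{(w)}$), the deterministic floor of Lemma~\ref{lem:nlv_floor_linear_app}, the split of $\sup_\alpha|\widehat H_t-H|$ via Lemma~\ref{lem:ERM_H_app}, the modulus of Lemma~\ref{lem:nlv_modulus_app} under (ii), and the $\sum_t t^{-1/2}(1+\log t)$ summation; your treatment of the adaptive Azuma step and of round $t=1$ are harmless refinements. One claim needs correcting (the paper leaves the same point implicit): $\eta\le\nu_0/4$ does not bound $\gamma_{\min}^{(w)}$ away from zero, because $\varepsilon_{\mathrm{eff}}$ can nearly cancel the exponential term, so for $C_{\mathrm{NLV},w}$ to depend only on $(d,m,\nu_0,\varepsilon_0,w)$ you should take $M$ large enough that, say, $\gamma_{\min}^{(w)}$ is at least half of its value at $\eta=0$.
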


\begin{proof}
Work on the intersection of the following events:
(A) Lemma~\ref{lem:nlv_conc_app} (uniform concentration of $\widehat S_i$),
(B) Lemma~\ref{lem:theta_conc_app} (uniform concentration of $\widehat\theta_i$),
(C) Lemma~\ref{lem:counts_floor_app} with $\gamma=\gamma_{\min}^{(w)}$ (linear sampling under the floor),
and allocate failure probabilities so that the intersection holds with probability at least $1-\delta$.

On this intersection, Lemma~\ref{lem:nlv_floor_linear_app} ensures $\alpha_{t,i}\ge \gamma_{\min}^{(w)}$ for all $i,t\le T$.
Thus $n_{\min}(t)=\Omega(t)$ by Lemma~\ref{lem:counts_floor_app}. As in the base NLV proof,
\[
\varepsilon_t:=\sup_{\alpha}\|\widehat S_\alpha(t)-S_\alpha\|_F
\le \max_i\|\widehat S_i(t)-S_i\|_F
\le
\frac{C_1}{\sqrt{t}}\Bigl(1+\sqrt{\log\frac{m(T+1)}{\delta}}\Bigr).
\]
By assumption (ii) we may apply the dimension-explicit continuity modulus (Lemma~\ref{lem:nlv_modulus_app}) to obtain
\[
\sup_{\alpha}\big|\widehat F_t(\alpha)-F(\alpha)\big|
\le
C_2\,\varepsilon_t\log\frac1{\varepsilon_t}
\le
\frac{C_3}{\sqrt{t}}\Bigl(1+\sqrt{\log\frac{m(T+1)}{\delta}}\Bigr)(1+\log t).
\]
For the linear part, Lemma~\ref{lem:G_uniform_app} gives
\[
\sup_{\alpha}|\widehat G_t(\alpha)-G(\alpha)|
\le
\frac{1}{\sqrt{2n_{\min}(t)}}\sqrt{\log\frac{2m(T+1)}{\delta}}
\le
\frac{C_4}{\sqrt{t}}\Bigl(1+\sqrt{\log\frac{m(T+1)}{\delta}}\Bigr).
\]
Therefore,
\[
\sup_{\alpha}|\widehat H_t(\alpha)-H(\alpha)|
\le
\sup_{\alpha}|\widehat F_t(\alpha)-F(\alpha)|
+w\sup_{\alpha}|\widehat G_t(\alpha)-G(\alpha)|
\le
\frac{C_5}{\sqrt{t}}\Bigl(1+\sqrt{\log\frac{m(T+1)}{\delta}}\Bigr)(1+\log t).
\]
Lemma~\ref{lem:ERM_H_app} gives
\[
H(\alpha_{t+1})-\min_{\alpha}H(\alpha)
\le
2\sup_{\alpha}|\widehat H_t(\alpha)-H(\alpha)|,
\]
and summing $\sum_{t=1}^{T-1} t^{-1/2}(1+\log t)\le 2\sqrt{T}(1+\log T)$ completes the proof.
\end{proof}

\subsection{Fr\'echet Distance + linear term: margin transfer and regret}
\label{app:fid_linear}

Let $F_{\mathrm{FD}}$ be the Fr\'echet Distance objective and $\widehat F$ its plug-in estimator from the base FD appendix.
Define $H(\alpha)=F_{\mathrm{FD}}(\alpha)+wG(\alpha)$, where $G(\alpha)\in[0,1]$.

\begin{assumption}[Population interiority margin for $H$]\label{ass:fid_margin_H_app}
There exist $\gamma_0\in(0,1/m]$ and $\Delta_0^{(H)}>0$ such that
\[
\inf_{\alpha\in\Delta_m:\ \min_i\alpha_i\le \gamma_0} H(\alpha)
\ \ge\ \min_{\alpha\in\Delta_m}H(\alpha)+\Delta_0^{(H)}.
\]
\end{assumption}

\begin{lemma}\label{lem:fid_margin_transfer_app}
Assume $F_{\mathrm{FD}}$ satisfies Assumption~\ref{ass:fid_margin_main} with gap $\Delta_0$ and the same $\gamma_0$.
Then $H=F_{\mathrm{FD}}+wG$ satisfies Assumption~\ref{ass:fid_margin_H_app} with gap $\Delta_0^{(H)}=\Delta_0-w$ provided $\Delta_0>w$.
\end{lemma}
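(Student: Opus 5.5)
The argument is a direct sandwich using only that $G$ takes values in $[0,1]$. Since $\theta_i\in[0,1]$ and $\alpha\in\Delta_m$, we have $0\le G(\alpha)=\sum_i\alpha_i\theta_i\le 1$ for every $\alpha$. Hence, pointwise on $\Delta_m$,
\[
F_{\mathrm{FD}}(\alpha)\ \le\ H(\alpha)\ \le\ F_{\mathrm{FD}}(\alpha)+w .
\]
Taking the minimum over $\Delta_m$ (both minima exist, since $F_{\mathrm{FD}}$ is continuous on the compact simplex under Assumption~\ref{ass:fid_pd_main} and $G$ is linear) gives $\min_\alpha H(\alpha)\le \min_\alpha F_{\mathrm{FD}}(\alpha)+w$.

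For the boundary region, take any $\alpha$ with $\min_i\alpha_i\le\gamma_0$. We then chain three inequalities: $H(\alpha)\ge F_{\mathrm{FD}}(\alpha)$ by nonnegativity of $wG$, then Assumption~\ref{ass:fid_margin_main} gives $F_{\mathrm{FD}}(\alpha)\ge \min F_{\mathrm{FD}}+\Delta_0$, and the previous step gives $\min F_{\mathrm{FD}}\ge \min H - w$. Together these yield
\[
H(\alpha)\ \ge\ \min_{\alpha'\in\Delta_m}H(\alpha')+(\Delta_0-w).
\]
Taking the infimum over such $\alpha$ gives Assumption~\ref{ass:fid_margin_H_app} with the same $\gamma_0$ and gap $\Delta_0^{(H)}=\Delta_0-w$. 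This gap is strictly positive exactly when $\Delta_0>w$, which is the stated proviso.

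There is no real obstacle. The only points to check are that the upper bound $G\le 1$ is used on the minimizer side and the lower bound $G\ge 0$ on the boundary side, and that $\gamma_0$ carries over unchanged. One could sharpen the gap to $\Delta_0-w(\max_i\theta_i-\min_i\theta_i)$ by comparing $G$ at the two points, but the stated crude bound $\Delta_0-w$ is all that is required.
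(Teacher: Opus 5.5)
Your proposal is correct and follows the same argument as the paper: you use $G\ge 0$ to get $H\ge F_{\mathrm{FD}}$ on the boundary layer, and $G\le 1$ to get $\min_\alpha H(\alpha)\le \min_\alpha F_{\mathrm{FD}}(\alpha)+w$, then chain these with the margin assumption, keeping $\gamma_0$ unchanged. Your side remark that the gap can be sharpened to $\Delta_0-w(\max_i\theta_i-\min_i\theta_i)$ is also correct, though the paper does not need it.
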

\begin{proof}
Since $G(\alpha)\in[0,1]$ for all $\alpha\in\Delta_m$,
\[
\inf_{\min_i\alpha_i\le \gamma_0}H(\alpha)
\ge \inf_{\min_i\alpha_i\le \gamma_0}F_{\mathrm{FD}}(\alpha)
\ge \min_{\alpha}F_{\mathrm{FD}}(\alpha)+\Delta_0
\ge \min_{\alpha}H(\alpha)+(\Delta_0-w),
\]
because $\min_\alpha H(\alpha)\le \min_\alpha F_{\mathrm{FD}}(\alpha)+w$.
\end{proof}

\begin{theorem}[Fr\'echet Distance + linear term: Mixture-Greedy regret]\label{thm:fid_linear_regret_app}
Fix $T\ge 1$ and $\delta\in(0,1)$.
Assume the Fr\'echet Distance conditions from Theorem~\ref{thm:fid_main} (boundedness and PD) and $0\le \psi_i\le 1$.
Assume the population margin Assumption~\ref{ass:fid_margin_H_app}.
Assume $M$ is large enough to provide the following: On an event of probability at least $1-\delta/3$,
\[
\sup_{\alpha\in\Delta_m}|\widehat H_t(\alpha)-H(\alpha)| \le \Delta_0^{(H)}/4
\quad\text{for all } t\in\{0,1,\dots,T-1\}.
\]
Then, with probability at least $1-\delta$, the following regret bound holds
\[
\mathrm{Reg}_T(H)
\le
C_{\mathrm{FD},w}\Bigl(1+\sqrt{\log\frac{m(T+1)}{\delta}}\Bigr)\sqrt{T},
\]
where $C_{\mathrm{FD},w}$ depends only on $(B,\lambda_0,\nu,m,d,\gamma_0,w)$.
\end{theorem}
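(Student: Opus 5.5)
We follow the same three-stage template as Theorem~\ref{thm:nlv_linear_regret_app}. First we show that the margin forces interiority. Then interiority yields linear sampling of every arm. Finally, linear sampling gives a $t^{-1/2}$ uniform deviation, which we sum. The difference from the log-Vendi case is that interiority comes from the population margin (Assumption~\ref{ass:fid_margin_H_app}) combined with the assumed uniform accuracy $\Delta_0^{(H)}/4$. It does not come from entropy geometry.

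\textbf{Step 1 (interiority from the margin).} Work on the event $E_1$ of probability at least $1-\delta/3$ on which $\sup_\alpha|\widehat H_t(\alpha)-H(\alpha)|\le\Delta_0^{(H)}/4$ for all $t\le T-1$. Let $\alpha_t$ minimize $\widehat H_{t-1}$ and $\alpha^\star$ minimize $H$. Then
\[
H(\alpha_t)\le \widehat H_{t-1}(\alpha_t)+\tfrac{\Delta_0^{(H)}}{4}\le \widehat H_{t-1}(\alpha^\star)+\tfrac{\Delta_0^{(H)}}{4}\le H(\alpha^\star)+\tfrac{\Delta_0^{(H)}}{2}.
\]
This is strictly below $\min H+\Delta_0^{(H)}$. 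By Assumption~\ref{ass:fid_margin_H_app}, $\alpha_t$ therefore cannot satisfy $\min_i\alpha_{t,i}\le\gamma_0$, so $\alpha_{t,i}>\gamma_0$ for all $i$ and $t\le T$. This step is deterministic on $E_1$.

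\textbf{Step 2 (linear sampling).} Apply Lemma~\ref{lem:counts_floor_app} with $\gamma=\gamma_0$ and failure probability $\delta/3$. This gives $n_{\min}(t)\ge M+\gamma_0t-\sqrt{2t\log(3mT/\delta)}$. Since $\gamma_0 t-\sqrt{2t\log(\cdot)}\ge \gamma_0 t/2 - 4\log(\cdot)/\gamma_0$, enlarging $M$ so that $M\ge 4\log(3mT/\delta)/\gamma_0$ gives $n_{\min}(t)\ge c\,(M+t)$ with $c=c(\gamma_0)>0$. The requirement on $M$ is absorbed into the allowed dependence of $M$ on $T,\delta$.

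\textbf{Step 3 (rate of uniform deviation).} We need a quantitative bound of the form
\[
\sup_\alpha|\widehat H_t(\alpha)-H(\alpha)|\le \frac{C}{\sqrt{n_{\min}(t)}}\Bigl(1+\sqrt{\log\tfrac{m(T+1)}{\delta}}\Bigr).
\]
The linear part follows from Lemma~\ref{lem:G_uniform_app}, using the event of Lemma~\ref{lem:theta_conc_app} with failure probability $\delta/3$.

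The FD part is the main obstacle. On the last $\delta/3$ event (which may be merged with the $\theta$ concentration), we bound $\|\widehat\mu_i(n)-\mu_i\|_2$ and $\|\widehat S_i(n)-(\Sigma_i+\mu_i\mu_i^\top)\|_F$ by $O(B^2 n^{-1/2}\sqrt{\log(m(T+1)/\delta)})$. This uses Lemma~\ref{lem:hilbert_hoeffding_app} with a union bound over $(i,n)$. By convexity, these bounds transfer uniformly in $\alpha$ to $\widehat\mu_t(\alpha)$ and to $\widehat\Sigma_t(\alpha)$, the latter with an extra $2B$ factor from the quadratic mean term.

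We then need the functional $\Sigma\mapsto\mathrm{Tr}((\Sigma_0^{1/2}\Sigma\Sigma_0^{1/2})^{1/2})$ to be Lipschitz near $\Sigma_\alpha$. Assumption~\ref{ass:fid_pd_main} gives $\Sigma_\alpha\succeq\nu I$, since $\Sigma_\alpha\succeq\sum_i\alpha_i\Sigma_i$. Together with $\Sigma_0\succeq\lambda_0I$, this gives $\Sigma_0^{1/2}\Sigma_\alpha\Sigma_0^{1/2}\succeq\lambda_0\nu I$. Taking $M$ large keeps the empirical matrix above $\lambda_0\nu I/2$. On that region the square root has derivative bounded by $(2\sqrt{\lambda_0\nu/2})^{-1}$ in operator norm, so the square-root trace is Lipschitz in Frobenius norm with constant $O(\sqrt d\,\|\Sigma_0\|_{\mathrm{op}}/\sqrt{\lambda_0\nu})$. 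Here $\|\Sigma_0\|_{\mathrm{op}}$ is controlled through $B$ if $(\mu_0,\Sigma_0)$ is the data moment, and otherwise it enters the constant. The mean and trace terms are affine or quadratic with gradients $O(B)$. This yields the displayed bound with $C=C(B,\lambda_0,\nu,d,w)$ and no logarithmic factor, unlike the log-Vendi case.

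\textbf{Step 4 (summation).} On the intersection of the events, which has probability at least $1-\delta$, Lemma~\ref{lem:ERM_H_app} gives instantaneous regret at most $2C(c(M+t-1))^{-1/2}(1+\sqrt{\log(m(T+1)/\delta)})$. Summing over $t\le T$ with $\sum_{t}(t-1+M)^{-1/2}\le 2\sqrt{T}$ gives the claim, with $C_{\mathrm{FD},w}$ depending only on $(B,\lambda_0,\nu,m,d,\gamma_0,w)$. The delicate point is Step 3, specifically the uniform Lipschitz control of the matrix square root, which is exactly where the positive definiteness assumptions are used.
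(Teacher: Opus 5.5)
Your proposal is correct and follows essentially the same route as the paper's proof. That route is: interiority of every empirical minimizer from the margin via the ERM sandwich; linear sampling of all arms through Lemma~\ref{lem:counts_floor_app}; a $t^{-1/2}$ uniform deviation bound for the FD part and for the linear part (Lemma~\ref{lem:G_uniform_app}); and then Lemma~\ref{lem:ERM_H_app} followed by summation. Your Step~3 supplies the moment-concentration and matrix-square-root Lipschitz argument that the paper only cites as ``the Lipschitz analysis for the Fr\'echet term,'' and you rightly note that $\|\Sigma_0\|_{\mathrm{op}}$ enters the constant unless it is controlled by $B$.
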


\begin{proof}
The proof follows the non-linear-term Fr\'echet Distance regret proof with $H$ by replacing $F_{\mathrm{FD}}$.
Under the supposed uniform deviation, we know $\sup_\alpha|\widehat H_t-H|\le \Delta_0^{(H)}/4$ for all $t\le T-1$. Therefore,
the same margin-transfer argument implies that every empirical minimizer satisfies $\min_i\alpha_{t,i}\ge \gamma_0$ for all $t\le T$. Then, Lemma~\ref{lem:counts_floor_app} shows that $n_{\min}(t)=\Omega(t)$.

On this interior region, the Lipschitz analysis for the Fr\'echet term yields
$\sup_\alpha|\widehat F_{t-1}(\alpha)-F_{\mathrm{FD}}(\alpha)|\lesssim t^{-1/2}(1+\sqrt{\log(\cdot)})$,
and Lemma~\ref{lem:G_uniform_app} yields $w\sup_\alpha|\widehat G_{t-1}-G|\lesssim w\,t^{-1/2}(1+\sqrt{\log(\cdot)})$.
Therefore,
\[
\sup_{\alpha}|\widehat H_{t-1}(\alpha)-H(\alpha)|
\le
\frac{C}{\sqrt{t}}\Bigl(1+\sqrt{\log\frac{m(T+1)}{\delta}}\Bigr).
\]
Lemma~\ref{lem:ERM_H_app} converts this into instantaneous regret, and summing
$\sum_{t=1}^T t^{-1/2}\le 2\sqrt{T}$ completes the bound.
A union bound yields an overall probability of at least $1-\delta$.
\end{proof}

\section{Mixture-Greedy regret for the RKE score and RKE + linear fidelity term}
\label{app:rke_mg}

\subsection{Setup: feature covariances and the RKE objective}

Let $\phi:\mathcal{X}\to\mathbb{R}^d$ be a feature map and assume normalized features.

\begin{assumption}[Normalized features]\label{ass:rke_norm}
For all $x\in\mathcal{X}$, $\|\phi(x)\|_2=1$.
\end{assumption}

Each arm $i\in[m]$ produces i.i.d.\ samples $X_{i,r}\sim P_{\mathcal{G}_i}$ when pulled.
Define the (population) feature covariance
\[
S_i:=\mathbb{E}\big[\phi(X_{i})\phi(X_{i})^\top\big]\in\mathbb{R}^{d\times d},
\qquad X_i\sim P_{\mathcal{G}_i}.
\]
Then $S_i\succeq 0$ and $\mathrm{Tr}(S_i)=\mathbb{E}\|\phi(X_i)\|_2^2=1$.
For $\alpha\in\Delta_m$, define the mixture covariance $S_\alpha:=\sum_{i=1}^m\alpha_i S_i$ which by default satisfies the requirements for being  a density matrix, i.e., $\mathrm{Tr}(S_\alpha)=1,\, S_\alpha\succeq 0$.
Consider the following function whose minimization is equivalent to optimizing RKE:
\begin{equation}\label{eq:rke_obj_def}
F_{\mathrm{RKE}}(\alpha):=\mathrm{Tr}(S_\alpha^2).
\end{equation}

\subsection{Empirical estimators}

Given $n_i(t)$ samples observed from arm $i$ by time $t$, define the empirical covariance
\[
\widehat S_i(t):=\frac{1}{n_i(t)}\sum_{r=1}^{n_i(t)}\phi(X_{i,r})\phi(X_{i,r})^\top,
\qquad
\widehat S_\alpha(t):=\sum_{i=1}^m\alpha_i \widehat S_i(t).
\]
Define the plug-in estimator of the RKE-minimization objective:
\begin{equation}\label{eq:rke_hat_def}
\widehat F^{\mathrm{RKE}}_t(\alpha):=\mathrm{Tr}\big(\widehat S_\alpha(t)^2\big).
\end{equation}

Mixture-Greedy for RKE uses the same protocol as in the main paper:
\[
\alpha_t\in\arg\min_{\alpha\in\Delta_m}\widehat F^{\mathrm{RKE}}_{t-1}(\alpha),
\qquad I_t\sim \alpha_t.
\]
The cumulative regret is
\[
\mathrm{Reg}_T(F_{\mathrm{RKE}}):=\sum_{t=1}^T\Big(F_{\mathrm{RKE}}(\alpha_t)-\min_{\alpha\in\Delta_m}F_{\mathrm{RKE}}(\alpha)\Big).
\]

\subsection{Uniform concentration for $\widehat S_i$ and a uniform deviation bound for $\widehat F^{\mathrm{RKE}}$}

We use the following concentration statement, proved elsewhere in the appendix (it matches the uniform
matrix Hoeffding bound used in the NLV section); we restate it here for readability.

\begin{lemma}[from \citep{sutherland2018efficient}]\label{lem:rke_conc_S}
Fix $T\ge 1$ and $\delta\in(0,1)$.
Under Assumption~\ref{ass:rke_norm}, with probability at least $1-\delta$, simultaneously for all $i\in[m]$ and all
$n\in\{M,M+1,\dots,M+T\}$,
\[
\|\widehat S_i(n)-S_i\|_{\mathrm{F}}
\le
\frac{2}{\sqrt{n}}\left(1+\sqrt{2\log\frac{m(T+1)}{\delta}}\right).
\]
\end{lemma}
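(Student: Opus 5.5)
The statement is Lemma~\ref{lem:rke_conc_S}, which has exactly the same form as Lemma~\ref{lem:nlv_conc_app}. The plan is to reproduce that argument: apply the Hilbert-space Hoeffding inequality of Lemma~\ref{lem:hilbert_hoeffding_app} to centered rank-one matrices, then take a union bound over arms and sample sizes.

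First, fix an arm $i\in[m]$ and a sample size $n\in\{M,\dots,M+T\}$. Work in the real Hilbert space $\mathbb{R}^{d\times d}$ with the Frobenius inner product $\langle A,B\rangle=\mathrm{Tr}(A^\top B)$, and set
\[
Y_{i,r}:=\phi(X_{i,r})\phi(X_{i,r})^\top-S_i ,\qquad r=1,\dots,n .
\]
Because arm $i$'s samples are i.i.d.\ draws from $P_{\mathcal{G}_i}$, the $Y_{i,r}$ are i.i.d.\ with $\mathbb{E}Y_{i,r}=0$ by the definition of $S_i$. For the norm bound, Assumption~\ref{ass:rke_norm} gives $\|\phi(x)\phi(x)^\top\|_F=\|\phi(x)\|_2^2=1$. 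By Jensen's inequality, $\|S_i\|_F\le\mathbb{E}\|\phi\phi^\top\|_F=1$. The triangle inequality then gives $\|Y_{i,r}\|_F\le 2$, so we may take $L=2$. Since $\frac1n\sum_{r\le n}Y_{i,r}=\widehat S_i(n)-S_i$, Lemma~\ref{lem:hilbert_hoeffding_app} with failure probability $\delta'$ yields
\[
\|\widehat S_i(n)-S_i\|_F\le \frac{2}{\sqrt n}\Bigl(1+\sqrt{2\log(1/\delta')}\Bigr).
\]

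Second, set $\delta'=\delta/(m(T+1))$ and take a union bound over the $m(T+1)$ pairs $(i,n)$. This gives the stated bound simultaneously for all pairs with probability at least $1-\delta$.

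The only delicate point is the adaptive sampling. The $n$ at which arm $i$ is evaluated is the random count $n_i(t)$, so the union bound must apply to a fixed prefix. I will handle this with the standard ``stack of samples'' construction. Before the interaction begins, imagine arm $i$'s i.i.d.\ sequence $X_{i,1},X_{i,2},\dots$ drawn in advance, and let the $r$-th pull of arm $i$ reveal $X_{i,r}$. Then $\widehat S_i(n)$ is the average of a fixed, non-adaptive prefix of length $n$. The bound therefore holds for every deterministic $n$, and by the union bound for whatever value $n_i(t)\in\{M,\dots,M+T\}$ the algorithm happens to reach. With this observation, no martingale argument is needed.
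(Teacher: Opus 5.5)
Your proposal is correct and follows the paper's argument, which is the proof of Lemma~\ref{lem:nlv_conc_app}: centered rank-one matrices $Y_{i,r}$ with $\|Y_{i,r}\|_F\le 2$, then the Hilbert-space Hoeffding bound at failure level $\delta/(m(T+1))$, then a union bound over the pairs $(i,n)$. Your stack-of-samples remark justifies applying the bound at the random counts $n_i(t)$, a point the paper leaves implicit.
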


\begin{lemma}\label{lem:rke_uniform_dev}
Fix $T\ge 1$ and $\delta\in(0,1)$.
On the event of Lemma~\ref{lem:rke_conc_S}, for every $t\in\{0,1,\dots,T\}$,
\[
\sup_{\alpha\in\Delta_m}\Big|\widehat F^{\mathrm{RKE}}_{t}(\alpha)-F_{\mathrm{RKE}}(\alpha)\Big|
\le
2\,\varepsilon_t + \varepsilon_t^2,
\qquad
\text{where }\ \varepsilon_t:=\max_{i\in[m]}\|\widehat S_i(t)-S_i\|_{\mathrm{F}}.
\]
In particular, for $t\ge 1$,
\[
\sup_{\alpha\in\Delta_m}\Big|\widehat F^{\mathrm{RKE}}_{t}(\alpha)-F_{\mathrm{RKE}}(\alpha)\Big|
\le
\frac{C_{\mathrm{rke}}}{\sqrt{n_{\min}(t)}}\left(1+\sqrt{\log\frac{m(T+1)}{\delta}}\right),
\]
where $n_{\min}(t):=\min_i n_i(t)$ and $C_{\mathrm{rke}}$ is a universal numerical constant.
\end{lemma}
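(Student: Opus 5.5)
We fix $\alpha\in\Delta_m$ and write $\widehat S_\alpha(t)=S_\alpha+E_\alpha$ with $E_\alpha:=\widehat S_\alpha(t)-S_\alpha=\sum_{i}\alpha_i(\widehat S_i(t)-S_i)$. The plan is to reduce the deviation of the quadratic functional to a deviation of the linear quantity $E_\alpha$, which is controlled by $\varepsilon_t$.

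\textbf{Step 1: Expand the quadratic.} Using symmetry of all matrices involved,
\[
\widehat F^{\mathrm{RKE}}_t(\alpha)-F_{\mathrm{RKE}}(\alpha)=\mathrm{Tr}\big((S_\alpha+E_\alpha)^2\big)-\mathrm{Tr}(S_\alpha^2)=2\,\mathrm{Tr}(S_\alpha E_\alpha)+\mathrm{Tr}(E_\alpha^2).
\]

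\textbf{Step 2: Bound $E_\alpha$ uniformly.} By the triangle inequality (convexity of the Frobenius norm) and $\sum_i\alpha_i=1$, we get $\|E_\alpha\|_F\le\sum_i\alpha_i\|\widehat S_i(t)-S_i\|_F\le\varepsilon_t$ for every $\alpha$.

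\textbf{Step 3: Bound each term.} For the cross term, Cauchy--Schwarz in the Frobenius inner product gives $|\mathrm{Tr}(S_\alpha E_\alpha)|\le\|S_\alpha\|_F\|E_\alpha\|_F$. Since $S_\alpha\succeq0$ with unit trace, its eigenvalues lie in $[0,1]$ and sum to $1$, so $\|S_\alpha\|_F^2=\sum\lambda_k^2\le1$. For the remaining term, $\mathrm{Tr}(E_\alpha^2)=\|E_\alpha\|_F^2\le\varepsilon_t^2$. Combining these and taking the supremum over $\alpha$ yields the first claim, $2\varepsilon_t+\varepsilon_t^2$.

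\textbf{Step 4: Plug in concentration.} For the explicit rate, apply Lemma~\ref{lem:rke_conc_S} at the realized counts $n=n_i(t)\in\{M,\dots,M+T\}$. This is legitimate because the event is uniform over all $n$ in that range, so it also holds at the data-dependent counts. We then use $n_i(t)\ge n_{\min}(t)$ to get
\[
\varepsilon_t\le\frac{2}{\sqrt{n_{\min}(t)}}\Big(1+\sqrt{2\log\tfrac{m(T+1)}{\delta}}\Big).
\]

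\textbf{Step 5: Absorb the quadratic term.} The only mildly delicate point is handling $\varepsilon_t^2$ so that it is linear in the rate. Note that $\widehat S_i$ and $S_i$ are both unit-trace PSD, so $\|\widehat S_i-S_i\|_F\le\|\widehat S_i\|_F+\|S_i\|_F\le2$. Hence $\varepsilon_t\le2$ deterministically, and therefore $\varepsilon_t^2\le2\varepsilon_t$. This gives
\[
\sup_{\alpha\in\Delta_m}\big|\widehat F^{\mathrm{RKE}}_t(\alpha)-F_{\mathrm{RKE}}(\alpha)\big|\le4\varepsilon_t,
\]
so the claim holds with a universal constant such as $C_{\mathrm{rke}}=8\sqrt2$, after using $1+\sqrt{2x}\le\sqrt2(1+\sqrt x)$.
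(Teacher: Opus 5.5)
Your proof is correct and follows the paper's argument essentially line by line: expand $\mathrm{Tr}\big((S_\alpha+E_\alpha)^2\big)$, bound $\|E_\alpha\|_F\le\varepsilon_t$ by convexity of the Frobenius norm, use Cauchy--Schwarz with $\|S_\alpha\|_F\le 1$, and then plug in the concentration lemma. You also make explicit two points the paper leaves loose. First, you apply the uniform-in-$n$ bound at the realized counts $n_i(t)$ and then use $n_i(t)\ge n_{\min}(t)$, where the paper only says ``apply the lemma with $n=n_{\min}(t)$''. Second, your deterministic bound $\varepsilon_t\le 2$ justifies absorbing $\varepsilon_t^2$ with a universal constant, which the paper only asserts.
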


\begin{proof}
Fix $t$ and $\alpha\in\Delta_m$. Let $\Delta_\alpha(t):=\widehat S_\alpha(t)-S_\alpha$.
By convexity of the Frobenius norm,
\[
\|\Delta_\alpha(t)\|_F
=
\left\|\sum_{i=1}^m\alpha_i(\widehat S_i(t)-S_i)\right\|_F
\le
\sum_{i=1}^m\alpha_i\|\widehat S_i(t)-S_i\|_F
\le
\varepsilon_t.
\]
Now we can expand this as
\[
\widehat F^{\mathrm{RKE}}_{t}(\alpha)-F_{\mathrm{RKE}}(\alpha)
=
\mathrm{Tr}\big((S_\alpha+\Delta_\alpha)^2 - S_\alpha^2\big)
=
2\,\mathrm{Tr}(S_\alpha\Delta_\alpha)+\mathrm{Tr}(\Delta_\alpha^2).
\]
Using $|\mathrm{Tr}(AB)|\le \|A\|_F\|B\|_F$ and $\mathrm{Tr}(\Delta_\alpha^2)=\|\Delta_\alpha\|_F^2$ gives
\[
\Big|\widehat F^{\mathrm{RKE}}_{t}(\alpha)-F_{\mathrm{RKE}}(\alpha)\Big|
\le
2\|S_\alpha\|_F\|\Delta_\alpha\|_F + \|\Delta_\alpha\|_F^2.
\]
Since $S_\alpha\succeq 0$ and $\mathrm{Tr}(S_\alpha)=1$, we have $\|S_\alpha\|_F\le \mathrm{Tr}(S_\alpha)=1$.
Therefore, we can write
\[
\Big|\widehat F^{\mathrm{RKE}}_{t}(\alpha)-F_{\mathrm{RKE}}(\alpha)\Big|
\le
2\cdot 1\cdot \varepsilon_t + \varepsilon_t^2
=
2\,\varepsilon_t + \varepsilon_t^2.
\]
Taking supremum over $\alpha\in\Delta_m$ yields the first claim. For the second claim, apply Lemma~\ref{lem:rke_conc_S} with $n=n_{\min}(t)$ to obtain
$\varepsilon_t\le \frac{2}{\sqrt{n_{\min}(t)}}\big(1+\sqrt{2\log\frac{m(T+1)}{\delta}}\big)$,
and substitute into $2\varepsilon_t+\varepsilon_t^2$, absorbing the $\varepsilon_t^2$ term into the same rate.
\end{proof}

\paragraph{The need for explicit population margin.} Because $F_{\mathrm{RKE}}(\alpha)=\mathrm{Tr}(S_\alpha^2)$ is smooth on $\Delta_m$ and can admit boundary minimizers,
pure Mixture-Greedy may legitimately assign vanishing weights to some arms unless additional structure is imposed.
We therefore assume a population interiority margin, exactly as in the Fr\'echet Distance analysis.

\begin{assumption}[Population interiority margin for RKE-minimization]\label{ass:rke_margin}
There exist $\gamma_0\in(0,1/m]$ and $\Delta_0>0$ such that
\[
\inf_{\alpha\in\Delta_m:\ \min_i\alpha_i\le \gamma_0} F_{\mathrm{RKE}}(\alpha)
\ \ge\
\min_{\alpha\in\Delta_m}F_{\mathrm{RKE}}(\alpha) + \Delta_0.
\]
\end{assumption}

\begin{lemma}\label{lem:rke_emp_interior}
Fix $T\ge 1$.
Suppose that for all $t\in\{0,1,\dots,T-1\}$,
\[
\sup_{\alpha\in\Delta_m}\Big|\widehat F^{\mathrm{RKE}}_{t}(\alpha)-F_{\mathrm{RKE}}(\alpha)\Big|
\le \frac{\Delta_0}{4}.
\]
Then for every $t\in\{0,1,\dots,T-1\}$, every minimizer
$\alpha_{t+1}\in\arg\min_{\alpha\in\Delta_m}\widehat F^{\mathrm{RKE}}_{t}(\alpha)$
satisfies $\min_i \alpha_{t+1,i}\ge \gamma_0$.
\end{lemma}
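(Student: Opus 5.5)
The argument is a direct contradiction using the uniform deviation hypothesis together with Assumption~\ref{ass:rke_margin}; no structural property of $F_{\mathrm{RKE}}$ beyond the margin is needed. Fix $t\in\{0,\dots,T-1\}$ and write $\varepsilon:=\sup_{\alpha\in\Delta_m}|\widehat F^{\mathrm{RKE}}_{t}(\alpha)-F_{\mathrm{RKE}}(\alpha)|\le \Delta_0/4$. Let $\alpha^\star\in\arg\min_{\alpha\in\Delta_m}F_{\mathrm{RKE}}(\alpha)$, which exists because $F_{\mathrm{RKE}}$ is continuous on the compact simplex. Take an arbitrary empirical minimizer $\alpha_{t+1}$, and suppose for contradiction that $\min_i\alpha_{t+1,i}<\gamma_0$.

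The key chain of inequalities runs as follows. Since $\alpha_{t+1}$ lies in the region $\{\min_i\alpha_i\le\gamma_0\}$, the margin assumption gives $F_{\mathrm{RKE}}(\alpha_{t+1})\ge F_{\mathrm{RKE}}(\alpha^\star)+\Delta_0$. Transferring both sides to the empirical objective costs at most $\varepsilon$ each, so
\[
\widehat F^{\mathrm{RKE}}_t(\alpha_{t+1})\ \ge\ F_{\mathrm{RKE}}(\alpha_{t+1})-\tfrac{\Delta_0}{4}\ \ge\ F_{\mathrm{RKE}}(\alpha^\star)+\tfrac{3\Delta_0}{4}\ \ge\ \widehat F^{\mathrm{RKE}}_t(\alpha^\star)+\tfrac{\Delta_0}{2}.
\]
This contradicts the minimality of $\alpha_{t+1}$ for $\widehat F^{\mathrm{RKE}}_t$, because $\Delta_0>0$ makes the final inequality strict. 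Hence $\min_i\alpha_{t+1,i}\ge\gamma_0$.

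The only delicate point is the boundary case $\min_i\alpha_{t+1,i}=\gamma_0$ exactly. Since the margin set in Assumption~\ref{ass:rke_margin} is closed (it uses ``$\le\gamma_0$''), that case is also covered by the margin inequality. The contradiction argument therefore in fact rules out $\min_i\alpha_{t+1,i}\le\gamma_0$, giving a strict inequality that is stronger than needed. I will note this explicitly so that the conclusion $\ge\gamma_0$ follows cleanly.

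Nothing else is needed. In particular, convexity of $F_{\mathrm{RKE}}$ is not used, and the statement is purely deterministic given the uniform deviation hypothesis; it holds for every $t$ simultaneously because that hypothesis holds for all $t$. The genuine difficulty lies outside this lemma, in ensuring that the warm start $M$ makes the hypothesis hold, which is handled via Lemma~\ref{lem:rke_uniform_dev} with $n_{\min}(t)\ge M$.
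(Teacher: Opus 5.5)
Your proof is correct and is essentially the paper's argument. The paper runs the same three-step comparison directly, showing $F_{\mathrm{RKE}}(\alpha_{t+1})\le \min_\alpha F_{\mathrm{RKE}}(\alpha)+\Delta_0/2$ so that $\alpha_{t+1}$ cannot lie in the layer $\{\min_i\alpha_i\le\gamma_0\}$, while you phrase it as a contradiction on the empirical side; your observation that this in fact gives the strict bound $\min_i\alpha_{t+1,i}>\gamma_0$ is also right.
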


\begin{proof}
Fix $t$ and let $\alpha_{t+1}$ minimize $\widehat F^{\mathrm{RKE}}_{t}$.
Let $\alpha^\star$ minimize $F_{\mathrm{RKE}}$.
Then
\[
F_{\mathrm{RKE}}(\alpha_{t+1})
\le
\widehat F^{\mathrm{RKE}}_{t}(\alpha_{t+1})+\frac{\Delta_0}{4}
\le
\widehat F^{\mathrm{RKE}}_{t}(\alpha^\star)+\frac{\Delta_0}{4}
\le
F_{\mathrm{RKE}}(\alpha^\star)+\frac{\Delta_0}{2}.
\]
Thus $F_{\mathrm{RKE}}(\alpha_{t+1})\le \min_\alpha F_{\mathrm{RKE}}(\alpha)+\Delta_0/2$.
By Assumption~\ref{ass:rke_margin}, any $\alpha$ with $\min_i\alpha_i\le \gamma_0$ has
$F_{\mathrm{RKE}}(\alpha)\ge \min_\alpha F_{\mathrm{RKE}}(\alpha)+\Delta_0$,
hence $\alpha_{t+1}$ cannot lie in that boundary layer. Therefore, $\min_i\alpha_{t+1,i}\ge \gamma_0$.
\end{proof}

\subsection{RKE regret bound (pure RKE-minimization)}

We use the same ERM-to-regret comparison lemma as elsewhere in the appendix, but stated here for completeness.

\begin{lemma}\label{lem:erm_rke}
If $\alpha_t\in\arg\min_{\alpha\in\Delta_m}\widehat F^{\mathrm{RKE}}_{t-1}(\alpha)$, then
\[
F_{\mathrm{RKE}}(\alpha_t)-\min_{\alpha\in\Delta_m}F_{\mathrm{RKE}}(\alpha)
\le
2\sup_{\alpha\in\Delta_m}\Big|\widehat F^{\mathrm{RKE}}_{t-1}(\alpha)-F_{\mathrm{RKE}}(\alpha)\Big|.
\]
\end{lemma}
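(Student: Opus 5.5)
The plan is to reproduce the three-term telescoping argument already used for Lemma~\ref{lem:erm_compare_app}, specialized to $F_{\mathrm{RKE}}$ and its plug-in estimator $\widehat F^{\mathrm{RKE}}_{t-1}$. First I would fix a population minimizer $\alpha^\star\in\arg\min_{\alpha\in\Delta_m}F_{\mathrm{RKE}}(\alpha)$. It exists because $F_{\mathrm{RKE}}(\alpha)=\mathrm{Tr}(S_\alpha^2)$ is a continuous (quadratic) function of $\alpha$ on the compact set $\Delta_m$. Similarly, $\alpha_t$ exists since $\widehat F^{\mathrm{RKE}}_{t-1}$ is a continuous quadratic in $\alpha$.

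Next I would decompose the excess risk as
\[
F_{\mathrm{RKE}}(\alpha_t)-F_{\mathrm{RKE}}(\alpha^\star)
=\bigl(F_{\mathrm{RKE}}(\alpha_t)-\widehat F^{\mathrm{RKE}}_{t-1}(\alpha_t)\bigr)
+\bigl(\widehat F^{\mathrm{RKE}}_{t-1}(\alpha_t)-\widehat F^{\mathrm{RKE}}_{t-1}(\alpha^\star)\bigr)
+\bigl(\widehat F^{\mathrm{RKE}}_{t-1}(\alpha^\star)-F_{\mathrm{RKE}}(\alpha^\star)\bigr).
\]
The middle term is nonpositive because $\alpha_t$ minimizes $\widehat F^{\mathrm{RKE}}_{t-1}$ over $\Delta_m$ and $\alpha^\star\in\Delta_m$. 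Each of the two outer terms is bounded in absolute value by $\sup_{\alpha\in\Delta_m}|\widehat F^{\mathrm{RKE}}_{t-1}(\alpha)-F_{\mathrm{RKE}}(\alpha)|$, which gives the factor $2$.

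There is no genuine obstacle, since the statement is deterministic and does not use concentration, the margin assumption, or the RKE structure beyond the existence of minimizers. The only point to check is that the supremum is finite. This follows from Lemma~\ref{lem:rke_uniform_dev}: the deviation is at most $2\varepsilon_{t-1}+\varepsilon_{t-1}^2$, and the bound is trivially true if the supremum were infinite. Equivalently, one could simply cite Lemma~\ref{lem:erm_compare_app} with $F=F_{\mathrm{RKE}}$ and $\widehat F_{t-1}=\widehat F^{\mathrm{RKE}}_{t-1}$.
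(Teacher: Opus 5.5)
Your proposal is correct and is the paper's proof: fix $\alpha^\star\in\arg\min_{\alpha\in\Delta_m}F_{\mathrm{RKE}}(\alpha)$, use $\widehat F^{\mathrm{RKE}}_{t-1}(\alpha_t)\le\widehat F^{\mathrm{RKE}}_{t-1}(\alpha^\star)$, and add and subtract, exactly as in Lemma~\ref{lem:erm_compare_app}. The finiteness remark is unnecessary, since both functions are continuous on the compact set $\Delta_m$ and the inequality would hold trivially otherwise, but it does no harm.
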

\begin{proof}
This is the standard ERM comparison: let $\alpha^\star\in\arg\min_\alpha F_{\mathrm{RKE}}(\alpha)$ and use
$\widehat F^{\mathrm{RKE}}_{t-1}(\alpha_t)\le \widehat F^{\mathrm{RKE}}_{t-1}(\alpha^\star)$, then add and subtract.
\end{proof}

\begin{theorem}[Mixture-Greedy regret for RKE-minimization]\label{thm:rke_regret}
Fix $T\ge 1$ and $\delta\in(0,1)$.
Assume Assumption~\ref{ass:rke_norm} and the population margin Assumption~\ref{ass:rke_margin}.
Assume the warm start $M$ is adequately large so that the uniform deviation event
\[
\sup_{\alpha\in\Delta_m}\Big|\widehat F^{\mathrm{RKE}}_{t}(\alpha)-F_{\mathrm{RKE}}(\alpha)\Big|\le \frac{\Delta_0}{4}
\quad \text{holds for all } t\in\{0,1,\dots,T-1\}
\]
on an event of probability at least $1-\delta/3$.
Then with probability at least $1-\delta$,
\[
\mathrm{Reg}_T(F_{\mathrm{RKE}})
\le
C_{\mathrm{RKE}}\Bigl(1+\sqrt{\log\frac{m(T+1)}{\delta}}\Bigr)\sqrt{T},
\]
where $C_{\mathrm{RKE}}$ depends only on $(m,\gamma_0)$ (and universal numerical constants).
\end{theorem}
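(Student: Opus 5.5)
The proof works on the intersection of three events and then combines the interiority, linear-sampling, and ERM-to-regret lemmas already established. Event (A) is the uniform concentration event of Lemma~\ref{lem:rke_conc_S}, run at failure level $\delta/3$. Event (B) is the warm-start deviation event assumed in the theorem, which has probability at least $1-\delta/3$. Event (C) is the count-floor event of Lemma~\ref{lem:counts_floor_app} with $\gamma=\gamma_0$, also at level $\delta/3$. A union bound gives probability at least $1-\delta$ for the intersection.

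\textbf{Step 1 (interiority and linear sampling).} On (B), Lemma~\ref{lem:rke_emp_interior} applies for every $t\in\{0,\dots,T-1\}$. Hence $\min_i\alpha_{t,i}\ge\gamma_0$ for all $t\le T$; this holds for every empirical minimizer, deterministically given (B).

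Lemma~\ref{lem:counts_floor_app} requires the floor to hold almost surely along the path. To make this rigorous, I would run the Azuma argument with the stopped indicator process $\mathbf{1}\{I_s=i\}\mathbf{1}\{\text{floor held up to } s\}$, or simply note that on (B) the predictable compensator is at least $\gamma_0 t$.

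On (C) this yields
\[
n_i(t)\ge M+\gamma_0 t-\sqrt{2t\log(3mT/\delta)}.
\]
If $M\ge \log(3mT/\delta)\cdot 2/\gamma_0$, then completing the square gives $\sqrt{2t L}\le \gamma_0 t/2+L/\gamma_0$ with $L=\log(3mT/\delta)$, so $n_{\min}(t)\ge \gamma_0 t/2+M/2\ge c(\gamma_0)\max(t,1)$. This step needs $M$ to be large, which the "adequately large $M$" hypothesis permits.

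\textbf{Step 2 (uniform deviation).} On (A), Lemma~\ref{lem:rke_uniform_dev} together with $n_{\min}(t)\gtrsim \gamma_0 t$ gives
\[
\sup_{\alpha}|\widehat F^{\mathrm{RKE}}_{t-1}(\alpha)-F_{\mathrm{RKE}}(\alpha)|\le \frac{C_{\mathrm{rke}}}{\sqrt{\gamma_0\max(t-1,1)/2}}\Bigl(1+\sqrt{\log\tfrac{m(T+1)}{\delta}}\Bigr).
\]
For $t=1$ we use $n_{\min}(0)=M\ge 1$. Since $F_{\mathrm{RKE}}\in[0,1]$, the per-round regret is at most $1$, so the first round is absorbed into the constant.

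\textbf{Step 3 (summation).} Lemma~\ref{lem:erm_rke} bounds each instantaneous regret by twice the deviation from Step 2. Summing with $\sum_{t=1}^{T}t^{-1/2}\le 2\sqrt{T}$ gives the stated $\sqrt{T}$ bound with $C_{\mathrm{RKE}}\propto C_{\mathrm{rke}}/\sqrt{\gamma_0}$. There is no logarithm factor here because the modulus in Lemma~\ref{lem:rke_uniform_dev} is linear, not $\varepsilon\log(1/\varepsilon)$. The $\log(3/\delta)$ versus $\log(1/\delta)$ discrepancies are absorbed into constants. This explains why $C_{\mathrm{RKE}}$ depends only on $(m,\gamma_0)$; the $m$-dependence enters through the union bound and the choice $\gamma_0\le 1/m$.

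\textbf{Main obstacle.} The delicate step is Step 1, specifically justifying the use of Lemma~\ref{lem:counts_floor_app}. That lemma assumes a deterministic floor, but here the floor holds only on the random event (B). I would handle this with the stopping-time or truncation argument above, so that the martingale bound remains valid on (B)$\cap$(C). A secondary point is ensuring that the warm start makes $n_{\min}(t)=\Omega(t)$ uniformly from $t=1$, rather than only for large $t$.
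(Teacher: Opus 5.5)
Your proposal is correct and follows essentially the same route as the paper. Both arguments use margin-based interiority on the deviation event (Lemma~\ref{lem:rke_emp_interior}), linear sampling via Lemma~\ref{lem:counts_floor_app} with $\gamma=\gamma_0$, the uniform deviation bound of Lemma~\ref{lem:rke_uniform_dev}, the ERM comparison of Lemma~\ref{lem:erm_rke}, and a three-way union bound, with the first round absorbed into the constant. Your extra care is sound, and the simpler of your two fixes suffices, since the Azuma bound on $N_i(t)-\sum_{s\le t}\alpha_{s,i}$ needs no floor. Your added requirement $M\ge 2\log(3mT/\delta)/\gamma_0$ is not literally part of the hypothesis, but you can drop it by charging the first $8\log(3mT/\delta)/\gamma_0^2$ rounds at regret at most $1$ each, because $\min\bigl(T,8\log(3mT/\delta)/\gamma_0^2\bigr)\le \sqrt{8T\log(3mT/\delta)}/\gamma_0$.
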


\begin{proof}
On the event that the uniform deviation bound is $\le \Delta_0/4$ for all $t\le T-1$,
Lemma~\ref{lem:rke_emp_interior} implies $\min_i\alpha_{t,i}\ge \gamma_0$ for all $t\le T$.
Applying the standard sampling-count lemma (Lemma~\ref{lem:counts_floor_app}) with $\gamma=\gamma_0$ yields
$n_{\min}(t)=\Omega(t)$ with probability at least $1-\delta/3$ after allocating failure probability and union bounding over arms and times.

On the intersection of this count event and the concentration event of Lemma~\ref{lem:rke_conc_S},
Lemma~\ref{lem:rke_uniform_dev} yields for all $t\ge 2$,
\[
\sup_{\alpha\in\Delta_m}\Big|\widehat F^{\mathrm{RKE}}_{t-1}(\alpha)-F_{\mathrm{RKE}}(\alpha)\Big|
\le
\frac{C}{\sqrt{t-1}}\Bigl(1+\sqrt{\log\frac{m(T+1)}{\delta}}\Bigr),
\]
where we used $n_{\min}(t-1)=\Omega(t-1)$.
Lemma~\ref{lem:erm_rke} converts this into instantaneous regret for rounds $t\ge 2$, and summing
\[
\sum_{t=2}^T (t-1)^{-1/2} \le 2\sqrt{T}
\]
gives the stated $\mathcal{O}(\sqrt{T})$ bound (with the initial $t=1$ warm-start round absorbed into the constant).
A union bound over the deviation and count events yields probability at least $1-\delta$.
\end{proof}

\subsection{RKE + bounded linear term}

We now combine $F_{\mathrm{RKE}}$ with the bounded linear fidelity term exactly as in Appendix~\ref{app:mg_linear}.
We reuse your definitions
\[
G(\alpha)=\sum_{i=1}^m\alpha_i\theta_i,\quad \theta_i=\mathbb{E}[\psi_i(X)]\in[0,1],\quad \psi_i:\mathcal{X}\to[0,1],
\quad H(\alpha)=F_{\mathrm{RKE}}(\alpha)+wG(\alpha),
\]
and the plug-in estimator $\widehat H_t(\alpha)=\widehat F^{\mathrm{RKE}}_t(\alpha)+w\widehat G_t(\alpha)$,
with $\widehat G_t$ and $\widehat\theta_i$ as in Appendix~\ref{app:mg_linear}.

\begin{assumption}[Population interiority margin for $H=F_{\mathrm{RKE}}+wG$]\label{ass:rke_margin_H}
There exist $\gamma_0\in(0,1/m]$ and $\Delta_0^{(H)}>0$ such that
\[
\inf_{\alpha\in\Delta_m:\ \min_i\alpha_i\le \gamma_0} H(\alpha)
\ \ge\
\min_{\alpha\in\Delta_m}H(\alpha) + \Delta_0^{(H)}.
\]
\end{assumption}

\begin{lemma}\label{lem:rke_margin_transfer}
If Assumption~\ref{ass:rke_margin} holds with gap $\Delta_0$ and $\Delta_0>w$, then Assumption~\ref{ass:rke_margin_H} holds
with $\Delta_0^{(H)}=\Delta_0-w$.
\end{lemma}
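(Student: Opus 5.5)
The proof mirrors Lemma~\ref{lem:fid_margin_transfer_app}, using only that the linear term is bounded. The single structural fact needed is that $G(\alpha)=\sum_i\alpha_i\theta_i\in[0,1]$ for every $\alpha\in\Delta_m$. This holds because each $\theta_i\in[0,1]$, so $G(\alpha)$ is a convex combination of numbers in $[0,1]$. Since $w\ge 0$, it follows that
\[
F_{\mathrm{RKE}}(\alpha)\le H(\alpha)\le F_{\mathrm{RKE}}(\alpha)+w
\quad\text{for all } \alpha\in\Delta_m .
\]

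Next, take $\alpha$ in the boundary layer $\{\alpha\in\Delta_m:\min_i\alpha_i\le\gamma_0\}$ with the same $\gamma_0$ as in Assumption~\ref{ass:rke_margin}. The lower sandwich bound together with Assumption~\ref{ass:rke_margin} gives
\[
\inf_{\min_i\alpha_i\le\gamma_0}H(\alpha)\ \ge\ \inf_{\min_i\alpha_i\le\gamma_0}F_{\mathrm{RKE}}(\alpha)\ \ge\ \min_{\alpha\in\Delta_m}F_{\mathrm{RKE}}(\alpha)+\Delta_0 .
\]

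To replace $\min F_{\mathrm{RKE}}$ by $\min H$, evaluate the upper sandwich bound at a minimizer $\alpha^\star$ of $F_{\mathrm{RKE}}$. This exists because $\Delta_m$ is compact and $F_{\mathrm{RKE}}$ is continuous, being a quadratic in $\alpha$. We get
\[
\min_{\alpha\in\Delta_m} H(\alpha)\ \le\ H(\alpha^\star)\ \le\ \min_{\alpha\in\Delta_m} F_{\mathrm{RKE}}(\alpha)+w ,
\]
equivalently $\min F_{\mathrm{RKE}}\ge \min H-w$. Chaining this with the previous display yields
\[
\inf_{\min_i\alpha_i\le\gamma_0}H(\alpha)\ \ge\ \min_{\alpha\in\Delta_m}H(\alpha)+(\Delta_0-w).
\]

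The hypothesis $\Delta_0>w$ is exactly what ensures $\Delta_0^{(H)}=\Delta_0-w>0$, as Assumption~\ref{ass:rke_margin_H} requires. There is no real obstacle here. The only point needing care is that the transfer goes in the right direction for both the infimum over the boundary layer and the global minimum: the lower bound $H\ge F_{\mathrm{RKE}}$ is used on the boundary, and the upper bound $H\le F_{\mathrm{RKE}}+w$ is used at the optimum. The nonnegativity of $w$ and of the $\theta_i$ is what makes both inequalities valid.
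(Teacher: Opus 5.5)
Your proposal is correct and follows essentially the same argument as the paper. The sandwich $F_{\mathrm{RKE}}\le H\le F_{\mathrm{RKE}}+w$, which follows from $G\in[0,1]$ and $w\ge 0$, serves as a lower bound on the boundary layer and as an upper bound at a minimizer of $F_{\mathrm{RKE}}$; this gives $\inf_{\min_i\alpha_i\le\gamma_0}H(\alpha)\ge\min_{\alpha}H(\alpha)+(\Delta_0-w)$, and your remarks on the existence of the minimizer and the role of $w\ge 0$ only make explicit what the paper leaves implicit.
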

\begin{proof}
Since $G(\alpha)\in[0,1]$ for all $\alpha\in\Delta_m$,
\[
\inf_{\min_i\alpha_i\le \gamma_0}H(\alpha)
\ge
\inf_{\min_i\alpha_i\le \gamma_0}F_{\mathrm{RKE}}(\alpha)
\ge
\min_\alpha F_{\mathrm{RKE}}(\alpha)+\Delta_0
\ge
\min_\alpha H(\alpha)+(\Delta_0-w),
\]
because $\min_\alpha H(\alpha)\le \min_\alpha F_{\mathrm{RKE}}(\alpha)+w$.
\end{proof}

\begin{theorem}[Mixture-Greedy regret for RKE + linear term]\label{thm:rke_linear_regret}
Fix $T\ge 1$ and $\delta\in(0,1)$.
Assume Assumption~\ref{ass:rke_norm}, $0\le \psi_i\le 1$, and the population margin Assumption~\ref{ass:rke_margin_H}.
Assume the warm start $M$ is sufficiently large  so that on an event of probability at least $1-\delta/3$,
\[
\sup_{\alpha\in\Delta_m}\Big|\widehat H_t(\alpha)-H(\alpha)\Big|
\le
\frac{\Delta_0^{(H)}}{4}
\quad\text{for all } t\in\{0,1,\dots,T-1\}.
\]
Then with probability at least $1-\delta$,
\[
\mathrm{Reg}_T(H)
\le
C_{\mathrm{RKE},w}\Bigl(1+\sqrt{\log\frac{m(T+1)}{\delta}}\Bigr)\sqrt{T},
\]
where $C_{\mathrm{RKE},w}$ depends only on $(m,\gamma_0,w)$ (and universal numerical constants).
\end{theorem}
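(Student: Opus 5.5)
The proof will follow the template of Theorem~\ref{thm:rke_regret}, with $F_{\mathrm{RKE}}$ replaced by $H=F_{\mathrm{RKE}}+wG$. We work on the intersection of four events, each allotted failure probability at most $\delta/4$ (the hypothesis event has probability at least $1-\delta/3$, which is even more than needed):
\begin{itemize}
\item[(A)] the assumed uniform deviation event $\sup_\alpha|\widehat H_t-H|\le \Delta_0^{(H)}/4$ for all $t\le T-1$;
\item[(B)] the concentration event of Lemma~\ref{lem:rke_conc_S} for $\widehat S_i$;
\item[(C)] the concentration event of Lemma~\ref{lem:theta_conc_app} for $\widehat\theta_i$;
\item[(D)] the count event of Lemma~\ref{lem:counts_floor_app} with $\gamma=\gamma_0$.
\end{itemize}
A union bound gives overall probability at least $1-\delta$. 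The logarithmic factors of the form $\log(4m(T+1)/\delta)$ are then absorbed into $C(1+\sqrt{\log(m(T+1)/\delta)})$.

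\textbf{Step 1 (interiority).} On (A), repeat the argument of Lemma~\ref{lem:rke_emp_interior} with $H$ in place of $F_{\mathrm{RKE}}$. Let $\alpha_{t+1}$ minimize $\widehat H_t$ and let $\alpha^\star$ minimize $H$. Then
\[
H(\alpha_{t+1})\le \widehat H_t(\alpha_{t+1})+\Delta_0^{(H)}/4\le \widehat H_t(\alpha^\star)+\Delta_0^{(H)}/4\le H(\alpha^\star)+\Delta_0^{(H)}/2.
\]
By Assumption~\ref{ass:rke_margin_H}, this forces $\min_i\alpha_{t+1,i}>\gamma_0$ for all $t\le T-1$. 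This uses only the ERM comparison and the margin, not convexity.

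\textbf{Step 2 (linear sampling).} Apply Lemma~\ref{lem:counts_floor_app} with $\gamma=\gamma_0$ to get
\[
n_{\min}(t)\ge M+\gamma_0 t-\sqrt{2t\log(4mT/\delta)}.
\]
We need $n_{\min}(t)\ge c\,t$ for some $c=c(\gamma_0)$, for example $c=\gamma_0/2$, and all $t\le T$. This holds once $M\ge C\log(mT/\delta)/\gamma_0$, since the deficit $\sqrt{2tL}-\gamma_0 t/2$ is maximized at value $2L/\gamma_0$ (with $L=\log(4mT/\delta)$). This requirement on $M$ is compatible with, and can be folded into, the ``$M$ sufficiently large'' hypothesis. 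Moreover $n_{\min}(t)\ge M\ge1$ handles small $t$.

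\textbf{Step 3 (uniform deviation at rate $t^{-1/2}$).} By Lemma~\ref{lem:rke_uniform_dev} on (B), $\sup_\alpha|\widehat F^{\mathrm{RKE}}_t-F_{\mathrm{RKE}}|\le C_{\mathrm{rke}}\,n_{\min}(t)^{-1/2}(1+\sqrt{\log(\cdot)})$. By Lemma~\ref{lem:G_uniform_app} on (C), $w\sup_\alpha|\widehat G_t-G|\le w(2n_{\min}(t))^{-1/2}\sqrt{\log(2m(T+1)/\delta)}$. Lemma~\ref{lem:ERM_H_app} adds these two bounds and converts the sum into instantaneous regret:
\[
H(\alpha_{t+1})-\min H\le \frac{C(1+w)}{\sqrt{\max(t,1)}}\Bigl(1+\sqrt{\log\tfrac{m(T+1)}{\delta}}\Bigr),
\]
where we used $n_{\min}(t)\ge \max(ct,M)$ with $M\ge1$, so the constant depends on $\gamma_0$ through $c$. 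For the first round ($t=0$, i.e.\ $\alpha_1$), $n_{\min}(0)=M\ge1$ gives the same bound with $t$ replaced by $1$. Summing $\sum_{t=0}^{T-1}\max(t,1)^{-1/2}\le 1+2\sqrt T\le 3\sqrt T$ yields the claim with $C_{\mathrm{RKE},w}$ depending on $(m,\gamma_0,w)$.

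\textbf{Main obstacle.} The only delicate point is Step 2: the count lemma requires $\alpha_{t,i}\ge\gamma_0$ at every round, which in turn needs event (A) for all $t$ simultaneously. The constant $c$ also must not depend on $T$, which is why the $\log(mT/\delta)$ overhead is pushed into the warm-start size $M$ rather than into $C_{\mathrm{RKE},w}$. Everything else is a direct recombination of Lemmas~\ref{lem:rke_uniform_dev}, \ref{lem:G_uniform_app} and \ref{lem:ERM_H_app}.
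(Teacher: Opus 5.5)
Your proposal is correct and follows essentially the paper's argument: margin-based interiority of every empirical minimizer, linear sampling via Lemma~\ref{lem:counts_floor_app}, splitting $\sup_\alpha|\widehat H_t-H|$ into the RKE deviation (Lemma~\ref{lem:rke_uniform_dev}) plus $w$ times the linear deviation (Lemma~\ref{lem:G_uniform_app}), then Lemma~\ref{lem:ERM_H_app} and $\sum_t t^{-1/2}\lesssim\sqrt{T}$. You also handle the first round more explicitly than the paper does.

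There are two small bookkeeping fixes. First, the hypothesis only gives (A) failure probability $\delta/3$, which exceeds $\delta/4$, so allot $\delta/3$ to (A) and split the remaining $2\delta/3$ among (B)--(D). Take (D) to be the Azuma event $N_i(t)\ge\sum_{s\le t}\alpha_{s,i}-\sqrt{2t\log(\cdot)}$, which needs no floor, and use the floor $\alpha_{s,i}\ge\gamma_0$ only on (A).

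Second, your extra condition $M\gtrsim\log(mT/\delta)/\gamma_0$ is not among the stated hypotheses (and the deficit $\sqrt{2tL}-\gamma_0 t/2$ actually peaks at $L/\gamma_0$, not $2L/\gamma_0$). You can drop this condition entirely. For $t\ge 8L/\gamma_0^2$ you have $n_{\min}(t)\ge\gamma_0 t/2$ regardless of $M$. Charge each of the at most $1+\min(T,8L/\gamma_0^2)$ earlier rounds the trivial regret $1+w$, which is valid since $F_{\mathrm{RKE}},G\in[0,1]$. Then use $\min(T,8L/\gamma_0^2)\le\sqrt{8LT}/\gamma_0$ to keep this contribution of order $C(\gamma_0,w)\bigl(1+\sqrt{\log(m(T+1)/\delta)}\bigr)\sqrt{T}$.
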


\begin{proof}
On the event $\sup_\alpha|\widehat H_t-H|\le \Delta_0^{(H)}/4$ for all $t\le T-1$,
the same argument as Lemma~\ref{lem:rke_emp_interior} (with $H$ in place of $F_{\mathrm{RKE}}$) implies
$\min_i\alpha_{t,i}\ge \gamma_0$ for all $t\le T$.
Then Lemma~\ref{lem:counts_floor_app} yields $n_{\min}(t)=\Omega(t)$ with high probability.

By Lemma~\ref{lem:ERM_H_app} (already in Appendix~\ref{app:mg_linear}),
\[
H(\alpha_t)-\min_\alpha H(\alpha)\le 2\sup_\alpha|\widehat H_{t-1}(\alpha)-H(\alpha)|.
\]
Moreover, by triangle inequality and linearity,
\[
\sup_\alpha|\widehat H_{t-1}(\alpha)-H(\alpha)|
\le
\sup_\alpha\Big|\widehat F^{\mathrm{RKE}}_{t-1}(\alpha)-F_{\mathrm{RKE}}(\alpha)\Big|
+
w\sup_\alpha\Big|\widehat G_{t-1}(\alpha)-G(\alpha)\Big|.
\]
The first term is bounded by Lemma~\ref{lem:rke_uniform_dev} and $n_{\min}(t-1)=\Omega(t-1)$.
The second term is bounded by Lemma~\ref{lem:G_uniform_app}.
Therefore, on the intersection event, for all $t\ge 2$,
\[
\sup_\alpha|\widehat H_{t-1}(\alpha)-H(\alpha)|
\le
\frac{C}{\sqrt{t-1}}\Bigl(1+\sqrt{\log\frac{m(T+1)}{\delta}}\Bigr).
\]
Summing the instantaneous regret over $t=2,\dots,T$ yields
\[
\sum_{t=2}^T (t-1)^{-1/2} \le 2\sqrt{T},
\]
which completes the $\mathcal{O}(\sqrt{T})$ bound (with the initial $t=1$ warm-start round absorbed into the constant).
A union bound yields an overall probability of at least $1-\delta$.
\end{proof}
 \section{Additional Numerical Results}
\paragraph{Effect of the UCB Coefficient.}
In Figure \ref{fig:FID_delta} For each dataset, we conducted two 
independent experiments corresponding to two distinct optimization objectives as \citep{rezaei2025more} suggests. 
In the first experiment, the objective is to \emph{minimize} the KD  metric. 
In the second experiment, the objective is to \emph{maximize} the RKE metric. As a performance upper bound, the \emph{Mixture Oracle} is introduces as a baseline. 
For a given objective, the oracle provides 
the corresponding optimal mixture weights $\alpha^\star$ in advance, and arms 
are sampled according to this fixed distribution. The optimal mixture 
$\alpha^\star$ is obtained by solving the quadratic program using a large number of samples from each arm.
We analyze the impact of the exploration coefficient $\delta_L$ in the UCB term 
for both objectives. Empirically, decreasing $\delta_L$ leads to faster 
convergence toward the corresponding Mixture Oracle. When $\delta_L = 0$, corresponding to the Mixture Greedy 
strategy, convergence is fastest. 

\subsection{Synthetic Settings}
\label{app:syn}

\paragraph{text-to-image setting.}
In Figure \ref{fig:vendidogs}, we constructed a controlled 
text-to-image generation scenario using Stable Diffusion XL (SDXL) \citep{podell2023sdxl}. We generated samples corresponding to five dog breeds:
\texttt{poodle}, \texttt{bulldog}, \texttt{german shepherd}, 
\texttt{golden retriever}, and \texttt{havanese}. 
Each breed-specific prompt defines one arm in our framework.
The objective in this experiment is to optimize a diversity metric, 
namely the Vendi Score~\citep{friedman2023vendi}. 
Rather than optimizing the Vendi Score directly, we optimize its logarithm, 
which is a convex function of the mixture weights using sing exponentiated gradient descent (EGD). We additionally report results for the \emph{One-Arm Greedy} algorithm, which selects at each round the arm with the highest current Vendi score, as well as an $\epsilon$-greedy baseline with $\epsilon = 0.1$.

In Figure~\ref{fig:birds}, we evaluate three text-to-image generators, Kandinsky~\cite{razzhigaev2023kandinsky}, PixArt-$\alpha$~\cite{chen2023pixartalpha}, and SDXL~\cite{sdxl}, using the prompt: ``Generate a red cartoony bird.'' We then apply the Mixture Greedy algorithm to adaptively select among these models in order to maximize the Vendi Score of the generated samples.

As in our previous experiments, the algorithm rapidly converges toward the optimal mixture, leading to a substantial increase in Vendi Score compared to any individual model. This further supports the benefit of mixture optimization for enhancing generative diversity.

\paragraph{text-to-text setting.}
In Figure~\ref{fig:llm}, we consider three large language models, Qwen~2\citep{qwen3}, Gemma~3 \citep{gemma3}, and Llama~3.2 \citep{llama}, prompted with: ``Write a short sentence about a vibrant city in the U.S.'' We then apply the Mixture Greedy algorithm to adaptively select among the models so as to maximize the Vendi Score of the aggregated outputs.

The resulting mixture achieves a higher Vendi Score than any individual model evaluated in isolation. This demonstrates that adaptively combining multiple LLMs can yield greater diversity, as measured by the Vendi Score, than relying on a single model alone.
\begin{figure*}[t]
        \centering
        \includegraphics[width=0.5\linewidth]{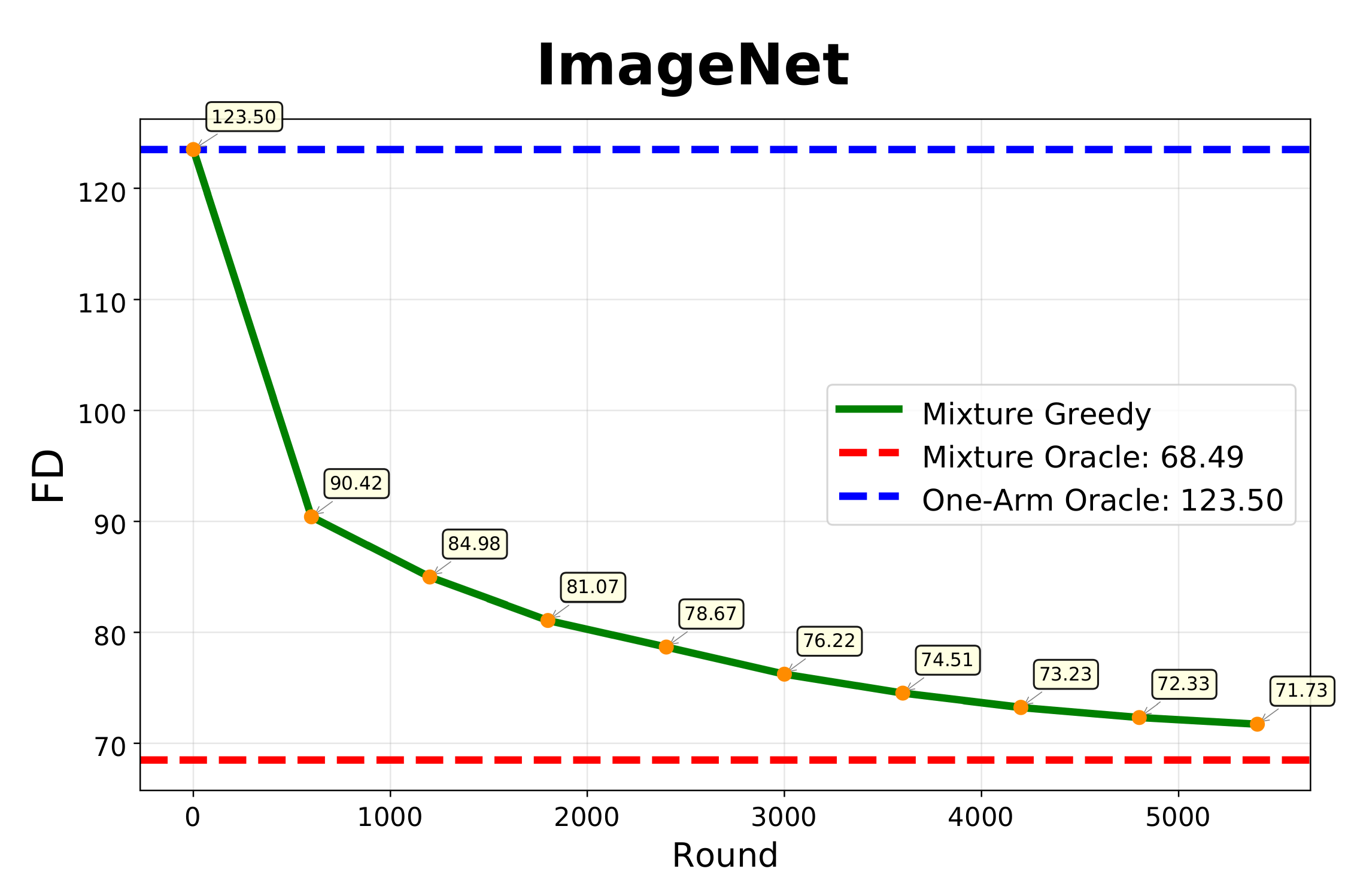}
       
    \caption{Convergence of FD in ImageNet dataset generative models.}
    \label{fig:in_fd}
\end{figure*}

\begin{figure*}[t]
        \centering
        \includegraphics[width=\linewidth]{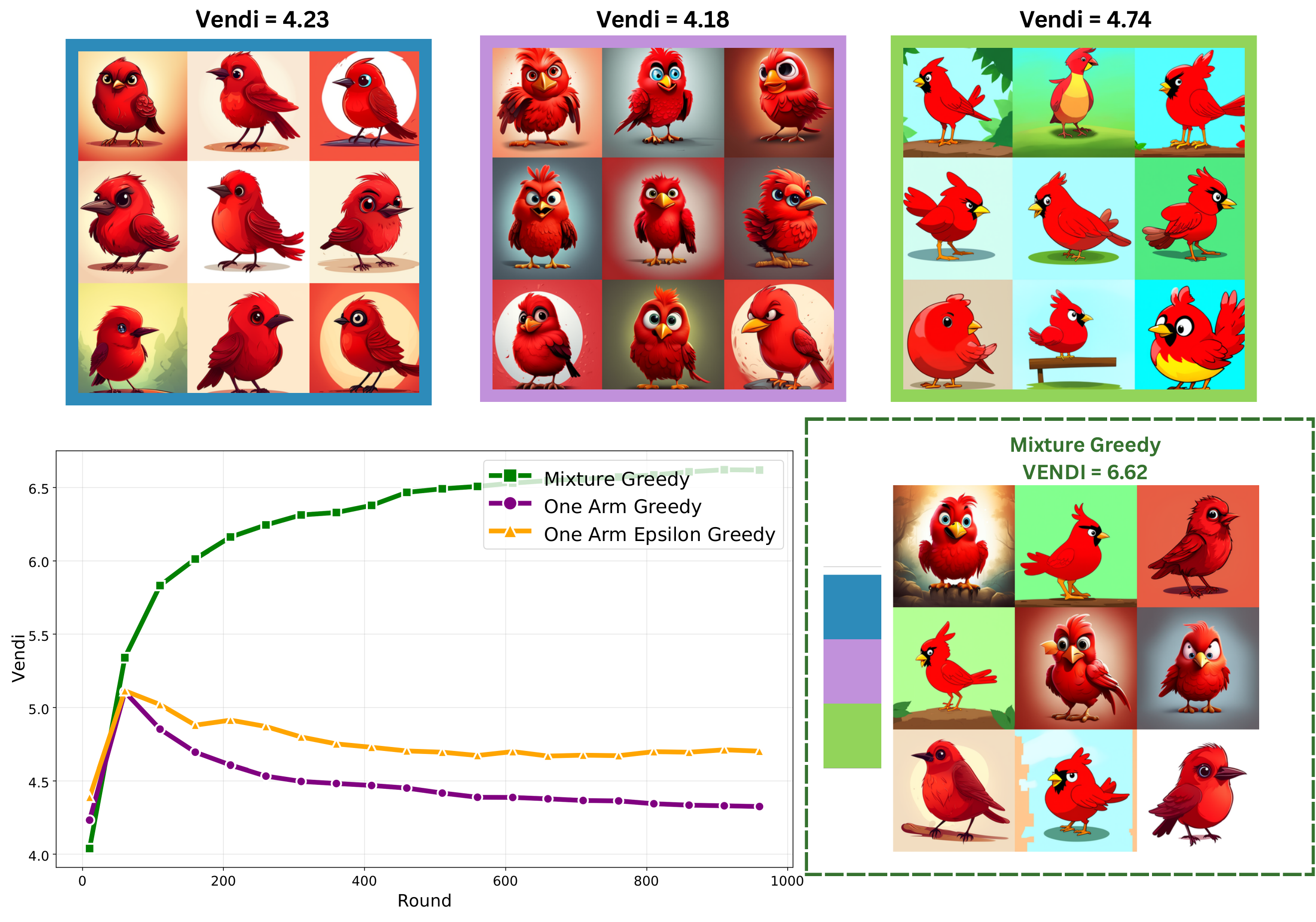}
    \caption{Application of Mixture Greedy on the generated red birds dataset to optimize Vendi Score.}
    \label{fig:birds}
\end{figure*}

\begin{figure*}[t]
        \centering
        \includegraphics[width=\linewidth]{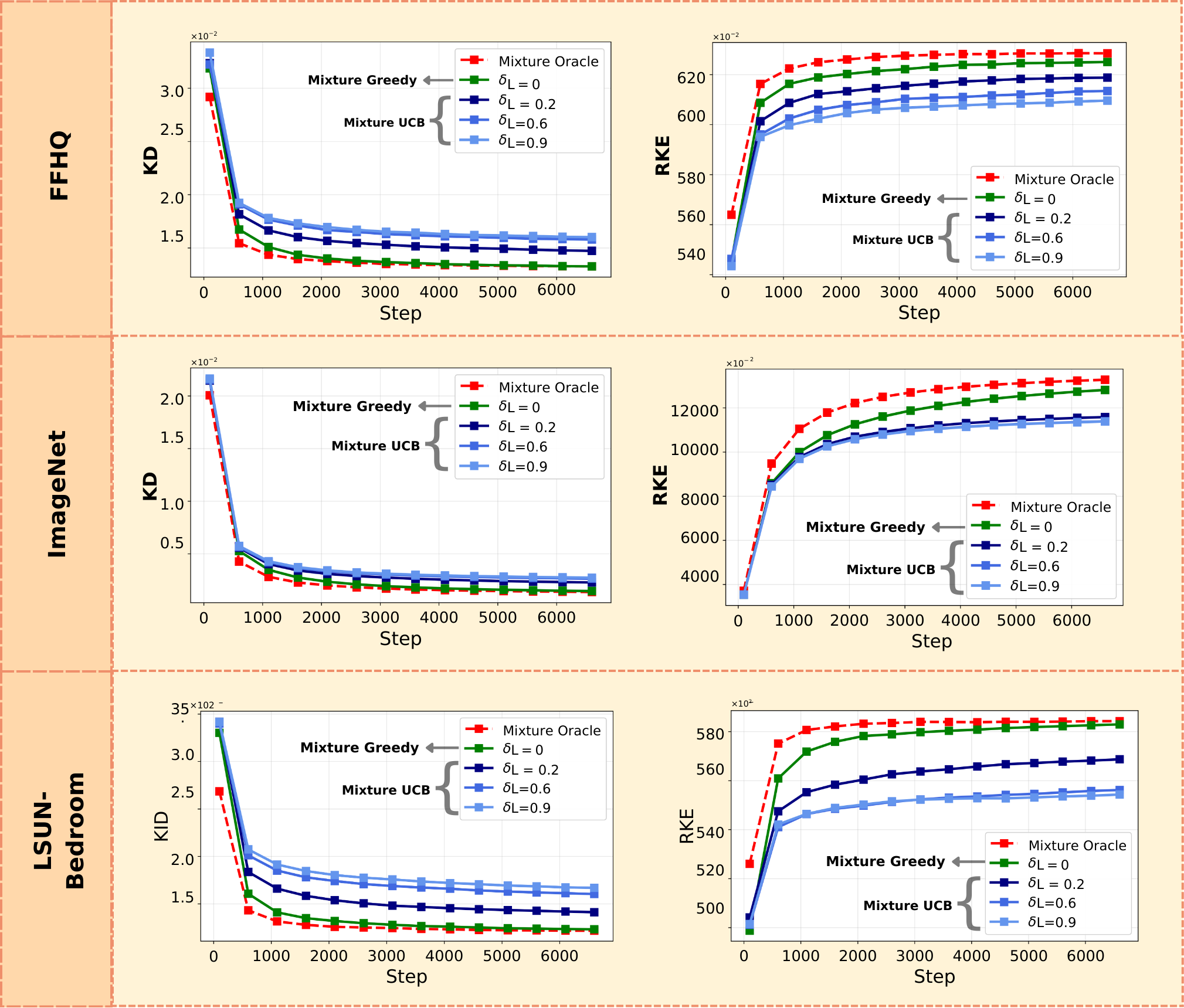}
       
    \caption{Comparison of KD and RKE convergence among different datasets.}
     \label{fig:FID_delta}
\end{figure*}

\begin{figure*}[t]

        \centering
        \includegraphics[width=0.94\linewidth]{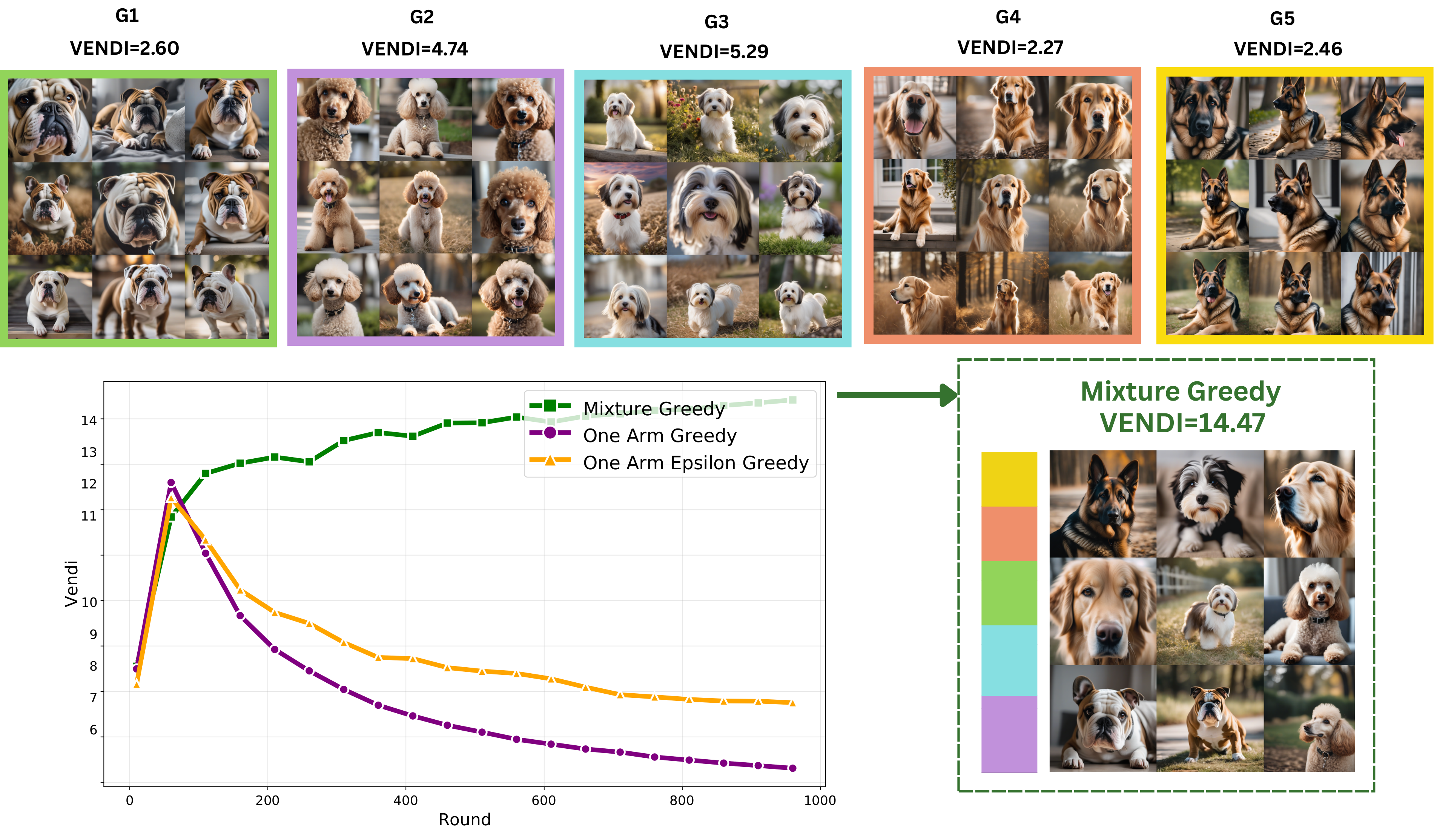}
    \caption{Application of Mixture Greedy on generated dog breeds dataset to optimize Vendi Score.}
     \label{fig:vendidogs}
\end{figure*}

\begin{figure*}[t]
        \centering
        \includegraphics[width=0.94\linewidth]{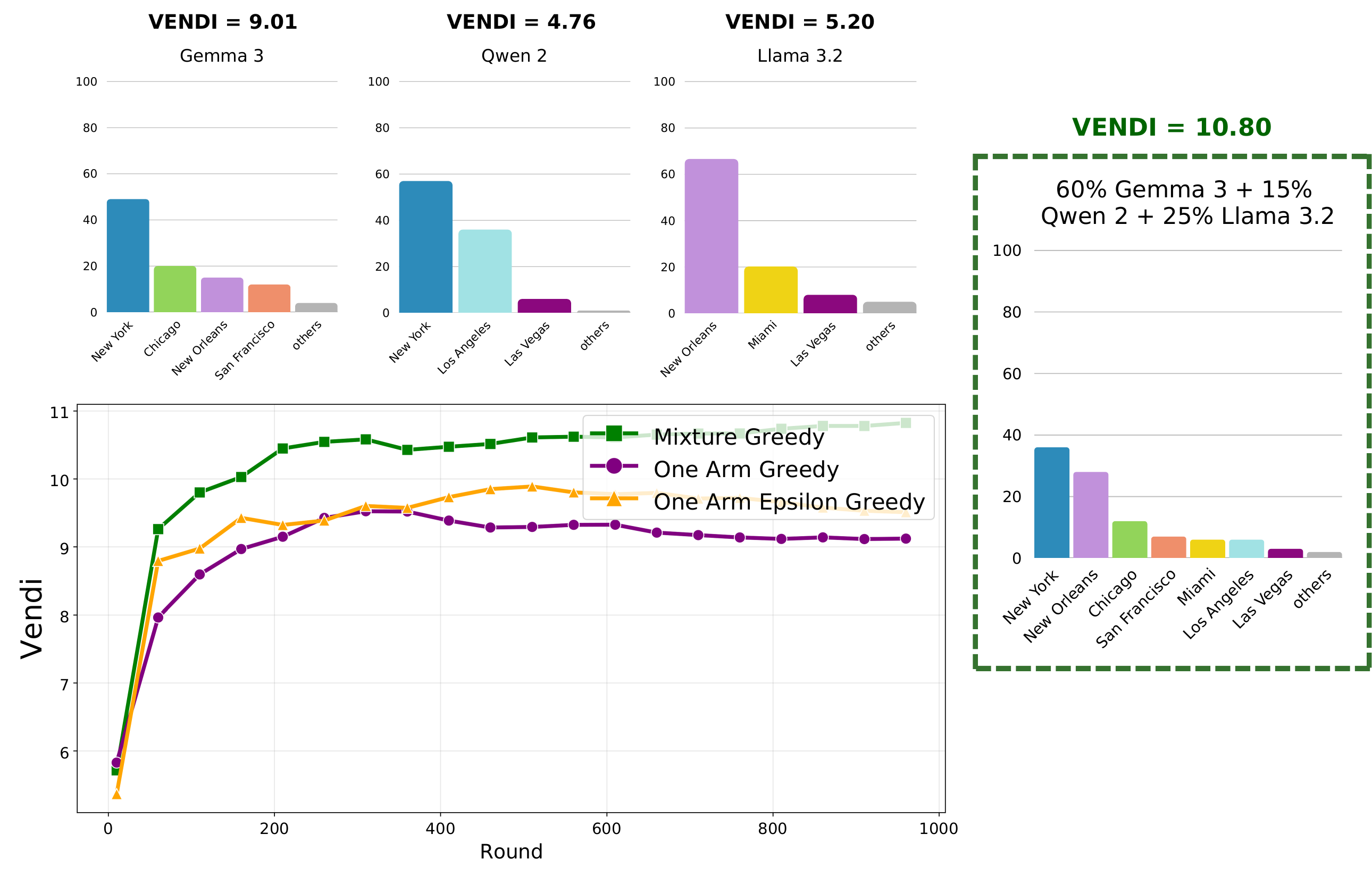}
       
    \caption{Application of Mixture Greedy on LLM generated texts to optimize Vendi Score.}
    \label{fig:llm}
\end{figure*}

\subsection{Ablation Studies}

\subsubsection{Number of Random Fourier Features and Cosine Kernel}
To evaluate the robustness of Vendi Score optimization with respect to kernel approximation, we use the Gaussian kernel with Random Fourier Features (RFF) at $R \in \{256,512,1024\}$, and additionally compare against the cosine kernel. As shown in Figure~\ref{fig:ab_nf}, the results are consistent across all settings. On both datasets, \textit{Mixture Greedy} consistently obtains the highest Vendi Score, outperforming \textit{One Arm Greedy} and \textit{One Arm Epsilon Greedy} for all values of $R$. Increasing $R$ changes the curves only slightly and does not affect the relative ranking of the methods. The same trend also holds for the cosine kernel. These results indicate that our method is robust to both the number of random Fourier features and the choice of kernel. Its advantage is preserved for low- and high-dimensional RFF approximations, as well as when using the cosine kernel instead of the Gaussian kernel.

\subsubsection{Kernel Bandwidth}
We evaluate three kernel bandwidths, $\sigma \in \{20,30,40\}$, to examine the sensitivity of RKE and KID to this hyperparameter. Figure~\ref{fig:ab_sigma} shows that the overall conclusions are consistent across all bandwidths. In all cases, RKE increases with training while KID decreases, indicating stable convergence behavior. Across the three bandwidths, the relative ordering of the methods is unchanged: \textit{Mixture Oracle} performs best, \textit{Mixture Greedy} is consistently close to the oracle, and \textit{Mixture UCB} performs worst. For RKE, all methods improve rapidly in the early stage and then plateau, while for KID they decrease sharply at first and then gradually stabilize. Although the absolute values of RKE and KID vary with $\sigma$, the same qualitative trend is preserved. This ablation suggests that our method is robust to kernel bandwidth choice. In particular, the strong performance of \textit{Mixture Greedy} relative to \textit{Mixture UCB}, and its small gap to \textit{Mixture Oracle}, holds uniformly for $\sigma=20$, $30$, and $40$.

\subsubsection{Feature Extractors}
To test the robustness of our results to the underlying representation, we compute RKE and KID using three different feature extractors: \textit{Inception} \citep{inception}, \textit{CLIP} \citep{clip}, and \textit{DINO-v2} \citep{dinov2}. Figure~\ref{fig:ab_feats} shows that the conclusions are consistent across all three feature spaces. In every case, RKE increases and KID decreases over training, indicating stable convergence. The relative ranking of the methods is also unchanged: \textit{Mixture Oracle} performs best, \textit{Mixture Greedy} remains very close to it, and \textit{Mixture UCB} performs worst. Although the absolute metric values vary across feature extractors, the same qualitative behavior is preserved. This demonstrates that our method is robust to the choice of feature representation, and that the strong performance of \textit{Mixture Greedy} is not tied to a particular feature extractor.

\section{Verification of Theoretical Assumptions}
\label{app:assumption_verification}

Our regret analysis relies on structural assumptions that characterize regimes in which Mixture-Greedy can obtain sufficient exploration from the geometry of the diversity-aware objective itself. In this section, we empirically verify the main assumptions across the datasets and generator collections used in our experiments. For each setting, we repeated the verification procedure over $50$ random subsamples and report the results as mean $\pm$ standard deviation when applicable.

\subsection{Population Innovation Condition}
\label{app:population_innovation}

Assumption~\ref{ass:vendi_innov_main} requires that each generator contributes a feature-space direction that is not fully explained by the remaining generators. This condition is used in our analysis of the log-Vendi objective, where such arm-specific innovation prevents the learned mixture from collapsing to a degenerate solution. We verify this condition using normalized DINOv2 embeddings for image data and SBERT embeddings for text data. For each generator $i$, we compute the innovation gap
\[
\delta_i
=
v_i^\top S_i v_i
-
\sum_{j \neq i} v_i^\top S_j v_i,
\]
where $S_i$ denotes the second-moment matrix of embedded samples from generator $i$, and $v_i$ is the corresponding innovation direction. A positive value of $\delta_i$ indicates that generator $i$ has a distinctive direction in the embedding space relative to the other generators. As shown in Table~\ref{tab:innovation_gap}, the innovation gaps are positive across all evaluated settings.

\begin{table}[t]
\centering
\small
\setlength{\tabcolsep}{10pt}
\renewcommand{\arraystretch}{1.12}
\caption{Empirical verification of the population innovation condition. We report the innovation gap $\delta_i$ for each generator. For image-generation benchmarks, the gaps are computed using normalized DINOv2 embeddings; for text settings, we use SBERT embeddings.}
\label{tab:innovation_gap}
\begin{tabular}{lll|lll}
\toprule
\multicolumn{3}{c|}{\textbf{Synthetic Settings}} &
\multicolumn{3}{c}{\textbf{Image-generation benchmarks}} \\
\midrule
Dataset & Generator & $\delta_i$ &
Dataset & Generator & $\delta_i$ \\
\midrule
\multirow{5}{*}{Dogs}
& Bulldog & $0.35$
& \multirow{5}{*}{FFHQ}
& Efficient-VDVAE & $0.008$ \\
& German Shepherd & $0.29$
& & InsGen & $0.004$ \\
& Poodle & $0.26$
& & StyleGAN-XL & $0.002$ \\
& Havanese & $0.19$
& & LDM & $0.001$ \\
& Golden Retriever & $0.18$
& & StyleNAT & $0.001$ \\
\midrule
\multirow{3}{*}{Red Birds}
& SD3 & $0.21$
& \multirow{4}{*}{ImageNet}
& RQ-Transformer & $0.005$ \\
& Kandinsky & $0.02$
& & StyleGAN-XL & $0.003$ \\
& PixArt & $0.02$
& & LDM & $0.002$ \\
& &
& & DiT-XL-2-guided & $0.001$ \\
\midrule
\multirow{3}{*}{City in the U.S.}
& Gemma & $0.02$
& \multirow{4}{*}{LSUN-Bedroom}
& Projected-GAN & $0.002$ \\
& Llama & $0.01$
& & StyleGAN & $0.002$ \\
& Qwen & $0.01$
& & Unleashing Transformers & $0.001$ \\
& &
& & iDDPM & $0.001$ \\
\bottomrule
\end{tabular}
\end{table}

\subsection{Population Interiority Condition}
\label{app:population_interiority}

Assumptions ~\ref{ass:fid_margin_main}, ~\ref{ass:rke_margin} requires that the population optimum lies in the interior of the simplex, with a positive margin separating the full-mixture optimum from boundary solutions that remove an active generator. Intuitively, this condition captures settings in which multiple generators are genuinely useful for optimizing the objective, so that eliminating any active generator worsens the population objective. We empirically verify this condition by comparing the full-mixture optimum with leave-one-out optima for every active generator. Specifically, for each active generator, we remove it from the mixture optimization problem and compute the objective gap relative to the full mixture. Positive gaps indicate that the full mixture is preferred over the corresponding boundary solution. As shown in Table~\ref{tab:population_interiority}, the full-mixture solutions assign nonzero mass to multiple generators, and removing any active generator leads to a positive objective gap, supporting the population interiority condition for both FD and RKE.

\begin{table*}[t]
\centering
\small
\setlength{\tabcolsep}{5pt}
\renewcommand{\arraystretch}{1.12}
\caption{Empirical verification of the population interiority condition. We report the weight of each active generator in the full mixture and the leave-one-out gap relative to the full-mixture optimum. Left: FD objective. Right: RKE objective.}
\label{tab:population_interiority}
\begin{tabular}{llcc|llcc}
\toprule
\multicolumn{4}{c|}{\textbf{FD objective}} &
\multicolumn{4}{c}{\textbf{RKE objective}} \\
\midrule
Dataset & Generator & Weight & Gap $\widehat{\Delta}_0$ &
Dataset & Generator & Weight & Gap $\widehat{\Delta}_0$ \\
\midrule
\multirow{3}{*}{FFHQ256}
& LDM & $0.40 \pm 0.02$ & $12.54 \pm 0.85$
& \multirow{5}{*}{Dogs}
& G1 & $0.24 \pm 0.01$ & $0.042 \pm 0.002$ \\
& StyleGAN-XL & $0.40 \pm 0.02$ & $14.15 \pm 1.12$
& & G2 & $0.24 \pm 0.01$ & $0.041 \pm 0.002$ \\
& StyleNAT & $0.20 \pm 0.02$ & $8.09 \pm 0.64$
& & G3 & $0.18 \pm 0.01$ & $0.029 \pm 0.002$ \\
& & &
& & G4 & $0.17 \pm 0.01$ & $0.026 \pm 0.001$ \\
& & &
& & G5 & $0.16 \pm 0.01$ & $0.025 \pm 0.001$ \\
\midrule
\multirow{4}{*}{ImageNet256}
& DiT-XL-2-guided & $0.48 \pm 0.02$ & $44.71 \pm 2.45$
& \multirow{3}{*}{Red Birds}
& SD3 & $0.38 \pm 0.02$ & $0.048 \pm 0.003$ \\
& LDM & $0.21 \pm 0.01$ & $19.81 \pm 1.30$
& & Kandinsky & $0.33 \pm 0.01$ & $0.042 \pm 0.002$ \\
& StyleGAN-XL & $0.19 \pm 0.01$ & $8.50 \pm 0.72$
& & PixArt & $0.29 \pm 0.01$ & $0.028 \pm 0.002$ \\
& RQ-Transformer & $0.126 \pm 0.01$ & $4.15 \pm 0.45$
& & & & \\
\midrule
\multirow{3}{*}{LSUN-Bedroom}
& iDDPM & $0.57 \pm 0.03$ & $46.71 \pm 2.80$
& \multirow{3}{*}{City in the U.S.}
& Gemma & $0.59 \pm 0.02$ & $0.058 \pm 0.003$ \\
& StyleGAN & $0.39 \pm 0.02$ & $29.91 \pm 1.75$
& & Llama & $0.26 \pm 0.01$ & $0.021 \pm 0.001$ \\
& Projected-GAN & $0.04 \pm 0.01$ & $0.63 \pm 0.12$
& & Qwen & $0.16 \pm 0.01$ & $0.005 \pm 0.001$ \\
\bottomrule
\end{tabular}
\end{table*}

\section{Implementation Details}
\textbf{Optimization Details.} At each round, Mixture-Greedy requires solving the empirical mixture optimization problem over the simplex. For quadratic objectives (KID and RKE), we solve the optimization exactly using CVXPY. For the FD and log-Vendi objectives, the optimization problem is smooth over the probability simplex but does not admit the same quadratic formulation. In these cases, we optimize the empirical objective using exponentiated-gradient descent (EGD), which naturally preserves the simplex constraints.
To ensure that the reported performance reflects the behavior of Mixture-Greedy rather than the numerical accuracy of the optimization routine, we run EGD for 1000 iterations at every round. This conservative optimization budget was chosen to ensure that the optimization converges numerically before the next sample is collected.

\begin{figure*}
        \centering
        \includegraphics[width=0.94\linewidth]{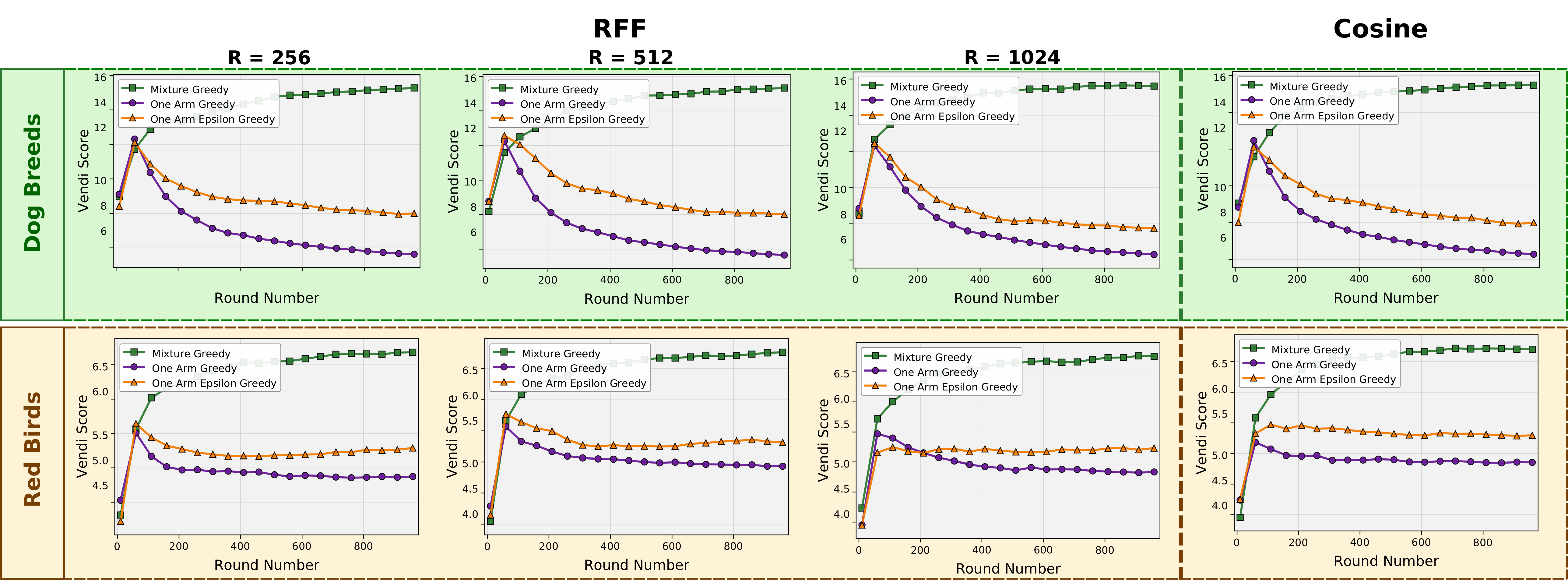}
    \caption{Vendi score over rounds for different using RFF (R=256/512/1024) and cosine kernel on two generated datasets.}
    \label{fig:ab_nf}
\end{figure*}

\begin{figure*}
        \centering
        \includegraphics[width=0.94\linewidth]{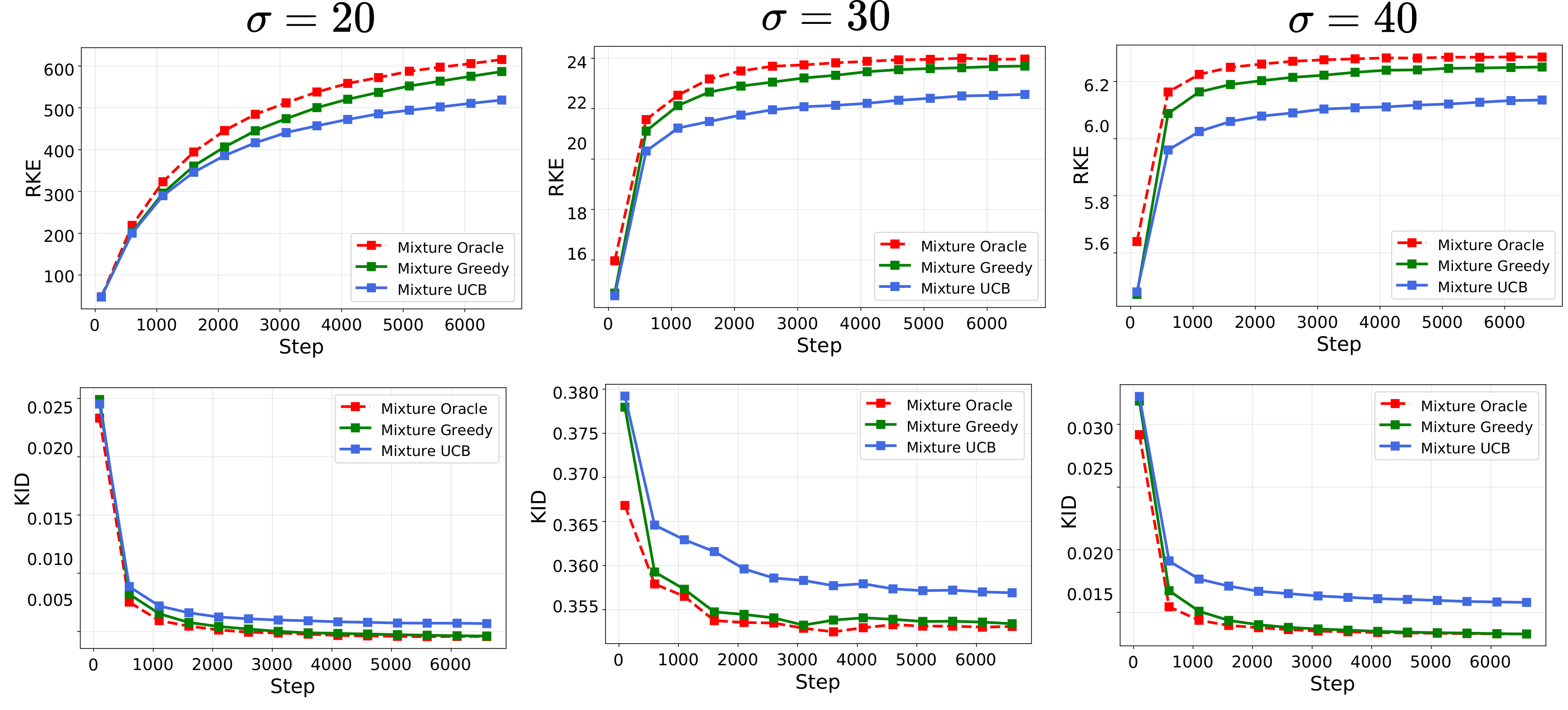}
    \caption{Convergence of KD and RKE for generative models trained on FFHQ dataset using three bandwidths}
    \label{fig:ab_sigma}
\end{figure*}

\begin{figure*}
        \centering
        \includegraphics[width=0.94\linewidth]{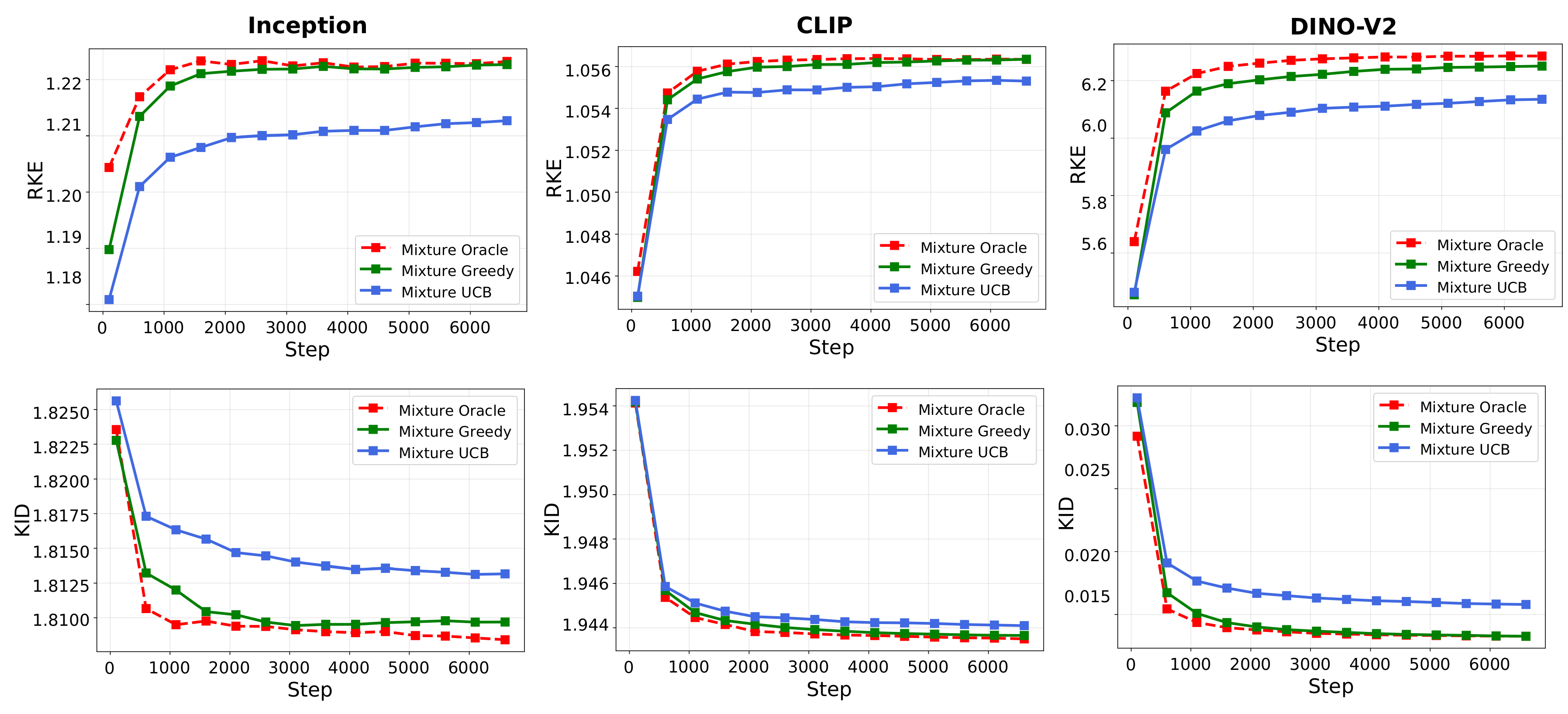}
    \caption{Convergence of RKE and KD on FFHQ generative models, using different feature extractors.}
    \label{fig:ab_feats}
\end{figure*}

 \end{appendices}

\end{document}